\DeclarePairedDelimiter\ceil{\lceil}{\rceil}
\DeclarePairedDelimiter\floor{\lfloor}{\rfloor}
\newcommand{\overbar}[1]{\mkern 1.5mu\overline{\mkern-1.5mu#1\mkern-1.5mu}\mkern 1.5mu}
\def\Dice{{\rm Dice}}
\def\mDice{{\rm mDice}}
\def\mIoU{{\rm mIoU}}
\def\KL{{\rm KL}}
\def\IoU{{\rm IoU}}
\def\argmax{\mathop{\rm argmax}}
\def\argmin{\mathop{\rm argmin}}
\def\Var{\mathop{\rm Var}}
\DeclareMathSymbol{\sm}{\mathbin}{AMSa}{"39}
\def\independenT#1#2{\mathrel{\setbox0\hbox{$#1#2$}%
\copy0\kern-\wd0\mkern4mu\box0}}
\newcommand{\gray}[1]{\textcolor{gray}{#1}}
\DeclareMathAlphabet{\mathcal}{OMS}{cmsy}{m}{n}
\newcommand*\bigcdot{\mathpalette\bigcdot@{.7}}
\newcommand*\bigcdot@[2]{\mathbin{\vcenter{\hbox{\scalebox{#2}{$\m@th#1\bullet$}}}}}
\newcommand{\mb}[1]{\mathbf{\bm{#1}}}
\begin{document}

\jmlrheading{24}{2023}{1-\pageref{LastPage}}{6/22; Revised 5/23}{5/23}{22-0712}{Ben Dai and Chunlin Li}
\ShortHeadings{RankSEG: A Consistent Ranking-based Framework for Segmentation}{Dai and Li}

\title{RankSEG: A Consistent Ranking-based Framework for Segmentation}

\author{\name Ben Dai 
       \email bendai@cuhk.edu.hk \\
       \addr Department of Statistics\\
        The Chinese University of Hong Kong\\
        Hong Kong SAR
      \AND
      \name Chunlin Li 
      \email li000007@umn.edu \\ 
      \addr School of Statistics \\ 
      University of Minnesota \\
      MN 55455 USA}


\editor{Christoph Lampert}

\maketitle

\begin{abstract}
Segmentation has emerged as a fundamental field of computer vision and natural language processing, which assigns a label to every pixel/feature to extract regions of interest from an image/text. To evaluate the performance of segmentation, the Dice and IoU metrics are used to measure the degree of overlap between the ground truth and the predicted segmentation. 
In this paper, we establish a theoretical foundation of segmentation with respect to the Dice/IoU metrics, including the Bayes rule and Dice-/IoU-calibration, analogous to classification-calibration or Fisher consistency in classification. 
We prove that the existing thresholding-based framework with most operating losses are not consistent with respect to the Dice/IoU metrics, and thus may lead to a suboptimal solution. 
To address this pitfall, we propose a novel consistent ranking-based framework, namely \textit{RankDice}/\textit{RankIoU}, inspired by plug-in rules of the Bayes segmentation rule. 
Three numerical algorithms with GPU parallel execution are developed to implement the proposed framework in large-scale and high-dimensional segmentation. 
We study statistical properties of the proposed framework. We show it is Dice-/IoU-calibrated, and its excess risk bounds and the rate of convergence are also provided.
The numerical effectiveness of \textit{RankDice/mRankDice} is demonstrated in various simulated examples and \textit{Fine-annotated CityScapes}, \textit{Pascal VOC} and \textit{Kvasir-SEG} datasets with state-of-the-art deep learning architectures. Python module and source code are available on \textsc{Github} at \url{https://github.com/statmlben/rankseg}.
\end{abstract}

\begin{keywords}
Segmentation, Bayes rule, ranking, Dice-calibrated, excess risk bounds, Poisson-binomial distribution, normal approximation, GPU computing
\end{keywords}

\section{Introduction}
Segmentation is one of the key tasks in the field of computer vision and natural language processing, which groups together similar pixels/features of an input that belong to the same class \citep{ronneberger2015u,badrinarayanan2017segnet}. It has become an essential part of image and text understanding with applications in autonomous vehicles \citep{assidiq2008real}, medical image diagnostics \citep{wang2018deepigeos}, face/fingerprint recognition \citep{xin2018multimodal}, and video action localization \citep{shou2017cdc}. 

The primary aim of segmentation is to label each foreground feature/pixel of an input with a corresponding class. Specifically, for a feature vector or an image $\mb{X} \in \mathbb{R}^d$, a \textit{segmentation function} $\pmb{\delta}: \mathbb{R}^d \to \{0,1\}^d$ yields a predicted segmentation $\pmb{\delta}(\mb{X}) = ( \delta_1(\mb{X}), \cdots, \delta_d(\mb{X}) )^\intercal$, where $\delta_j(\mb{X})$ represents the predicted segmentation for the $j$-th feature $X_j$, and $I(\pmb{\delta}(\mb{X})) = \{j: \delta_j(\mb{X}) = 1; \text{ for } j = 1, \cdots, d \}$ is the index set of the segmented features of $\mb{X}$ provided by $\pmb{\delta}$. Correspondingly, $\mb{Y} \in \{0,1\}^d$ is a feature-wise label of a ground truth segmentation, where $Y_j = 1$ indicates that the $j$-th feature/pixel $X_j$ is segmented, and $I(\mb{Y}) = \big\{ j: \mb{Y}_j = 1; \text{ for } j = 1, \cdots, d \big\}$ is the index set of the ground-truth features. 

To access the performance for a segmentation function $\pmb{\delta}$, the Dice and IoU metrics are introduced and widely used in the literature \citep{milletari2016v}, both of which measure the overlap between the ground truth and the predicted segmentation:
\begin{align}
{\Dice}_\gamma(\pmb{\delta}) & = \mathbb{E} \Big( \frac{2 \big| I(\mb{Y}) \cap I(\pmb{\delta}(\mb{X})) \big| + \gamma }{ | I(\mb{Y}) | + | I(\pmb{\delta}(\mb{X})) | + \gamma } \Big) = \mathbb{E} \Big( \frac{2 \mb{Y}^\intercal \pmb{\delta}(\mb{X}) + \gamma }{ \| \mb{Y} \|_1 + \| \pmb{\delta}(\mb{X}) \|_1 + \gamma } \Big), \nonumber \\
{\IoU}_\gamma(\pmb{\delta}) & = \mathbb{E} \Big( \frac{\big| I(\mb{Y}) \cap I({\pmb{\delta}}(\mb{X})) \big| + \gamma }{ \big| I(\mb{Y}) \cup I({\pmb{\delta}}(\mb{X})) \big| + \gamma } \Big) = \mathbb{E} \Big( \frac{ \mb{Y}^\intercal \pmb{\delta}(\mb{X}) + \gamma }{ \| \mb{Y} \|_1 + \| \pmb{\delta}(\mb{X}) \|_1 - \mb{Y}^\intercal \pmb{\delta}(\mb{X}) + \gamma } \Big), 
\label{eqn:dice_loss}
\end{align}
where $|\cdot|$ is the cardinality of a set, and $\gamma \geq 0$ is a smoothing parameter. When $\gamma = 0$, $\Dice_\gamma(\pmb{\delta}) = \mathbb{E}\big( 2\text{TP} / (2\text{TP} + \text{FP} + \text{FN}) \big)$, $\IoU_\gamma(\pmb{\delta}) = \mathbb{E}\big( \text{TP} / ( \text{TP} + \text{FP} + \text{FN}) \big)$ where TP is the true positive, FP is the false positive, and FN is the false negative. 
Both metrics are similar and will be treated in a unified fashion; however, as to be seen in the sequel, searching for the optimal segmentation function with respect to the IoU metric may require extra computation than its Dice counterpart. Thus, for clarity of presentation, we first focus on the Dice metric and postpone the discussion on the relationships between the Dice and IoU metrics in Section \ref{sec:IoU}.

The recent success of fully convolutional networks has enabled significant progress in segmentation. In literature, the mainstream of recent works devoted to designing and developing neural network architectures under different segmentation scenarios, including FCN \citep{long2015fully}, U-Net \citep{ronneberger2015u}, DeepLab \citep{chen2018encoder}, and PSPNet \citep{zhao2017pyramid}. 
Despite their remarkable performance, most existing models primarily focus on predicting segmentation using a classification framework, without considering the inherent disparities between classification and segmentation (as discussed in Section \ref{sec:existing_frameworks}). 
We find this framework leading to an inconsistent solution and suboptimal performance with respect to the Dice/IoU metrics, and we address this pitfall by developing a novel consistent ranking-based framework, namely \textit{RankSEG} (\textit{RankDice} to the Dice metric and \textit{RankIoU} to the IoU metric), to improve the segmentation performance.

\subsection{Existing methods}
\label{sec:existing_frameworks}
Most existing segmentation methods are developed under a threshold-based framework with two types of loss functions.
\begin{figure}[h]
    \centering
    \includegraphics[scale=0.45]{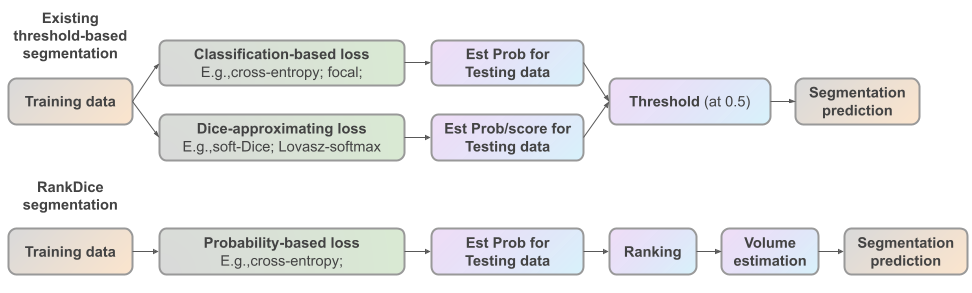}
    \caption{The existing and the proposed (\textit{RankDice}) frameworks for segmentation. The upper panel is the existing \textit{threshold-based segmentation} framework, and the lower panel is the proposed \textit{RankDice} framework.}
    \label{fig:framework}
\end{figure}

As indicated in Figure \ref{fig:framework}, the existing threshold-based segmentation framework, inspired by binary classification, provides a predicted segmentation via a two-stage procedure: (i) estimating a decision function or a probability function based on a loss; (ii) predicting feature-wise labels by thresholding the estimated decision function or probabilities. Specifically, given a training dataset $(\mb{x}_i, \mb{y}_i)_{i=1}^n$, the prediction provided by the threshold-based framework for a new instance $\mb{x}$ can be formulated as:
\begin{equation}
        \label{eqn:existing_framework}
        \widehat{\mb{q}} = \argmin_{\mb{q} \in \mathcal{Q}} \frac{1}{n} \sum_{i=1}^n l\big( \mb{y}_i, \mb{q}(\mb{x}_i) \big) + \lambda \| \mb{q} \|^2,  \qquad \widehat{\pmb \delta}(\mb{x}) = \mb{1}( \widehat{\mb{q}}(\mb{x}) \geq 0.5 ),
\end{equation}
where $l(\cdot, \cdot)$ is an operating loss, $\mb{q}: \mathbb{R}^d \to [0,1]^d$ is a candidate probability function with $q_j$ being the candidate probability of the $j$-th pixel, $\mathcal{Q}$ is a class of candidate probability functions, $\| \mb{q} \|$ is a regularization term, $\lambda>0$ is a tuning parameter to balance the overfitting and underfitting, and $\mb{1}(\cdot)$ is an indicator function. 
For ease of presentation, $q_j(\mb{x})$ is specified as a probability function and a predicted segmentation is produced by thresholding at 0.5, yet it can be equally extended to a general decision function. 
For example, we may formulate $q_j(\mb{x})$ as a decision function with range in $\mathbb{R}$, and the prediction is produced by thresholding at $0$, analogous to SVMs in classification \citep{cortes1995support}.
Next, under the framework \eqref{eqn:existing_framework}, two different types of operating loss functions are considered, namely the \textit{classification-based losses} and the \textit{Dice-approximating losses}.

\noindent \textbf{Classification-based losses} completely characterize segmentation as a classification problem, with examples such as the cross-entropy (CE; \cite{cox1958regression}) and the focal loss (Focal; \citet{lin2017focal}):
\begin{align}
    \label{eqn:classification_loss}
    & (\text{CE}) & l_{\text{CE}} \big(\mb{y}, \mb{q}(\mb{x}) \big) & = - \sum_{j=1}^d \Big( y_{j} \log \big( \mb{q}_j(\mb{x}) \big) + (1 - y_{j}) \log \big( 1 - \mb{q}_j(\mb{x}) \big) \Big), \\
    & (\text{Focal}) & l_{\text{focal}} \big(\mb{y}, \mb{q}(\mb{x}) \big) & = - \sum_{j=1}^d \Big( y_{j} (1 - \mb{q}_j(\mb{x}))^\vartheta \log \big( \mb{q}_j(\mb{x}) \big) + (1 - y_{j}) \mb{q}^\vartheta_j(\mb{x}) \log \big( 1 - \mb{q}_j(\mb{x}) \big) \Big), \nonumber
\end{align}
where $\vartheta \geq 0$ is a hyperparameter for the focal loss \citep{lin2017focal}.
Other margin-based losses such as the hinge loss, in principle, can be included as classification-based losses with a decision function ranged in $\mathbb{R}$ thresholding at 0, although they are less frequently used in a multiclass problem \citep{tewari2007consistency}. Therefore, we focus on the probability-based classification loss in the sequel.


\noindent \textbf{Dice-approximating losses} aim to approximate the Dice/IoU metric and conduct a direct optimization. Typical examples are the soft-Dice \citep{sudre2017generalised} and the Lovasz-softmax loss \citep{berman2018lovasz}:
\begin{align}
    & (\text{Soft-Dice}) & l_{\text{SoftD}} \big(\mb{y}, \mb{q}(\mb{x}) \big) & = 1 - \frac{ 2\mb{y}^\intercal \mb{q}(\mb{x}) }{ \| \mb{y} \|_1 + \| \mb{q}(\mb{x}) \|_1 },  \nonumber \\
    & (\text{Lovasz-softmax}) & l_{\text{l-softmax}} \big(\mb{y}, \mb{q}(\mb{x}) \big) & = \overbar{V}\big( \mb{y} \circ (1 - \mb{q}(\mb{x})) + (1 - \mb{y}) \circ \mb{q}(\mb{x}) \big),  \nonumber
\end{align}
where $\overbar{V}(\cdot)$ is the Lovasz extension of the mis-IoU error \citep{berman2018lovasz}, and $\circ$ is the element-wise product. Specifically, the soft-Dice loss replaces the binary segmentation indicator $\pmb{\delta}(\mb{x}) \in \{0,1\}^d$ in the Dice metric by a candidate probability function $\mb{q}(\mb{x}) \in [0,1]^d$ to make the computation feasible. The Lovasz-softmax directly takes a convex extension of IoU based on a softmax transformation.
Moreover, other losses including the Tversky loss \citep{salehi2017tversky}, the Lovasz-hinge loss \citep{berman2018lovasz}, and the log-Cosh Dice loss \citep{jadon2021semsegloss} can be also categorized as Dice-approximating losses. 

The threshold-based framework \eqref{eqn:existing_framework} with a classification-based loss or a Dice-approximating loss is a commonly used approach for segmentation. Although encouraging performance is delivered, we show that the minimizer from \eqref{eqn:existing_framework} (based on the cross-entropy and the focal loss) is \textbf{inconsistent} (or suboptimal) to the Dice metric, see Lemma \ref{lem:Dice-calibrated}. For Lovasz convex losses, it is still unclear if they are able to yield an optimal segmentation (convex closure is usually not enough to ensure the consistency \citep{bartlett2006convexity}). Moreover, in practice, only a small number of pixel predictions are taken into account in one stochastic gradient step. Therefore, the Lovasz-softmax loss cannot directly optimize the segmentation metric (see Section 3.1 in \cite{berman2018lovasz}). 

Another line of work \citep{bao2020calibrated,nordstrom2020calibrated,lipton2014optimal} has been centered on a linear-fractional approximation of the Dice and IoU metrics which are not decomposable per instance. 
In particular, \cite{bao2020calibrated} and \cite{nordstrom2020calibrated} proposed to approximate the utility by tractable surrogate functions with a sample-splitting procedure and showed that their methods are consistent in optimizing the target utility. Yet, splitting the sample may undermine the efficiency. \cite{lipton2014optimal} indicated that the optimal threshold maximizing the $F_1$ metric is equal to half of the optimal $F_1$ value. However, their definitions of Dice and IoU metrics may overlook small instances, which is undesirable in many applications; see Appendix \ref{sec:E-Dice} for technical discussion.

To summarize, the current frameworks with existing losses may either yield a suboptimal solution or suffer from an inappropriate target metric, demanding efforts to further improve the performance, robustness and sustainability of the existing segmentation framework.

\subsection{Our contribution}
In this paper, we propose a novel Dice-calibrated ranking-based segmentation framework, namely \textit{RankDice}, to address the inconsistency of the existing framework. \textit{RankDice} is primarily inspired by the Bayes rule of Dice-segmentation. 
We summarize our major contribution as follows: 

\begin{enumerate}
    \item To our best knowledge, the proposed ranking-based segmentation framework \textit{RankDice}, is the first consistent segmentation framework with respect to the Dice metric (Dice-calibrated).
    \item Three numerical algorithms with GPU parallel execution are developed to implement the proposed framework in large-scale and high-dimensional segmentation.
    \item We establish a theoretical foundation of segmentation with respect to the Dice metric, such as the Bayes rule and Dice-calibration. Moreover, we present Dice-calibrated consistency (Lemma \ref{lem:RankDice-calibrated}) and a convergence rate of the excess risk (Theorem \ref{thm:risk_bound}) for the proposed \textit{RankDice} framework, and indicate inconsistent results for the existing methods (Lemma \ref{lem:Dice-calibrated}).
    \item Our experiments in three simulated examples and three real datasets (CityScapes dataset, Pascal VOC 2021 dataset, and Kvasir-SEG dataset) suggest that {the improvement of \textit{RankDice} over the existing framework is significant for various loss functions and network architectures (see Tables \ref{tab:sim}-\ref{tab:VOC_binary}).}
\end{enumerate}
It is worth noting that the results are equally applicable to the proposed \textit{RankIoU} framework in terms of the IoU metric.

\section{RankDice}
It is reasonable to assume that all information on a feature-wise label is solely based on input features, that is, $Y_i \perp Y_j | \mb{X}$ for any $i \neq j$. In Appendix \ref{sec:dep}, we provide a probabilistic perspective to suggest the necessity of this assumption in segmentation tasks. Without loss of generality, we further assume that $p_j(\mb{X}) := \mathbb{P}(Y_j = 1|\mb{X})$ are distinct for $j=1, \cdots, d$ with probability one. 

\subsection{Bayes segmentation rule}
To begin with, we call segmentation with respect to the Dice metric as ``Dice-segmentation''. Then, we discuss Dice-segmentation at the population level, and present its Bayes (optimal) segmentation rule in Theorem \ref{thm:Dice_bayes} akin to the Bayes classifier for classification.

\begin{theorem}[The Bayes rule for Dice-segmentation]
\label{thm:Dice_bayes}
A segmentation rule $\pmb{\delta}^*$ is a global maximizer of $\Dice_\gamma(\pmb{\delta})$ if and only if it satisfies that
\[ \delta_j^*(\mb{x}) = 
   \begin{cases} 
      1 & \text{if } p_j(\mb{x}) \text{ ranks top }\tau^*(\mb{x}), \\
      0 & \text{otherwise}.
   \end{cases}
\]
The optimal volume (the optimum total number of segmented features) $\tau^*(\mb{x})$ is given as
\begin{equation}
    \label{eqn:vol_est_true_pro}
    \tau^*(\mb{x}) = \argmax_{\tau \in \{0, 1, \cdots, d\} } \ \Big( \sum_{j \in J_\tau(\mb{x}) } \sum_{l=0}^{d-1} \frac{2}{\tau + l + \gamma + 1}  p_j(\mb{x}) \mathbb{P} \big( \Gamma_{\sm j}(\mb{x}) = l \big) + \sum_{l=0}^d \frac{\gamma}{\tau + l + \gamma} \mathbb{P}\big( \Gamma(\mb{x}) = l \big) \Big),
\end{equation}
where $J_\tau(\mb{x}) = \big \{ j : \sum_{j'=1}^d \mb{1} \big( p_{j'}(\mb{x}) \geq p_j(\mb{x}) \big) \leq \tau \big\}$ is the index set of the $\tau$-largest conditional probabilities with $J_0(\mb{x}) = \emptyset$, $\Gamma(\mb{x}) = \sum_{j=1}^d {B}_{j}(\mb{x})$, and ${\Gamma}_{\sm j}(\mb{x}) = \sum_{j' \neq j} {B}_{j'}(\mb{x})$ are Poisson-binomial random variables, and ${B}_j(\mb{x})$ is a Bernoulli random variable with the success probability $p_{j}(\mb{x})$. See the definition of the Poisson-binomial distribution in Appendix \ref{sec:PBD}.
\end{theorem}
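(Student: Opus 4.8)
The plan is to exploit the fact that $\Dice_\gamma(\pmb{\delta}) = \mathbb{E}_{\mb{X}}\big[ \mathbb{E}(\,\cdot \mid \mb{X}) \big]$ is an outer expectation over $\mb{X}$, so that $\pmb{\delta}(\mb{x})$ may be optimized separately for each $\mb{x}$. Fixing $\mb{x}$ and writing $p_j = p_j(\mb{x})$ and $S = I(\pmb{\delta}(\mb{x}))$ with $|S| = \tau$, I note that $\mb{Y}^\intercal \pmb{\delta}(\mb{x}) = \sum_{j \in S} Y_j$, $\|\mb{Y}\|_1 = \Gamma(\mb{x})$, and $\|\pmb{\delta}(\mb{x})\|_1 = \tau$, so the conditional Dice
\[
 g(S) := \mathbb{E}\Big( \frac{2\sum_{j \in S} Y_j + \gamma}{\Gamma(\mb{x}) + \tau + \gamma} \,\Big|\, \mb{X} = \mb{x} \Big)
\]
depends on $\pmb{\delta}(\mb{x})$ only through the index set $S$. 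The problem thus splits into two nested optimizations: first fix the volume $\tau$ and find the best set of that size, then optimize over $\tau$.

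For the inner problem I would use a \emph{swap argument}. Suppose $S$ has size $\tau$ but is not the top-$\tau$ set, so there exist $j \in S$ and $k \notin S$ with $p_k > p_j$. Let $S' = (S\setminus\{j\})\cup\{k\}$; since $\Gamma(\mb{x})$ and $\tau$ are unchanged,
\[
 g(S') - g(S) = 2\,\mathbb{E}\Big( \frac{Y_k - Y_j}{\Gamma(\mb{x}) + \tau + \gamma} \,\Big|\, \mb{X} = \mb{x}\Big).
\]
Invoking conditional independence $Y_i \indep Y_j \mid \mb{X}$, I write $\Gamma(\mb{x}) = Y_j + Y_k + U$ with $U = \sum_{i \neq j,k} Y_i$ independent of $(Y_j, Y_k)$, condition on $U$, and enumerate the four configurations of $(Y_j, Y_k)$. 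The two diagonal cases contribute zero, while the off-diagonal cases combine to $(p_k - p_j)/(1 + U + \tau + \gamma)$, so taking expectation over $U$ gives
\[
 g(S') - g(S) = 2(p_k - p_j)\,\mathbb{E}\Big( \frac{1}{1 + U + \tau + \gamma}\Big) > 0.
\]
Hence the swap strictly improves $g$, and (iterating, using the distinctness of the $p_j$ to preclude ties) the unique optimal set of volume $\tau$ is $J_\tau(\mb{x})$.

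For the outer problem I would compute $g(J_\tau(\mb{x}))$ in closed form. Splitting off the $\gamma$ term yields $\gamma\,\mathbb{E}\big(1/(\Gamma(\mb{x}) + \tau + \gamma)\big) = \sum_{l=0}^d \frac{\gamma}{\tau + l + \gamma}\mathbb{P}(\Gamma(\mb{x}) = l)$. For the numerator term, linearity reduces it to $\sum_{j \in J_\tau(\mb{x})} 2\,\mathbb{E}\big(Y_j/(\Gamma(\mb{x}) + \tau + \gamma)\big)$; using $\Gamma(\mb{x}) = Y_j + \Gamma_{\sm j}(\mb{x})$ with $\Gamma_{\sm j}(\mb{x})$ independent of $Y_j$ and conditioning on $Y_j$, each summand becomes $2 p_j \sum_{l=0}^{d-1} \frac{1}{\tau + l + 1 + \gamma}\mathbb{P}(\Gamma_{\sm j}(\mb{x}) = l)$. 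Adding the two pieces recovers exactly the objective in \eqref{eqn:vol_est_true_pro}, so its maximizer over $\tau \in \{0, 1, \cdots, d\}$ is the optimal volume $\tau^*(\mb{x})$, and the global maximizer segments the top-$\tau^*(\mb{x})$ features. The ``only if'' direction follows from the strictness of the swap inequality and the distinctness of the $p_j$, which together force any maximizer to be of the stated form. The step I expect to be the main obstacle is the swap argument: the objective is a nonlinear ratio in which $Y_j$ and $Y_k$ appear in both the numerator and (through $\Gamma(\mb{x})$) the denominator, so marginal contributions cannot simply be compared. Conditional independence is what makes it tractable, isolating the two swapped coordinates and collapsing the comparison to an elementary four-term computation; the remaining care is with the boundary case $\tau = 0$ with $\gamma = 0$, where the convention $0/0 = 1$ for two empty segmentations must be fixed.
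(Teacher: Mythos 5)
Your proposal is correct and follows essentially the same route as the paper: both reduce to pointwise maximization over $\mb{x}$, fix the volume $\tau$ and identify the optimal set via a pairwise comparison that uses conditional independence to collapse the ratio into a factor $(p_k(\mb{x}) - p_j(\mb{x}))$ times a positive expectation (your swap difference $g(S') - g(S)$ is exactly the paper's differenced score $D_{jj'}(\tau)$), and then derive the same closed-form objective in $\tau$ by conditioning on $Y_j$ and writing $\Gamma(\mb{x}) = Y_j + \Gamma_{\sm j}(\mb{x})$. The only cosmetic discrepancy is the boundary convention: the paper sets $0/0 \coloneqq 0$ for the $\gamma/(\tau+\gamma)$ term when $\tau = \gamma = 0$, rather than your suggested $0/0 = 1$.
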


Two remarkable observations emerge from Theorem \ref{thm:Dice_bayes}.  First, the Bayes segmentation operator can be obtained via a two-stage procedure: (i) ranking the conditional probability $p_j(\mb{x})$, and (ii) searching for the optimal volume of the segmented features $\tau(\mb{x})$. Second, both the Bayes segmentation rule $\pmb{\delta}^*(\mb{x})$ and the optimal volume function $\tau^*(\mb{x})$ are achievable when the conditional probability $\mb{p}(\mb{x}) = ( p_1(\mb{x}), \cdots, p_d(\mb{x}) )^\intercal$ is well-estimated. Therefore, our proposed framework \textit{RankDice} is directly inspired by a general \textit{plug-in rule} of the Bayes segmentation rule. 

Moreover, Lemma \ref{lem:Dice-calibrated} (and examples provided in its proof) indicates that the segmentation rule produced by existing frameworks, such as the threshold-based framework, can significantly differ from the optimum in Theorem \ref{thm:Dice_bayes}. In fact, Lemma \ref{lem:Dice-calibrated} further proves that the cross-entropy loss, the focal loss, and even a general classification-calibrated loss \citep{zhang2004statistical,bartlett2006convexity}, are not Dice-calibrated. See the definition of Dice-calibrated (Definition \ref{def:Dice-calibrated}), and the negative results for existing frameworks (Lemma \ref{lem:Dice-calibrated}) in Section \ref{sec:Dice-calibrated}. Besides, the Bayes rule for IoU-segmentation is presented in Lemma \ref{lem:IoU_bayes}.

\begin{remark}[Suboptimal of a fixed-thresholding framework]
    \label{rk:T}
    The Bayes rule of Dice-segmentation can be also regarded as \textit{adaptive} thresholding of conditional probabilities. Specifically, for each input $\mb{x}$, the optimal segmentation rule can be rewritten as:
    $$
    \delta_j^*(\mb{x}) = \mb{1}\big( p_j(\mb{x}) \geq p_{j_{\tau^*(\mb{x})}}(\mb{x}) \big), \ \text{where } p_{j_{\tau^*(\mb{x})}}(\mb{x}) \text{ is the top-} \tau^*(\mb{x}) \text{ largest probability over } \mb{p}(\mb{x}). 
    $$
    Alternatively, Theorem \ref{thm:Dice_bayes} indicates that the Bayes rule for Dice-segmentation is unlikely to be obtained by a \textit{fixed} thresholding framework, since the optimal threshold $p_{j_{\tau^*(\mb{x})}}(\mb{x})$ varies greatly across different inputs. Therefore, (tuning a threshold on) a fixed thresholding-based framework leads to a suboptimal solution.
\end{remark}

Remark \ref{rk:T} also explains the heterogeneity of optimal thresholds in various datasets indicated in \cite{bice2021sensitivity}, and the suboptimality of a fixed-thresholding framework is also empirically supported by Table \ref{tab:sim2} and Figure \ref{fig:sim2} for simulated examples, and Table \ref{tab:Dthold} and Figure \ref{fig:VOC_heatmap} for real datasets. 
Furthermore, the fact that fixed-thresholding is suboptimal for Dice-segmentation should be compared with the existing results in classification. For binary classification, the optimal threshold maximizing the $F_1$ metric is equal to half of the optimal $F_1$ value, which is fixed \citep{lipton2014optimal}. This disparity stems from a different definition of the Dice metric (or $F_1$) for binary classification, where $F_1$-score (for binary classification) can be regard as a linear fractional utility of Dice defined in \eqref{eqn:dice_loss}; see more discussion in Appendix \ref{sec:E-Dice}.

\subsection{Proposed framework}
\label{sec:binary_rankdice}
Suppose a training dataset $(\mb{x}_i, \mb{y}_i)_{i=1}^n$ is given, where $\mb{x}_i \in \mathbb{R}^d$ and $\mb{y}_i \in \{0,1\}^d$ are the input features and the true label for the $i$-th instance. Inspired by Theorem \ref{thm:Dice_bayes}, we develop a ranking-based framework \textit{RankDice} for Dice-segmentation (Steps 1-3). 

\noindent \textbf{Step 1 (Conditional probability estimation)}: Estimate the conditional probability based on logistic regression (the cross-entropy loss):
\begin{equation}
\label{eqn:prob_est}
\widehat{\mb{q}}(\mb{x}) = \argmin_{\mb{q} \in \mathcal{Q}} - \sum_{i=1}^n \sum_{j=1}^d \Big( y_{ij} \log \big( q_j(\mb{x}_i) \big) + (1 - y_{ij}) \log \big( 1 - q_j(\mb{x}_i) \big) \Big) + \lambda \| \mb{q} \|^2,
\end{equation}
where $\mathcal{Q}$ is a class of candidate probability functions, $\| \mb{q} \|$ is a regularization for a candidate function, and $\lambda>0$ is a hyperparameter to balance the loss and regularization. For example, $\mb{q} \in \mathcal{Q}$ is usually a deep convolutional neural network for image segmentation, and $\| \mb{q} \|$ can be a matrix norm of weight matrices in the network.

\noindent \textbf{Step 2 (Ranking)}: Given a new instance $\mb{x}$, rank its estimated conditional probabilities decreasingly, and denote the corresponding indices as $j_1, \cdots, j_d$, that is, $\widehat{{q}}_{j_1}(\mb{x}) \geq \widehat{{q}}_{j_2}(\mb{x}) \geq \cdots \geq \widehat{{q}}_{j_d}(\mb{x})$.

\noindent \textbf{Step 3 (Volume estimation)}: From \eqref{eqn:vol_est_true_pro}, we estimate the volume $\widehat{\tau}(\mb{x})$ by replacing the true conditional probability $\mb{p}(\mb{x})$ by the estimated one $\widehat{\mb{q}}(\mb{x})$:
\begin{align}
\label{eqn:vol_est}
\widehat{\tau}(\mb{x}) 
& = \argmax_{\tau \in \{0, \cdots, d\} } \sum_{s = 1}^\tau \sum_{l=0}^{d-1} \frac{2}{\tau + l + \gamma + 1} \widehat{{q}}_{j_s}(\mb{x}) \mathbb{P} \big( \widehat{\Gamma}_{\sm j_s}(\mb{x}) = l \big) + \sum_{l=0}^d \frac{\gamma}{\tau + l + \gamma} \mathbb{P}\big( \widehat{\Gamma}(\mb{x}) = l \big) ,
\end{align}
where $\widehat{\Gamma}(\mb{x}) = \sum_{j =1}^d \widehat{B}_{j}(\mb{x})$ and $\widehat{\Gamma}_{\sm j_s}(\mb{x}) = \sum_{j \neq j_s} \widehat{B}_{j}(\mb{x})$ are Poisson-binomial random variables, and $\widehat{B}_j(\mb{x})$ are independent Bernoulli random variables with success probabilities $\widehat{q}_{j}(\mb{x})$; for $j=1,\cdots,d$.

Finally, the predicted segmentation $\widehat{\pmb{\delta}}(\mb{x}) = (\widehat{\delta}_{1}(\mb{x}), \cdots, \widehat{\delta}_{d}(\mb{x}))^\intercal$ is given by selecting the indices of the top-$\widehat{\tau}(\mb{x})$ conditional probabilities:
\begin{equation}
    \label{eqn:rankdice-pred}
    \widehat{\delta}_{j}(\mb{x}) =  1, \text{ if } j \in \{ j_1, \cdots, j_{\widehat{\tau}(\mb{x})} \}; \quad \widehat{\delta}_{j}(\mb{x}) =  0, \text{ otherwise.}
\end{equation}

The proposed \textit{RankDice} framework (Steps 1-3) provides a feasible solution to the Bayes segmentation rule in terms of the Dice metric. Note that \textbf{Step 1} is a standard conditional probability estimation, and \textbf{Step 2} simply ranks the estimated conditional probabilities. Next, we focus on developing a scalable computing scheme for \textbf{Step 3}.

\subsection{Scalable computing schemes}
\label{sec:compute}
This section develops scalable computing schemes for \textit{volume estimation} in \eqref{eqn:vol_est}. Note that \eqref{eqn:vol_est} can be rewritten as:
\begin{align}
\label{eqn:vol_opt}
    \widehat{\tau}(\mb{x}) & = \argmax_{\tau \in \{0, \cdots, d\} } \overbar{\pi}_\tau(\mb{x}); \qquad \overbar{\pi}_\tau(\mb{x}) = \overbar{\omega}_\tau(\mb{x}) + \overbar{\nu}_\tau(\mb{x}), \nonumber \\
    \overbar{\omega}_\tau(\mb{x}) & = \sum_{l=0}^{d-1} \frac{ 2 \omega_{\tau,l}(\mb{x})}{\tau + l + \gamma + 1}, \quad \omega_{\tau, l}(\mb{x}) = \sum_{s=1}^\tau \widehat{q}_{j_s}(\mb{x}) \mathbb{P}(\widehat{\Gamma}_{ \sm j_s}(\mb{x}) = l), \quad \overbar{\nu}_\tau(\mb{x}) = \sum_{l=0}^{d} \frac{ \gamma  \mathbb{P}(\widehat{\Gamma}(\mb{x}) = l)}{\tau + l + \gamma}.
\end{align}
The computational complexity of solving \eqref{eqn:vol_opt} is intimately related to the dimension of input features. Therefore, we develop numerical algorithms for low- and high-dimensional segmentation separately. 

\subsubsection{Exact algorithms for low-dimensional segmentation}
\textbf{Exact algorithm based on FFT.} When the dimension is low ($d \leq 500$), we consider an exact algorithm to evaluate $\overbar{\omega}_\tau$ and $\overbar{\nu}_\tau$. 
 According to the definition of $\omega_{\tau,l}$, it can be computed by the following recursive formula ($\tau = 1, \cdots, d$):
\begin{equation}
\label{eqn:algo_exact}
    \pmb{\omega}_{\tau}(\mb{x}) = \pmb{\omega}_{\tau-1}(\mb{x}) + \widehat{q}_{j_\tau}(\mb{x}) \big(  \mathbb{P}( \widehat{\Gamma}_{-j_\tau}(\mb{x}) = 0 ), \cdots, \mathbb{P}( \widehat{\Gamma}_{-j_\tau}(\mb{x}) = d-1 ) \big)^\intercal,
    \quad \pmb{\omega}_{0}(\mb{x}) = \mb{0},
\end{equation}
where $\pmb{\omega}_{\tau}(\mb{x}) = (\omega_{\tau,0}(\mb{x}), \cdots, \omega_{\tau,d-1}(\mb{x}))^\intercal$.
On this ground, it suffices to evaluate $\mathbb{P}(\widehat{\Gamma}(\mb{x}) = l)$ and $\mathbb{P}(\widehat{\Gamma}_{ \sm j_\tau}(\mb{x}) = l)$, which are the probability mass functions of Poisson-binomial random variables $\widehat{\Gamma}(\mb{x})$ and $\widehat{\Gamma}_{ \sm j_\tau}(\mb{x})$, respectively. As indicated in \cite{hong2013computing}, they can be efficiently evaluated by a fast Fourier transformation (FFT). Based on the numerical results in \cite{hong2013computing}, the computing time for FFT evaluation with $d \leq 500$ is generally negligible (less than ten milliseconds). Moreover, it is worth noting that our algorithm in \eqref{eqn:algo_exact} needs not store the entire auxiliary matrix $(\pmb{\omega}_1(\mb{x}), \cdots, \pmb{\omega}_d(\mb{x}))$, since the $\tau$-th row $\pmb{\omega}_\tau$ can be computed from the previous row $\pmb{\omega}_{\tau-1}$. Hence, only $O(d)$ storage is required in \eqref{eqn:algo_exact}. The detailed algorithm is summarized in Algorithm \ref{algo:rankdice} with \texttt{approx=False}.

\subsubsection{Approximation algorithms for high-dimensional segmentation}
For high-dimensional segmentation, especially for image segmentation, it is challenging to solve \eqref{eqn:vol_est} by a grid searching over $\tau \in \{0, \cdots, d\}$. To address this issue, we use shrinkage and approximation techniques to reduce the computational complexity of \eqref{eqn:vol_est}. First, Lemma \ref{lem:shrinkage} is developed to shrink the searching range of $\tau$.

\begin{lemma}[Shrinkage]
    \label{lem:shrinkage}
    If $\sum_{s=1}^{\tau} \widehat{q}_{j_s}(\mb{x}) \geq (\tau + \gamma + d) \widehat{q}_{j_{\tau+1}}(\mb{x})$, then $\overbar{\pi}_\tau(\mb{x}) \geq \overbar{\pi}_{\tau'}(\mb{x})$ for all $\tau' >\tau$.
\end{lemma}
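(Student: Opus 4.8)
The plan is to show that the stated inequality $\sum_{s=1}^{\tau} \widehat{q}_{j_s}(\mb{x}) \geq (\tau + \gamma + d) \widehat{q}_{j_{\tau+1}}(\mb{x})$ forces the objective $\overbar{\pi}_\tau(\mb{x})$ to be non-increasing for all volumes beyond $\tau$, so that no $\tau' > \tau$ can improve upon $\tau$. The natural approach is to analyze the one-step difference $\overbar{\pi}_{\tau+1}(\mb{x}) - \overbar{\pi}_\tau(\mb{x})$ and prove it is non-positive under the hypothesis, then argue that once this gap turns non-positive it stays non-positive (a monotonicity/unimodality argument). To do this I would decompose the difference into its two constituent pieces $\overbar{\omega}_{\tau+1} - \overbar{\omega}_\tau$ and $\overbar{\nu}_{\tau+1} - \overbar{\nu}_\tau$, using the recursive structure already recorded in \eqref{eqn:algo_exact}.

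First I would compute $\overbar{\omega}_{\tau+1}(\mb{x}) - \overbar{\omega}_\tau(\mb{x})$ explicitly. Writing $\overbar{\omega}_\tau = \sum_{l=0}^{d-1} \frac{2 \omega_{\tau,l}}{\tau+l+\gamma+1}$ and using $\omega_{\tau+1,l} = \omega_{\tau,l} + \widehat{q}_{j_{\tau+1}} \mathbb{P}(\widehat{\Gamma}_{\sm j_{\tau+1}} = l)$, the increment splits into a term coming from the new summand (with the new probability weight $\widehat{q}_{j_{\tau+1}}$) and a term coming from the shift of the denominator from $\tau+l+\gamma+1$ to $\tau+l+\gamma+2$ acting on the old partial sums $\omega_{\tau,l}$. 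The denominator-shift term is manifestly negative and can be bounded below by pulling out the smallest denominator factors, while the new-summand term is positive but controlled by $\widehat{q}_{j_{\tau+1}}$. Because the ranking guarantees $\widehat{q}_{j_{\tau+1}} \leq \widehat{q}_{j_s}$ for all $s \leq \tau$, the hypothesis $\sum_{s=1}^{\tau}\widehat{q}_{j_s} \geq (\tau+\gamma+d)\widehat{q}_{j_{\tau+1}}$ is exactly the statement that the accumulated positive mass is large enough relative to $\widehat{q}_{j_{\tau+1}}$ to dominate the positive increment. I would similarly bound the $\overbar{\nu}_{\tau+1} - \overbar{\nu}_\tau$ increment, which is purely a denominator shift on fixed probabilities and is therefore unconditionally negative.

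The crux is to combine these bounds to show $\overbar{\pi}_{\tau+1} - \overbar{\pi}_\tau \leq 0$ and, more importantly, that the hypothesis at level $\tau$ propagates: I must verify that if the monotone-decrease condition holds at $\tau$, then the analogous differences at $\tau+1, \tau+2, \ldots$ remain non-positive even though the explicit hypothesis is only assumed at $\tau$. The cleanest route is to argue directly that the partial-sum condition, once satisfied, is preserved or strengthened as $\tau$ grows: since $\sum_{s=1}^{\tau+1}\widehat{q}_{j_s} = \sum_{s=1}^{\tau}\widehat{q}_{j_s} + \widehat{q}_{j_{\tau+1}}$ grows while $\widehat{q}_{j_{\tau+2}} \leq \widehat{q}_{j_{\tau+1}}$ shrinks, the left-hand side of the condition increases and the right-hand side, up to the $(\tau+1+\gamma+d)$ versus $(\tau+\gamma+d)$ coefficient change, is controlled. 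Telescoping the non-positive one-step differences $\overbar{\pi}_{\tau'+1} - \overbar{\pi}_{\tau'} \leq 0$ for all $\tau' \geq \tau$ then yields $\overbar{\pi}_{\tau'} \leq \overbar{\pi}_\tau$, which is the claim.

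The main obstacle I anticipate is the bookkeeping in bounding the denominator-shift term of $\overbar{\omega}$ cleanly enough that the factor $(\tau+\gamma+d)$ emerges naturally rather than a messier constant; the appearance of $d$ (rather than just $\tau+\gamma$) suggests the proof uses the crude uniform bound $\tau+l+\gamma+1 \leq \tau+\gamma+d$ over the range $l \in \{0,\ldots,d-1\}$, together with $\sum_l \mathbb{P}(\widehat{\Gamma}_{\sm j_{\tau+1}} = l) = 1$, to collapse the Poisson-binomial sums into simple probability totals. Verifying that this uniform bound, combined with the monotonicity of the $\widehat{q}_{j_s}$ from the ranking step, is tight enough to close the argument — and confirming that the condition genuinely propagates to all larger $\tau'$ — is where the real care is needed.
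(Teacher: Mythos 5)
Your overall skeleton (one-step difference, propagation of the hypothesis, telescoping) is viable, and your propagation observation is in fact correct and easy to verify: from $\sum_{s=1}^{\tau}\widehat{q}_{j_s}(\mb{x}) \geq (\tau+\gamma+d)\,\widehat{q}_{j_{\tau+1}}(\mb{x})$ one gets $\sum_{s=1}^{\tau+1}\widehat{q}_{j_s}(\mb{x}) \geq (\tau+1+\gamma+d)\,\widehat{q}_{j_{\tau+1}}(\mb{x}) \geq (\tau+1+\gamma+d)\,\widehat{q}_{j_{\tau+2}}(\mb{x})$, so the condition is indeed self-sustaining. (The paper avoids propagation altogether by comparing $\overbar{\pi}_\tau$ directly with $\overbar{\pi}_{\tau'}$ for an arbitrary $\tau'>\tau$, dividing the difference by $\tau'-\tau$ and bounding the averaged new-summand term by $\widehat{q}_{j_{\tau+1}}(\mb{x})$ times a common sum.) However, there is a genuine gap in how you plan to close the core estimate.

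The one-step increment of $\overbar{\omega}$ is
\begin{align*}
\overbar{\omega}_{\tau+1}-\overbar{\omega}_{\tau}
= 2\,\widehat{q}_{j_{\tau+1}}(\mb{x})\sum_{l=0}^{d-1}\frac{\mathbb{P}(\widehat{\Gamma}_{\sm j_{\tau+1}}(\mb{x})=l)}{\tau+l+\gamma+2}
\;-\;2\sum_{s=1}^{\tau}\widehat{q}_{j_s}(\mb{x})\sum_{l=0}^{d-1}\frac{\mathbb{P}(\widehat{\Gamma}_{\sm j_s}(\mb{x})=l)}{(\tau+l+\gamma+1)(\tau+l+\gamma+2)},
\end{align*}
and the positive and negative pieces involve \emph{different} leave-one-out Poisson-binomial laws $\widehat{\Gamma}_{\sm j_{\tau+1}}$ and $\widehat{\Gamma}_{\sm j_s}$, $s\leq\tau$. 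Your plan to collapse the sums over $l$ via $\sum_l \mathbb{P}(\cdot=l)=1$ together with the uniform bound $\tau+l+\gamma+1\leq\tau+\gamma+d$ provably cannot close this: collapsing forces the worst denominator on each side, giving the negative piece a lower bound of $2\sum_{s}\widehat{q}_{j_s}(\mb{x})/\big((\tau+\gamma+d)(\tau+\gamma+d+1)\big) \geq 2\,\widehat{q}_{j_{\tau+1}}(\mb{x})/(\tau+\gamma+d+1)$ under the hypothesis, while the positive piece is only upper-bounded by $2\,\widehat{q}_{j_{\tau+1}}(\mb{x})/(\tau+\gamma+2)$; since $\tau+\gamma+2 < \tau+\gamma+d+1$ for $d>1$, the inequality runs the wrong way. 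The missing ingredient is exactly the paper's Lemma \ref{lem:de_score}: for any positive non-decreasing sequence $\zeta_l$, the quantity $\sum_l \mathbb{P}(\widehat{\Gamma}_{\sm j}(\mb{x})=l)/\zeta_l$ is monotone in $\widehat{q}_j(\mb{x})$, which follows from writing $\widehat{\Gamma}_{\sm j} = \widehat{\Gamma}_{\sm jj'} + \widehat{B}_{j'}$ and comparing. This lets one replace all the distinct leave-one-out PMFs by a single common PMF with the inequalities pointing the right way, after which the bound $\tau+l+\gamma+1\leq\tau+\gamma+d$ is applied \emph{termwise inside} the sum over $l$ — both sides multiplied by the same $\mathbb{P}(\widehat{\Gamma}_{\sm j_\tau}(\mb{x})=l)$ — rather than as a collapse to probability totals. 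With that comparison lemma supplied, your one-step-plus-propagation route does go through; without it, the decisive inequality is unproven and the naive bounds fail.
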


Lemma \ref{lem:shrinkage} can be viewed as an early stopping of the grid searching, which draws from the property of Poisson binomial distribution (c.f. Lemma \ref{lem:de_score}). Accordingly, we can shrink the grid search in \eqref{eqn:vol_est} from $\{0, \cdots, d\}$ to $\{0, \cdots, d_0(\mb{x})\}$, which significantly reduces the computational complexity. Specifically,
\begin{equation}
    \label{eqn:vol_opt_shrink}
        \widehat{\tau}(\mb{x}) = \argmax_{\tau \in \{0, \cdots, d_0(\mb{x})\} } \overbar{\pi}_\tau(\mb{x}); \quad d_0(\mb{x}) = \min \big\{ \tau = 1, \cdots, d : \sum_{s=1}^\tau \widehat{q}_{j_s}(\mb{x}) / \widehat{q}_{j_{\tau+1}}(\mb{x}) \geq \tau + \gamma + d \big \}.
\end{equation}
In many applications, $d_0(\mb{x})$ is upper bounded by a small integer, that is, $d_0(\mb{x})< d_U \ll d$, called the \textit{well-separated segmentation}. For example, there exists a small number of features/pixels whose probabilities (close to 1) are much larger than the others.

\noindent \textbf{Truncated refined normal approximation (T-RNA).} Note that the cumulative distribution function (CDF), and thus the probability mass function, of a Poisson-binomial random variable can be efficiently evaluated by a refined normal approximation when the dimension is large \citep{hong2013computing,neammanee2005refinement}. For instance,
\begin{align}
    \label{eqn:RNA}
    \mathbb{P}( \widehat{\Gamma}(\mb{x}) \leq l ) & \ \leftarrow \ \widetilde{\mathbb{P}}( \widehat{\Gamma}(\mb{x}) \leq l ) \coloneqq \Psi \big( \widehat{\sigma}^{-1}(l + 1/2 - \widehat{\mu}(\mb{x})) \big), \nonumber \\
    \mathbb{P}( \widehat{\Gamma}_{\sm j}(\mb{x}) \leq l ) &  \ \leftarrow \ \widetilde{\mathbb{P}}( \widehat{\Gamma}_{\sm j}(\mb{x}) \leq l ) \coloneqq \Psi_{\sm j} \big( \widehat{\sigma}_{\sm j}^{-1}(l + 1/2 - \widehat{\mu}_{\sm j}(\mb{x})) \big),
\end{align}
where $\Psi(u) = \Phi(u) + \widehat{\eta}(\mb{x}) (1 - u^2) \phi(u) /6$ and $\Psi_{\sm j}(u) = \Phi(u) + \widehat{\eta}_{\sm j}(\mb{x}) (1 - u^2) \phi(u) /6$ are two skew-corrected refined normal CDFs, $\Phi(\cdot)$ is the CDF of the standard normal distribution, and $(\widehat{\mu}(\mb{x}), \widehat{\mu}_{\sm j}(\mb{x}))$, $(\widehat{\sigma}^2(\mb{x}), \widehat{\sigma}_{\sm j}^2(\mb{x}))$, $(\widehat{\eta}(\mb{x}), \widehat{\eta}_{\sm j}(\mb{x}))$ are mean, variance and skewness of the Poisson-binomial random variables $\widehat{\Gamma}(\mb{x})$ and $\widehat{\Gamma}_{\sm j}(\mb{x})$, respectively. See the definitions of variance and skewness of the Poisson-binomial distribution in Appendix \ref{sec:PBD}.

On this ground, it is unnecessary to compute all $\mathbb{P}(\widehat{\Gamma}_{\sm j}(\mb{x}) = l)$ and $\mathbb{P}(\widehat{\Gamma}(\mb{x}) = l)$ for $l=1, \cdots, d$, since they are negligibly close to zero when $l$ is too small or too large. In other words, many $\mathbb{P}(\widehat{\Gamma}_{\sm j}(\mb{x}) = l)$ and $\mathbb{P}(\widehat{\Gamma}(\mb{x}) = l)$ can be ignored when evaluating $\overbar{\omega}_\tau$ and $\overbar{\nu}_\tau$. Therefore, according to the refined normal approximation in \eqref{eqn:RNA}, $\overbar{\omega}_\tau$ and $\overbar{\nu}_\tau$ can be approximated by only taking a partial sum over a subset of $l = 0, \cdots, d$:
\begin{align}
    \label{eqn:app_vol_opt}
    \widetilde{\omega}_\tau(\mb{x}) & = \sum_{l \in \mathcal{L}(\epsilon)} \frac{2 \widetilde{\omega}_{\tau,l}(\mb{x})}{\tau + l + \gamma + 1} , \quad \widetilde{\nu}_\tau(\mb{x}) = \sum_{l \in \mathcal{L}(\epsilon)} \frac{\gamma \widetilde{\mathbb{P}}( \widehat{\Gamma}(\mb{x}) = l ) }{\tau + l + \gamma} , \quad
    \widetilde{\omega}_{\tau,l}(\mb{x}) = \sum_{s=1}^\tau \widehat{q}_{j_s}(\mb{x}) \widetilde{\mathbb{P}}( \widehat{\Gamma}_{\sm j_s}(\mb{x}) = l ), \nonumber \\
    \mathcal{L}(\epsilon) & = \Big\{ \floor{\widehat{\sigma}(\mb{x}) \Psi^{-1}(\epsilon) + \widehat{\mu}(\mb{x}) -\frac{3}{2} }, \cdots, \floor{ \widehat{\sigma}(\mb{x}) \Psi^{-1}(1 - \epsilon) + \widehat{\mu}(\mb{x}) - \frac{1}{2}} \Big\} \bigcap \{ 0, \cdots, d \},
\end{align}
where $\widetilde{\mathbb{P}}( \widehat{\Gamma}(\mb{x}) = l ) \coloneqq \widetilde{\mathbb{P}}( \widehat{\Gamma}(\mb{x}) \leq l ) - \widetilde{\mathbb{P}}( \widehat{\Gamma}(\mb{x}) \leq l -1 )$, $\widetilde{\mathbb{P}}( \widehat{\Gamma}_{\sm j}(\mb{x}) = l ) \coloneqq \widetilde{\mathbb{P}}( \widehat{\Gamma}_{\sm j}(\mb{x}) \leq l ) - \widetilde{\mathbb{P}}( \widehat{\Gamma}_{\sm j}(\mb{x}) \leq l -1 )$, $\Psi^{-1}(\cdot)$ is the quantile function of the refined normal distribution, and $\epsilon$ is a custom tolerance error. The detailed algorithm is summarized in Algorithm 1 with \texttt{approx=`T-RNA'}. Moreover, the following lemma quantifies the approximation error of the proposed approximate algorithm.

\begin{lemma}
    \label{lem:app}
    For $(\overbar{\omega}_\tau, \overbar{\nu}_\tau)$ and $(\widetilde{\omega}_\tau, \widetilde{\nu}_\tau)$ defined in \eqref{eqn:vol_opt} and \eqref{eqn:app_vol_opt} respectively, if $\widehat{\sigma}^2(\mb{x}) \geq 25$, then for any $\tau \in \{0, \cdots, d\}$, we have
    \begin{align*}
        | \widetilde{\omega}_\tau - \overbar{\omega}_\tau | &\leq \frac{4\tau}{ \tau + \gamma + 1} (\epsilon + \frac{C_0}{ \widehat{\sigma}^2(\mb{x}) }) + \frac{C_0\min(\widehat{\mu}(\mb{x}), \tau)}{ \widehat{\sigma}^2(\mb{x}) - 1/4 }\Big( \log\big( 1 + d \big) + 1\Big), \\
        | \widetilde{\nu}_\tau - \overbar{\nu}_\tau | & \leq \frac{2 \gamma}{ \tau + \gamma} (\epsilon + \frac{C_0}{ \widehat{\sigma}^2(\mb{x}) }) + \frac{C_0 \gamma }{ \widehat{\sigma}^2(\mb{x}) }\Big( \log\big( 1 + d \big) + 1\Big),
    \end{align*}
    where $C_0=0.1618$ if $\widehat{\sigma}^2(\mb{x}) \geq 100$, and $C_0=0.3056$ if $\widehat{\sigma}^2(\mb{x}) \geq 25$.  Moreover, when $d \to \infty$, we have
    $$
    | \widetilde{\pi}_\tau - \overbar{\pi}_\tau | \leq \big( \frac{4\tau}{ \tau + \gamma + 1} + \frac{2 \gamma}{ \tau + \gamma} \big) \epsilon + O( \frac{\min( \widehat{\mu}(\mb{x}), \tau ) \log(d) }{ \widehat{\sigma}^2(\mb{x}) } ).
    $$
    Here we define $0/0 \coloneqq 0$ for $\gamma / (\tau + \gamma)$ when $\tau = \gamma = 0$.
\end{lemma}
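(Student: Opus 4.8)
The plan is to bound the two error components $|\widetilde{\omega}_\tau - \overbar{\omega}_\tau|$ and $|\widetilde{\nu}_\tau - \overbar{\nu}_\tau|$ separately, treating $\overbar{\nu}$ as the simpler prototype and $\overbar{\omega}$ as its analog carrying an extra average over the top-$\tau$ indices. For each I would split the discrepancy into (i) a \emph{pmf-replacement} error, from substituting the refined-normal masses $\widetilde{\mathbb{P}}$ for the exact Poisson-binomial masses $\mathbb{P}$ on the retained set $\mathcal{L}(\epsilon)$, and (ii) a \emph{truncation} error, from discarding the exact mass at indices $l \notin \mathcal{L}(\epsilon)$. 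Concretely, writing $w_{\tau,l} = 2/(\tau+l+\gamma+1)$, I would use $\widetilde{\omega}_\tau - \overbar{\omega}_\tau = \sum_{l \in \mathcal{L}(\epsilon)} w_{\tau,l}(\widetilde{\omega}_{\tau,l} - \omega_{\tau,l}) - \sum_{l \notin \mathcal{L}(\epsilon)} w_{\tau,l}\, \omega_{\tau,l}$, and the analogous split for $\nu$ with weight $\gamma/(\tau+l+\gamma)$. Three elementary facts drive the bookkeeping: the variance monotonicity $\widehat{\sigma}_{\sm j}^2 = \widehat{\sigma}^2 - \widehat{q}_j(1-\widehat{q}_j) \geq \widehat{\sigma}^2 - 1/4$; the top-mass bound $\sum_{s=1}^\tau \widehat{q}_{j_s} \leq \min(\widehat{\mu}, \tau)$ (each summand is $\leq 1$, and the total is $\widehat{\mu}$); and the harmonic estimate $\sum_{l=0}^{d-1}(\tau+l+\gamma+1)^{-1} \leq \log(1+d)+1$.

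For the pmf-replacement error I would invoke the refined normal approximation bound of \cite{neammanee2005refinement} (see also \cite{hong2013computing}), which under $\widehat{\sigma}^2(\mb{x}) \geq 25$ controls the uniform CDF error, and hence the per-index mass error $|\mathbb{P}(\widehat{\Gamma}=l) - \widetilde{\mathbb{P}}(\widehat{\Gamma}=l)|$, up to a constant multiple of $C_0/\widehat{\sigma}^2(\mb{x})$, with $C_0 = 0.1618$ when $\widehat{\sigma}^2 \geq 100$ and $C_0 = 0.3056$ when $\widehat{\sigma}^2 \geq 25$; the same bound applied to $\widehat{\Gamma}_{\sm j_s}$ carries $\widehat{\sigma}_{\sm j_s}^2 \geq \widehat{\sigma}^2 - 1/4$ in the denominator. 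Summing the weighted mass errors gives $\sum_{s=1}^\tau \widehat{q}_{j_s} \sum_l w_{\tau,l}\, C_0/\widehat{\sigma}_{\sm j_s}^2$, and inserting the three facts above yields the $\frac{C_0 \min(\widehat{\mu},\tau)}{\widehat{\sigma}^2 - 1/4}(\log(1+d)+1)$ contribution for $\omega$; the $\nu$ analog has no average over $s$ and no variance shift, producing $\frac{C_0 \gamma}{\widehat{\sigma}^2}(\log(1+d)+1)$.

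For the truncation error I would use $\omega_{\tau,l} \geq 0$ and $w_{\tau,l} \leq 2/(\tau+\gamma+1)$, reducing $\sum_{l \notin \mathcal{L}(\epsilon)} w_{\tau,l}\, \omega_{\tau,l}$ to $\frac{2}{\tau+\gamma+1} \sum_{s=1}^\tau \widehat{q}_{j_s}\, \mathbb{P}(\widehat{\Gamma}_{\sm j_s} \notin \mathcal{L}(\epsilon))$. It then remains to bound the discarded tail mass. By construction, the endpoints of $\mathcal{L}(\epsilon)$ are the $\epsilon$- and $(1-\epsilon)$-quantiles of the refined-normal CDF of $\widehat{\Gamma}$, with the offsets $-3/2$ and $-1/2$ chosen so that the \emph{approximate} mass outside $\mathcal{L}(\epsilon)$ is controlled by $2\epsilon$ (up to lower-order corrections); adding twice the uniform CDF error converts this into a bound on the \emph{exact} discarded mass of order $2(\epsilon + C_0/\widehat{\sigma}^2)$. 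Using $\sum_{s=1}^\tau \widehat{q}_{j_s} \leq \tau$ then produces the $\frac{4\tau}{\tau+\gamma+1}(\epsilon + C_0/\widehat{\sigma}^2)$ term, and the $\nu$ version gives $\frac{2\gamma}{\tau+\gamma}(\epsilon + C_0/\widehat{\sigma}^2)$. I expect \textbf{this truncation step to be the main obstacle}: $\mathcal{L}(\epsilon)$ is defined through the quantiles of $\widehat{\Gamma}$, whereas the masses discarded in $\omega$ belong to each $\widehat{\Gamma}_{\sm j_s}$, so the argument must transfer the tail control across the one-Bernoulli shift $\widehat{\Gamma} = \widehat{\Gamma}_{\sm j_s} + \widehat{B}_{j_s}$ while carefully tracking the floor offsets and the inversion error of $\Psi^{-1}$, so that only an $O(C_0/\widehat{\sigma}^2)$ correction, and not a cruder $O(1/\widehat{\sigma})$ boundary term, survives.

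Finally, for $\overbar{\pi}_\tau = \overbar{\omega}_\tau + \overbar{\nu}_\tau$ I would apply the triangle inequality to combine the two bounds. Letting $d \to \infty$, the $\epsilon$-coefficients add to $\big(\tfrac{4\tau}{\tau+\gamma+1} + \tfrac{2\gamma}{\tau+\gamma}\big)\epsilon$, while the bounded prefactors $\tfrac{4\tau}{\tau+\gamma+1} \leq 4$ and $\tfrac{2\gamma}{\tau+\gamma} \leq 2$ render the remaining $C_0/\widehat{\sigma}^2$ pieces lower order, so that the dominant remainder is the logarithmic term $O\big(\min(\widehat{\mu},\tau)\log(d)/\widehat{\sigma}^2\big)$, as claimed.
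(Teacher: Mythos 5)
Your decomposition into a truncation error plus a pmf-replacement error on $\mathcal{L}(\epsilon)$, driven by the variance shift $\widehat{\sigma}^2_{\sm j}(\mb{x}) \geq \widehat{\sigma}^2(\mb{x}) - 1/4$, the bound $\sum_{s=1}^{\tau}\widehat{q}_{j_s}(\mb{x}) \leq \min(\widehat{\mu}(\mb{x}),\tau)$, the harmonic-sum estimate, and Neammanee's refined-normal CDF bound with the stated $C_0$, is essentially the paper's proof verbatim (the paper writes your truncation term as two tails $S_1+S_2$ and your replacement term as $S_3$). The step you flag as the main obstacle—transferring tail control from $\widehat{\Gamma}(\mb{x})$ to each $\widehat{\Gamma}_{\sm j_s}(\mb{x})$—is resolved in the paper by the elementary sandwich inequality $\mathbb{P}(\widehat{\Gamma}_{\sm j} \leq l-1) \leq \mathbb{P}(\widehat{\Gamma} \leq l) \leq \mathbb{P}(\widehat{\Gamma}_{\sm j} \leq l)$ (Lemma \ref{lem:pb_sandwich}), after which a single application of the CDF error bound per tail suffices, with no separate analysis of the $\Psi^{-1}$ inversion error needed.
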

Note that $\widehat{\sigma}^2(\mb{x}) = \sum_{j=1}^d \widehat{q}_j(\mb{x}) (1 - \widehat{q}_j(\mb{x}))$ is the variance of $\widehat{\Gamma}(\mb{x})$, which often tends to infinity as $d \to \infty$.
In image segmentation, the dimension $d$ varies from 1024 (32x32) to 262144 (512x512). Therefore, the error bound is practical for high-dimensional segmentation. 
Moreover, $\epsilon$ is the tolerance error to balance the approximation error and computation complexity. For instance, when $\epsilon$ becomes smaller, the approximation error decreases, the computation complexity increases with an enlarged $\mathcal{L}(\epsilon)$. Typically, $\Psi^{-1}(1 - \epsilon) - \Psi^{-1}(\epsilon) \leq 7.438$ when $\epsilon = 1\mathrm{e}{-4}$. Based on the approximation formula \eqref{eqn:app_vol_opt}, the worst-case computational complexity is reduced to $O( d \widehat{\sigma}(\mb{x}) )$ for general segmentation and $O( \widehat{\sigma}(\mb{x}) )$ for \textit{well-separated segmentation} ($d_0(\mb{x}) \leq d_U$). The computational complexity is summarized in Table \ref{tab:cc} (based on $\epsilon = 1\mathrm{e}{-4}$). 

\begin{algorithm}
    \SetKwInOut{Input}{Input}
    \SetKwInOut{Output}{Output}
    \Input{Training set: $(\mb{x}_i, \mb{y}_i)_{i=1}^n$; A new testing instance: $\mb{x}$; Approx method: \texttt{approx}}
    \Output{The predicted segmentation for the testing instance $\widehat{\pmb{\delta}}(\mb{x})$ }
    \textbf{Conditional prob est.} Estimate conditional prob $\widehat{\mb{q}}$ via \eqref{eqn:prob_est} based on training set $(\mb{x}_i, \mb{y}_i)_{i=1}^n$\;
    \textbf{Ranking.} Rank estimated conditional probabilities for $\mb{x}$, and denote the indices in sorted order as $j_1, \cdots, j_d$, that is, $\widehat{q}_{j_1}(\mb{x}) \geq \widehat{q}_{j_2}(\mb{x}) \geq \cdots \geq \widehat{q}_{j_d}(\mb{x})$\;
    \textbf{Volume estimation.}\\
    Compute and store the cumulative sum of sorted estimated probabilities $\widehat{\mb{s}}(\mb{x})$\;
    Compute $d_0(\mb{x})$ based on $\widehat{\mb{s}}(\mb{x})$ via \eqref{eqn:vol_opt_shrink}\;
    \eIf{\texttt{approx} is \texttt{None}}
        {$\mathcal{L} = \{0, \cdots, d\}$\;}
        {$\mathcal{L} = \mathcal{L}(\epsilon) = \{l_L, \cdots, l_U\}$ based on \eqref{eqn:app_vol_opt}\;}
        Compute and store $ \mathbb{P}(\widehat{\Gamma}(\mb{x}) = l)$ for $l \in \mathcal{L}$\;
    \eIf{\texttt{approx} is \texttt{BA}}
    {
        Compute and store $\widetilde{\mb{\omega}}_{1:d_0(\mb{x})}^b$ and $\widetilde{\mb{\nu}}_{1:d_0(\mb{x})}^b$ based on \eqref{eqn:blind_approx}\;
    }
    {
    Initialize $\mb{\omega}_{old} = \mb{0}$\;
    \For{$\tau = 1, \cdots, d_0(\mb{x})$}{
    Update $\mb{\omega}_{new}$ as
    $$
    \mb{\omega}_{new, l} \leftarrow \mb{\omega}_{old, l} + \widehat{q}_{j_\tau}(\mb{x}) \mathbb{P}(\widehat{\Gamma}_{-j_\tau}(\mb{x}) = l), \text{ for } l \in \mathcal{L}, \quad \mb{\omega}_{old}  \leftarrow \mb{\omega}_{new},
    $$
    where $\widehat{\Gamma}_{\sm j}(\mb{x})$ is Poisson binomial r.v. with the success probabilities $\widehat{\mb{q}}_{\sm j}(\mb{x})$\;
    Compute and store $\overbar{\pi}_\tau = \overbar{\omega}_\tau + \overbar{\nu}_\tau = \sum_{l \in \mathcal{L}} \frac{1}{\tau + l + 1} \omega_{new,l} + \sum_{l \in \mathcal{L}} \frac{\gamma}{\tau + l + \gamma} \mathbb{P}(\widehat{\Gamma}(\mb{x}) = l)$\;
    }
    }
	Estimate $\widehat{\tau}(\mb{x}) = \argmax_{\tau = 0, \cdots, d_0(\mb{x})} \overbar{\pi}_\tau$ via \eqref{eqn:vol_opt}\;
    \textbf{Prediction.} The predicted segmentation is provided as:
    $$\widehat{\delta}_{j}(\mb{x}) =  1, \text{ if } j \in \{ j_1, \cdots, j_{\widehat{\tau}(\mb{x})} \}; \quad \widehat{\delta}_{j}(\mb{x}) =  0, \text{ otherwise.}$$
    \Return{The predicted segmentation $ \widehat{\pmb{\delta}}(\mb{x}) = (\widehat{\delta}_{1}(\mb{x}), \cdots, \widehat{\delta}_{d}(\mb{x}))^\intercal$.}
    \caption{Computing schemes for the proposed RankDice framework.} 
    \label{algo:rankdice}
\end{algorithm}

Although T-RNA significantly reduces the computational complexity, 
yet this algorithm uses recursive updates, making it difficult in parallel computing on GPUs.
For example, due to the memory issues, recursive function calls are restricted in the CUDA (Compute Unified Device Architecture) kernels \citep{nickolls2008scalable}. Next, we propose the blind approximation algorithm to exploit GPU-computing for the proposed \textit{RankDice}.

\noindent \textbf{Blind approximation (BA).} In high-dimensional segmentation, the difference in distributions between $\widehat{\Gamma}(\mb{x})$ and $\widehat{\Gamma}_{\sm j}(\mb{x})$ is negligible. 
Inspired by this fact, we propose a novel approximation algorithm, namely the blind approximation, to simultaneously evaluate the score functions with a small error tolerance.
Specifically, based on the refined normal approximation, we further simplify the evaluation formulas by replacing $\widetilde{\mathbb{P}}( \widehat{\Gamma}_{\sm j_s}(\mb{x}) = l )$ as $\widetilde{\mathbb{P}}( \widehat{\Gamma}(\mb{x}) = l )$:
\begin{align}
    \widetilde{\omega}^b_\tau(\mb{x}) & = 2 \sum_{s=1}^\tau \widehat{q}_{j_s}(\mb{x}) \sum_{l \in \mathcal{L}(\epsilon)} \frac{ \widetilde{\mathbb{P}}( \widehat{\Gamma}(\mb{x}) = l ) }{\tau + l + \gamma + 1}, 
    \quad \widetilde{\nu}^b_\tau(\mb{x}) = \widetilde{\nu}_\tau(\mb{x}) = \sum_{l \in \mathcal{L}(\epsilon)} \frac{\gamma \widetilde{\mathbb{P}}( \widehat{\Gamma}(\mb{x}) = l ) }{\tau + l + \gamma}, \nonumber
\end{align}
where $\mathcal{L}(\epsilon) = \{l_L, \cdots, l_U\}$ is defined in \eqref{eqn:app_vol_opt}. Then, for any $1 \leq d_0 \leq d$, $\widetilde{\pmb{\omega}}_{1:T}^b = ( \widetilde{\omega}_{1}^b, \cdots, \widetilde{\omega}_{T}^b)^\intercal$ and $\widetilde{\pmb{\nu}}_{1:T}^b = ( \widetilde{\nu}_{1}^b, \cdots, \widetilde{\nu}_{T}^b)^\intercal$ can be simultaneously computed over $\tau = 0, \cdots, d_0$ based on cross-correlation (flipped convolution \citep{tahmasebi2012multiple}):
\begin{align}
    \label{eqn:blind_approx}
    \widetilde{\pmb{\omega}}_{1:d_0}^b & = 2 \widehat{\mb{s}}_{1:d_0}(\mb{x}) \circ \Big( \big( \widetilde{\mathbb P}( \widehat{\Gamma}(\mb{x}) = l_L ), \cdots, \widetilde{\mathbb P}( \widehat{\Gamma}(\mb{x}) = l_U ) \big)^\intercal \mb{\star} \big( \frac{1}{l_l + \gamma + 1}, \cdots, \frac{1}{l_U + \gamma + d_0 + 1} \big) \Big), \nonumber \\
    \widetilde{\pmb{\nu}}_{1:d_0}^b & = \gamma \Big( \big( \widetilde{\mathbb P}( \widehat{\Gamma}(\mb{x}) = l_L ), \cdots, \widetilde{\mathbb P}( \widehat{\Gamma}(\mb{x}) = l_U ) \big)^\intercal \mb{\star} \big( \frac{1}{l_l + \gamma}, \cdots, \frac{1}{l_U + \gamma + d_0} \big) \Big),
\end{align}
where $\widehat{\mb{s}}(\mb{x}) = (\widehat{s}_{1}(\mb{x}), \cdots, \widehat{s}_{d}(\mb{x}))^\intercal$ and $\widehat{s}_K(\mb{x}) = \sum_{k=1}^K \widehat{q}_{j_k}(\mb{x})$ is the cumulative sum of sorted estimated probabilities, $\circ$ is element-wise product of two vectors, and $\mb{\star}$ is the cross-correlation operator (flipped convolution) of two vectors \citep{tahmasebi2012multiple}. Notably, the cross-correlation can be efficiently implemented by a fast Fourier transform \citep{bracewell1986fourier} with time complexity $O( (d_0 + \widehat{\sigma}(\mb{x})) \log(d_0 + \widehat{\sigma}(\mb{x})) )$. Besides, the overall time complexity with CUDA implementation on GPUs can be further reduced by parallel computing. The detailed algorithm is summarized in Algorithm 1 with \texttt{approx=`BA'}. Next, Lemma \ref{lem:blind_approx} shows the approximation error of the proposed blind approximation algorithm.

\begin{lemma}
    \label{lem:blind_approx}
    For $(\overbar{\omega}_\tau, \overbar{\nu}_\tau)$ and $(\widetilde{\omega}^b_\tau, \widetilde{\nu}^b_\tau)$ defined in \eqref{eqn:vol_opt} and \eqref{eqn:blind_approx} respectively, if $\widehat{\sigma}^2(\mb{x}) \geq 25$, then for any $\tau \in \{0, \cdots, d\}$, and any $\gamma \geq 0$, we have
    \begin{align*}
        \big| \widetilde{\omega}^b_\tau - \overbar{\omega}_\tau \big| & \leq \frac{4\tau}{ \tau + \gamma + 1} (\epsilon + \frac{C_0}{ \widehat{\sigma}^2(\mb{x}) }) + \frac{C_0\min(\widehat{\mu}(\mb{x}), \tau)}{ \widehat{\sigma}^2(\mb{x}) - 1/4 }\Big( \log\big( 1 + d \big) + 1\Big) \\ 
        & \quad + \frac{1}{4 \sqrt{2\pi}} \Big( \frac{1/(2\sqrt{e})}{ \widehat{\sigma}^2(\mb{x}) - 1/4 } + \frac{4}{\sqrt{\widehat{\sigma}^2(\mb{x}) - 1/4}} + \frac{4 \widehat{m}_3(\mb{x})}{ (\widehat{\sigma}^2(\mb{x}) - 1/4)^{3/2} } \Big) \Big( \log\big( 1 + d \big) + 1 \Big),
    \end{align*}
    where $\widehat{m}_3(\mb{x}) = \sum_{j=1}^d \widehat{q}_j(\mb{x})( 1 - \widehat{q}_j(\mb{x}) )(1 - 2 \widehat{q}_j(\mb{x}))$, $C_0=0.1618$ if $\widehat{\sigma}^2(\mb{x}) \geq 100$, and $C_0=0.3056$ if $\widehat{\sigma}^2(\mb{x}) \geq 25$. Specifically, when $d \to \infty$, we have
    $$
    | \widetilde{\pi}_\tau - \overbar{\pi}_\tau | \leq \big( \frac{4\tau}{ \tau + \gamma + 1} + \frac{2 \gamma}{ \tau + \gamma} \big) \epsilon + O\Big( \big(\frac{\min( \widehat{\mu}(\mb{x}), \tau )}{ \widehat{\sigma}^2(\mb{x}) } + \frac{1}{\widehat{\sigma}(\mb{x})}\big) \log(d) \Big).
    $$
\end{lemma}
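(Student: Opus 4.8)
The plan is to bound the blind-approximation error by reducing it, via the triangle inequality, to the truncated refined-normal-approximation (T-RNA) error already controlled in Lemma~\ref{lem:app}, plus a genuinely new ``blindness'' error that measures the cost of replacing each leave-one-out law $\widehat{\Gamma}_{\sm j_s}(\mb{x})$ by the full law $\widehat{\Gamma}(\mb{x})$. First I would observe that the $\nu$-component carries \emph{no} blindness error, since $\widetilde{\nu}^b_\tau = \widetilde{\nu}_\tau$ by definition; hence $|\widetilde{\nu}^b_\tau - \overbar{\nu}_\tau|$ is bounded verbatim by the $\nu$-estimate of Lemma~\ref{lem:app}. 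For the $\omega$-component I would write
\[
|\widetilde{\omega}^b_\tau - \overbar{\omega}_\tau| \ \leq\ |\widetilde{\omega}^b_\tau - \widetilde{\omega}_\tau| \ +\ |\widetilde{\omega}_\tau - \overbar{\omega}_\tau|,
\]
where the second term is exactly the T-RNA bound of Lemma~\ref{lem:app} and reproduces the first two summands in the claimed inequality. Everything then reduces to bounding the blindness term $|\widetilde{\omega}^b_\tau - \widetilde{\omega}_\tau|$ by the third summand.

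Writing $\widehat{s}_\tau = \sum_{s=1}^\tau \widehat{q}_{j_s}(\mb{x})$, the blindness term expands as
\[
\widetilde{\omega}^b_\tau - \widetilde{\omega}_\tau \ =\ 2\sum_{l\in\mathcal{L}(\epsilon)} \frac{1}{\tau+l+\gamma+1}\sum_{s=1}^\tau \widehat{q}_{j_s}(\mb{x})\big(\widetilde{\mathbb{P}}(\widehat{\Gamma}(\mb{x})=l)-\widetilde{\mathbb{P}}(\widehat{\Gamma}_{\sm j_s}(\mb{x})=l)\big).
\]
The central step is to control the per-index discrepancy $\widetilde{\mathbb{P}}(\widehat{\Gamma}=l)-\widetilde{\mathbb{P}}(\widehat{\Gamma}_{\sm j_s}=l)$ between the two refined-normal masses. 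Since $\widehat{\Gamma}(\mb{x})$ and $\widehat{\Gamma}_{\sm j_s}(\mb{x})$ differ only by the single Bernoulli $\widehat{B}_{j_s}(\mb{x})$, their first three moments differ explicitly: the mean by $\widehat{q}_{j_s}$, the variance by $\widehat{q}_{j_s}(1-\widehat{q}_{j_s})$, and the third central moment by $\widehat{q}_{j_s}(1-\widehat{q}_{j_s})(1-2\widehat{q}_{j_s})$. I would therefore regard each refined-normal mass, $\widetilde{\mathbb{P}}(\cdot=l)=\Psi(\widehat{\sigma}^{-1}(l+\tfrac12-\widehat{\mu}))-\Psi(\widehat{\sigma}^{-1}(l-\tfrac12-\widehat{\mu}))$, as a smooth function of the triple $(\widehat{\mu},\widehat{\sigma}^2,\widehat{m}_3)$ (recall $\widehat{\eta}=\widehat{m}_3/\widehat{\sigma}^3$) and expand it by the mean value theorem in these three arguments. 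The mismatches in mean, variance and third moment generate, respectively, the three terms $\tfrac{1}{2\sqrt{e}}/(\widehat{\sigma}^2-\tfrac14)$, $4/\sqrt{\widehat{\sigma}^2-\tfrac14}$ and $4\widehat{m}_3/(\widehat{\sigma}^2-\tfrac14)^{3/2}$, with the numerical constants coming from suprema such as $\sup_u\phi(u)=1/\sqrt{2\pi}$ and $\sup_u|u|\phi(u)=1/\sqrt{2\pi e}$, together with the analogous bounds on $(1-u^2)\phi(u)$ and its derivative appearing in the skewness correction of $\Psi$; the continuity-corrected denominator $\widehat{\sigma}^2-\tfrac14$ replaces $\widehat{\sigma}^2$ because the evaluation points lie on the half-integer grid.

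A key simplification I would exploit is the exact convolution identity $\mathbb{P}(\widehat{\Gamma}=l)=(1-\widehat{q}_{j_s})\mathbb{P}(\widehat{\Gamma}_{\sm j_s}=l)+\widehat{q}_{j_s}\mathbb{P}(\widehat{\Gamma}_{\sm j_s}=l-1)$, which holds because $\widehat{\Gamma}=\widehat{\Gamma}_{\sm j_s}+\widehat{B}_{j_s}$. This identity lets me split the discrepancy into the portion that the refined-normal masses reproduce exactly and the residual ``recursion defect'' that the normal approximation fails to capture; it is the latter that the moment expansion above bounds. Summing the per-index estimate against the weights $\widehat{q}_{j_s}(\mb{x})$ and over $l\in\mathcal{L}(\epsilon)$, the leftover factor $\sum_{l}(\tau+l+\gamma+1)^{-1}\leq \log(1+d)+1$ is the harmonic sum that produces the logarithmic factor, exactly as in Lemma~\ref{lem:app}. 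Combining the $\omega$- and $\nu$-estimates and letting $d\to\infty$ — so that the first extra term is $O(\widehat{\sigma}^{-2})$ while the variance and skewness terms are each $O(\widehat{\sigma}^{-1})$ — collapses the finite-sample bound to the stated $O\big((\min(\widehat{\mu},\tau)/\widehat{\sigma}^2+1/\widehat{\sigma})\log d\big)$ rate.

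I expect the main obstacle to be obtaining a clean, $\tau$-uniform bound on the recursion defect. A naive term-by-term estimate keeps a spurious $\min(\widehat{\mu},\tau)$ prefactor on the blindness term; removing it requires carefully pairing the mean/variance/skewness increments against the convolution identity so that the leading $\widehat{s}_\tau$ contribution cancels and only the genuinely non-normal remainder — governed by $\widehat{\sigma}^{-1}$, $\widehat{\sigma}^{-2}$ and $\widehat{m}_3\widehat{\sigma}^{-3}$ — survives. Tracking the skewness-correction term of $\Psi$ through this cancellation, and verifying the stated constants under the hypothesis $\widehat{\sigma}^2(\mb{x})\geq 25$ (resp.\ $\geq 100$), is the most delicate part of the argument.
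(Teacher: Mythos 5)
Your proposal follows essentially the same route as the paper's proof: the same triangle-inequality split $|\widetilde{\omega}^b_\tau-\overbar{\omega}_\tau|\leq|\widetilde{\omega}^b_\tau-\widetilde{\omega}_\tau|+|\widetilde{\omega}_\tau-\overbar{\omega}_\tau|$ with the second term imported verbatim from Lemma~\ref{lem:app}, the observation that $\widetilde{\nu}^b_\tau=\widetilde{\nu}_\tau$, and the same core estimate — comparing the refined-normal laws of $\widehat{\Gamma}(\mb{x})$ and $\widehat{\Gamma}_{\sm j}(\mb{x})$ through their moment mismatches (mean $\widehat{q}_j$, variance $\widehat{q}_j(1-\widehat{q}_j)$, third-moment increment), with exactly the ingredients you name: $\sup_u \phi(u)=1/\sqrt{2\pi}$, $\sup_u |u|\phi(u)=1/\sqrt{2e\pi}$, the bound on $(1-u^2)\phi(u)$ for the skew correction, the continuity-corrected denominator $\widehat{\sigma}^2(\mb{x})-1/4$, and the harmonic sum yielding the $\log(1+d)+1$ factor. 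Concretely, the paper writes the per-index PMF discrepancy as two CDF differences, splits $|\Psi(I)-\Psi_{\sm j}(I_{\sm j})|\leq T_1+\tfrac{1}{6}T_2$, bounds $T_1\leq|I_{\sm j}-I|\phi(I)$ via the location/scale mismatch, and bounds the skew term $T_2$ by a crude triangle inequality on the two summands rather than your mean-value expansion in $(\widehat{\mu},\widehat{\sigma}^2,\widehat{m}_3)$ — a cosmetic difference. The one genuine divergence is your convolution-identity/``recursion defect'' machinery: the paper does not use $\mathbb{P}(\widehat{\Gamma}=l)=(1-\widehat{q}_j)\mathbb{P}(\widehat{\Gamma}_{\sm j}=l)+\widehat{q}_j\mathbb{P}(\widehat{\Gamma}_{\sm j}=l-1)$ in this proof, nor any cancellation of the leading $\sum_{s\leq\tau}\widehat{q}_{j_s}$ contribution; it simply bounds the per-index discrepancy uniformly in $l$ and $j$ (absorbing $\widehat{q}_j(1-\widehat{q}_j)\leq 1/4$ and $\widehat{q}_j\leq 1$) and then invokes ``the same argument as Lemma~\ref{lem:app}'' for the sum over $l$ and $s$. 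Notably, the obstacle you flag — that a term-by-term estimate leaves a $\min(\widehat{\mu}(\mb{x}),\tau)$-type prefactor on the blindness term — is real: the published proof never reconciles the $\widehat{q}_{j_s}$-weighted sum over $s=1,\dots,\tau$ with the prefactor-free third summand in the statement, so if you carried out your proposed cancellation you would be supplying a step the paper's own write-up glosses over rather than duplicating one it contains.
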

In contrast to the truncated refined normal approximation, the blind approximation algorithm significantly reduces the time complexity and enables GPU parallel execution. On the other hand, the blind approximation leads to an additional $\widehat{\sigma}^{-1}(\mb{x})$ in Lemma \ref{lem:blind_approx} compared with that of T-RNA in Lemma \ref{lem:app}, yet the error bound is still acceptable in practice.

Taken together, we summarize the foregoing computational schemes in Algorithm \ref{algo:rankdice}, and their inference (after obtaining conditional probabilities) computational complexity (worst-case) is indicated in Table \ref{tab:cc}. 
For the threshold-based framework, thresholding the estimated probabilities takes $O(d)$ time complexity. For the proposed method, Step 2 takes $O(d \log(d))$ based on the merge sort \citep{ajtai19830}, and Step 3 takes $O(d_0(\mb{x}) \widehat{\sigma}(\mb{x}) )$ based on T-RNA in \eqref{eqn:vol_opt_shrink} and \eqref{eqn:RNA}, and takes $O\big( d_0(\mb{x}) \log( d_0(\mb{x}) )\big)$ based on BA in \eqref{eqn:blind_approx}.

\begin{table*}[h]\centering
    \scalebox{.9}{
    \begin{tabular}{@{}clcccccccccccccccc@{}} \toprule
     \phantom{a} & SEG framework & \phantom{a} & Time & \phantom{a} & Time (well-separated) & \phantom{a} & Memory \\
    \midrule
    & Threshold-based SEG && $O(d)$ && $O(d)$ && $O(d)$  \\
    & RankDice (our) && $O( d \log(d) + d d_0(\mb{x}) )$ && $O( d \log(d) )$ && $O(d)$ \\
    & RankDice-RNA (our) && $O( d \log(d) + \widehat{\sigma}(\mb{x}) d_0(\mb{x}) )$ && $O(d\log(d))$ && $O(d)$ \\
    & RankDice-BA (our) && $O( d_0(\mb{x}) \log(d_0(\mb{x})) )$ && $O(d\log(d))$ && $O(d)$ \\
    \bottomrule
    \end{tabular}}
    \caption{Inference (prediction) computational complexity for the segmentation frameworks. Here ``Time'' denotes the time complexity, ``Memory'' denotes the amount of storage needed including probability outcomes, ``Time (well-separated)'' denotes the time complexity on \textit{well-separated segmentation} ($d_0(\mb{x}) \leq d_U$), and $d_0(\mb{x}) \leq d$ is the reduced dimension defined in \eqref{eqn:vol_opt_shrink}, $\widehat{\sigma}(\mb{x})$ is the standard deviation of $\widehat{\Gamma}(\mb{x})$ with at most an order of $O(\sqrt{d})$. Note that the time complexity of RankDice-BA can be further reduced by GPU implementation. See more detailed discussion in Section \ref{sec:compute}.}
    \label{tab:cc}
\end{table*}

\section{mDice-segmentation and mRankDice}
In this section, we discuss the extension of the proposed \emph{RankDice} framework to segmentation with multiclass/multilabel outcomes evaluated by the mDice metric. Overall, multiclass/multilabel segmentation differs from Dice-segmentation in a number of important aspects. First, a new evaluation metric called mDice is introduced. Second, multiclass/multilabel outcomes provide two different ways of probabilistic modeling. Third, whether (or not) to allow for overlapping segmentation results among different classes leads to different problem setups. These aspects will be discussed in detail in the following subsections.

\subsection{The mDice metric}
For multiclass/multilabel segmentation, $(\mb{X}, \mb{Y}_{\cdot 1}, \cdots, \mb{Y}_{\cdot K} )$ is available, where $\mb{X} \in \mathbb{R}^d$ represents a feature vector, $K$ is the total number of classes of interest, $\mb{Y}_{\cdot k} \in \{0,1\}^d$ is the true feature-wise segmentation label for the $k$-th class, where $Y_{jk} = 1$ indicates that the $j$-th feature $X_j$ is segmented under the $k$-th class, and $I(\mb{Y}_{\cdot k}) = \big\{ j: Y_{jk} = 1; \text{ for } j = 1, \cdots, d \big\}$ is the class-specific index set for all segmented features.  

On this ground, a class-specific segmentation operator $\pmb{\Delta}_k: \mathbb{R}^d \to \{0,1\}^d (k=1, \cdots, K)$ is introduced, where $\Delta_{jk}(\mb{x}) \in \{0,1\}$ is the predicted segmentation for the class $k$ of the $j$-th feature, and $I(\pmb{\Delta}_k(\mb{X})) = \{j: \Delta_{jk}(\mb{X}) = 1; \text{ for } j = 1, \cdots, d \}$ is the class-specific index set of features for the predicted segmentation. Then, given a segmentation operator $\pmb{\Delta} = (\pmb{\Delta}_1, \cdots, \pmb{\Delta}_K)$, the multi-Dice (mDice) metric is defined as:
\begin{align}
    \mDice_\gamma( \pmb{\Delta} ) & = \sum_{k=1}^K \alpha_k {\Dice}_{\gamma, k}(\pmb{\Delta}_k) \nonumber \\ 
    & = \sum_{k=1}^K \alpha_k \mathbb{E} \Big( \frac{2 \big| I(\mb{Y}_{\cdot k}) \cap I(\pmb{\Delta}_k(\mb{X})) \big| + \gamma }{ | I(\mb{Y}_{\cdot k}) | + | I(\pmb{\Delta}_k(\mb{X})) | + \gamma } \Big) = \sum_{k=1}^K \alpha_k \mathbb{E} \Big( \frac{2 \mb{Y}_{\cdot k}^\intercal \pmb{\Delta}_k(\mb{X}) + \gamma }{ \| \mb{Y}_{\cdot k} \|_1 + \| \pmb{\Delta}_k(\mb{X}) \|_1 + \gamma } \Big),
    \label{eqn:mdice_loss}
\end{align}
where ${\Dice}_{\gamma, k}(\cdot)$ is the Dice metric under the $k$-th class, $\pmb{\alpha} = (\alpha_1, \cdots, \alpha_K)^\intercal \geq \mb{0}_K$ is a weight vector for classes with $\| \pmb{\alpha} \|_1 = 1$. For example, $\alpha_k = 1/K$ yields that each class has the same contribution to segmentation performance. More generally, $\pmb{\alpha} \geq \mb{0}_K$ can be a custom weight vector indicating the relative importance of segmentation classes. In practice, given a new instance $(\mb{x}, \mb{y})$, the weight is an average over classes excluding those are not present and not predicted, that is,
\begin{equation}
    \label{eqn:Dice_weight}
    \alpha_k = 0, \text{ if } \| \mb{y}_{\cdot k} \|_1 = \| \pmb{\Delta}_k(\mb{x}) \|_1 = 0; \qquad \alpha_k = \frac{1}{\sum_{k=1}^K \mb{1}(\| \mb{y}_{\cdot k} \|_1 + \| \pmb{\Delta}_k(\mb{x}) \|_1 > 0  ) }, \text{ otherwise}.
\end{equation}

Following our convention, we shall call multiclass/multilabel segmentation with respect to the mDice metric as ``mDice-segmentation''. As indicated in Figure \ref{fig:mDice-frameworks}, unlike Dice-segmentation, mDice-segmentation provides more flexibility in probabilistic modeling (multiclass or multilabel) and the decision-making in prediction (taking argmax or thresholding at 0.5), resulting in different operating losses and the construction of predictive functions. 

\subsection{Multilabel/multiclass outcomes}
In this section, we describe two probabilistic models (multilabel or multiclass) of $\mb{Y}_{j} | \mb{X}$, where $\mb{Y}_{j} = (Y_{j1}, \cdots, Y_{jK})^\intercal$ is the true label for the $j$-th feature. 


For multilabel modeling, we assume each feature could be assigned to multiple classes, that is, $\| \mb{Y}_{j} \|_1 \in \{0, \cdots, K\}$, for $j = 1, \cdots, d$. As a result, the index sets of segmented features may overlap among classes. In this case, we formulate and estimate $q_{jk}(\mb{x})$ under a multilabel probabilistic model. For example, for deep learning models, we use the sigmoid function as the output layer activation function of a neural network with the \textit{binary cross-entropy} loss. 

For multiclass modeling, each feature is assigned to one and only one class, that is, $\| \mb{Y}_{j} \|_1 = 1$ for $j = 1, \cdots, d$. An instance is the panoptic annotation in image segmentation.
In this case, we formulate and estimate $\mb{q}_j(\mb{x}) = (q_{j1}(\mb{x}), \cdots, q_{jK}(\mb{x}))^\intercal$ under a multiclass model with additional sum-to-one constraints $\| \mb{q}_j(\mb{x}) \|_1 = 1$ for $j=1, \cdots, d$ and $\mb{x} \in \mathbb{R}^d$. Specifically, for deep learning models, we use the softmax function as the output layer activation function of a neural network, which automatically enforces the sum-to-one constraints. Correspondingly, a multiclass loss is used as an operating loss, including the \textit{multiclass cross-entropy}. Note that the Dice-approximating losses, such as the soft-Dice loss, can be adopted into the multilabel/multiclass modeling.

In literature, once the estimation is done, the predicted segmentation is produced by taking \textit{argmax} or \textit{thresholding} at 0.5 on the estimated probabilities. Indeed, \textit{argmax} and \textit{thresholding} are inspired by the decision-making in multiclass and multilabel classification, respectively. In segmentation, it is possible to attempt ad-hoc combinations, such as multiclass modeling + \textit{thresholding}, and multilabel modeling + \textit{argmax}.
The major purpose of \textit{argmax} is to provide \textit{non-overlapping} prediction (e.g., the panoptic prediction in image segmentation). We discuss overlapping/non-overlapping segmentation in the next section.
 
\begin{figure}[h]
    \centering
    \includegraphics[scale=0.40]{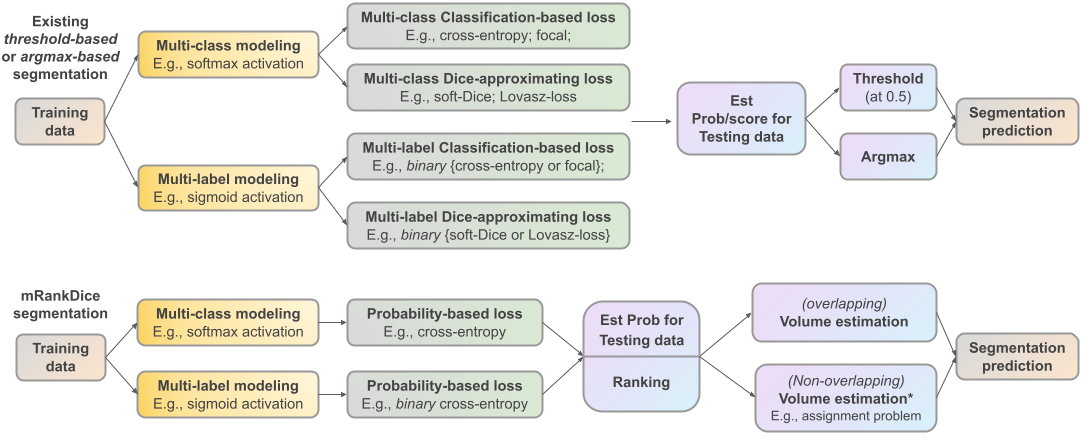}
    \caption{The existing and the proposed frameworks under multiclass/multilabel modeling. The upper panel is \textit{threshold-/argmax-based segmentation}, and the lower panel is the proposed \textit{mRankDice} framework.}
    \label{fig:mDice-frameworks} 
\end{figure}

\subsection{Overlapping or non-overlapping mDice-segmentation}
Specifically, whether (or not) to allow for overlapping results among distinct classes leads to different decision-making procedures, namely overlapping/non-overlapping mDice-segmentation:
\begin{align}
    \label{eqn:overlap_mDice}
    \text{(Overlapping)} \ \ \argmax_{ \pmb{\Delta} } \ \mDice_\gamma(\pmb{\Delta}), \quad \text{(Non-overlapping)} \ \ \argmax_{ \pmb{\Delta} } \ \mDice_\gamma(\pmb{\Delta}), \quad \sum_{k=1}^K \pmb{\Delta}_k(\cdot) = \mb{1}_d.
\end{align}
In the overlapping setting, there is no restriction on a segmentation operator, thus the predicted segmentation for different classes may overlap. On the other hand, the overlapping is ruled out in non-overlapping formulation due the additional sum-to-one constraint. Lemma \ref{lem:overlapping} presents the Bayes rule for overlapping segmentation, yielding that mDice-segmentation is reduced to class-specific Dice-segmentation subproblems if overlapping is allowed.

\begin{lemma}[The Bayes rule for overlapping mDice-segmentation]
    \label{lem:overlapping}
    An overlapping (allowing) segmentation operator $\pmb{\Delta}^*$ is a global maximizer of $\mDice_\gamma(\pmb{\Delta})$ if and only if $\pmb{\Delta}^*_k$ is the Bayes rule (global maximizer) of ${\Dice}_{\gamma,k}(\pmb{\Delta}_k)$ on $(\mb{X}, \mb{Y}_{\cdot k})$ for all $k \in \{ 1 \leq k \leq K: \alpha_k > 0 \}$.
\end{lemma}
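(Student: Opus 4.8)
The plan is to exploit the additive, \emph{separable} structure of the mDice metric in \eqref{eqn:mdice_loss}. Because overlapping is allowed, the feasible set imposes no constraint coupling the distinct class-specific operators $\pmb{\Delta}_1, \cdots, \pmb{\Delta}_K$; each $\pmb{\Delta}_k$ may be chosen freely and independently. Since $\mDice_\gamma(\pmb{\Delta}) = \sum_{k=1}^K \alpha_k \Dice_{\gamma,k}(\pmb{\Delta}_k)$ with $\alpha_k \geq 0$, and the $k$-th summand depends on $\pmb{\Delta}$ only through $\pmb{\Delta}_k$, maximizing the sum reduces to maximizing each weighted term separately. I would prove both directions of the equivalence from this single observation, treating $\pmb{\alpha}$ as a fixed non-negative weight vector as in the statement.

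For the ``if'' direction, suppose each $\pmb{\Delta}_k^*$ is a global maximizer of $\Dice_{\gamma,k}$ for every $k$ with $\alpha_k > 0$. Then for any feasible $\pmb{\Delta}$, the termwise inequality $\alpha_k \Dice_{\gamma,k}(\pmb{\Delta}_k) \leq \alpha_k \Dice_{\gamma,k}(\pmb{\Delta}_k^*)$ holds for every $k$: it is an equality for $k$ with $\alpha_k = 0$, and follows from $\alpha_k > 0$ together with optimality of $\pmb{\Delta}_k^*$ otherwise. Summing over $k$ yields $\mDice_\gamma(\pmb{\Delta}) \leq \mDice_\gamma(\pmb{\Delta}^*)$, so $\pmb{\Delta}^*$ is a global maximizer.

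For the ``only if'' direction, I would argue by contraposition. Suppose $\pmb{\Delta}^*$ is a global maximizer but, for some $k_0$ with $\alpha_{k_0} > 0$, the component $\pmb{\Delta}_{k_0}^*$ is \emph{not} a global maximizer of $\Dice_{\gamma, k_0}$; let $\pmb{\Delta}_{k_0}'$ strictly improve it. Because overlapping is permitted, I may form a new feasible operator $\pmb{\Delta}'$ that agrees with $\pmb{\Delta}^*$ in every coordinate $k \neq k_0$ and replaces only the $k_0$-th component by $\pmb{\Delta}_{k_0}'$. Then all summands with $k \neq k_0$ cancel in the difference, leaving $\mDice_\gamma(\pmb{\Delta}') - \mDice_\gamma(\pmb{\Delta}^*) = \alpha_{k_0}\big(\Dice_{\gamma,k_0}(\pmb{\Delta}_{k_0}') - \Dice_{\gamma,k_0}(\pmb{\Delta}_{k_0}^*)\big) > 0$, contradicting optimality of $\pmb{\Delta}^*$. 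Existence and the explicit form of each class-specific maximizer are then supplied by applying Theorem \ref{thm:Dice_bayes} to the pair $(\mb{X}, \mb{Y}_{\cdot k})$.

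There is no deep obstacle here; the result is a clean consequence of separability. The only points requiring care are (i) the single-coordinate replacement used in the ``only if'' direction, which is legitimate \emph{precisely} because the overlapping formulation imposes no sum-to-one (or other) constraint linking the $\pmb{\Delta}_k$ --- this is exactly the step that would fail in the non-overlapping case of \eqref{eqn:overlap_mDice} --- and (ii) the treatment of classes with $\alpha_k = 0$, which contribute nothing to $\mDice_\gamma$ and hence impose no condition, consistent with the statement restricting attention to $\{k : \alpha_k > 0\}$.
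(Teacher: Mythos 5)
Your proposal is correct and follows essentially the same route as the paper's own proof: the necessity direction by summing the termwise inequalities $\alpha_k \Dice_{\gamma,k}(\pmb{\Delta}_k) \leq \alpha_k \Dice_{\gamma,k}(\pmb{\Delta}^*_k)$ over $k \in \{k : \alpha_k > 0\}$, and the sufficiency direction by contradiction via a single-coordinate replacement of the suboptimal component $\pmb{\Delta}^*_{k_0}$, which is feasible precisely because overlapping imposes no coupling constraint. Your explicit remarks on the $\alpha_k = 0$ classes and on why the swap would fail under the non-overlapping sum-to-one constraint are sound and consistent with the paper.
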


Therefore, it suffices to consider Dice-segmentation of each class separately in overlapping mDice-segmentation, and the proposed \textit{RankDice} is readily extended to \textit{mRankDice}; see Section \ref{sec:mRankDice}. 

Next, we investigate the Bayes rule for non-overlapping mDice-segmentation. To proceed, we denote $\pmb{\Delta}^*$ as the Bayes rule (global maximizer) of non-overlapping mDice-segmentation in \eqref{eqn:overlap_mDice}, and $\pmb{\tau}^*(\cdot)$ as the volume function of the Bayes segmentation rule:
$
\pmb{\tau}^*(\mb{x}) = (\tau^*_1(\mb{x}), \dots, \tau^*_K(\mb{x}))^\intercal = ( \| \pmb{\Delta}^*_1(\mb{x}) \|_1, \cdots, \| \pmb{\Delta}^*_K(\mb{x}) \|_1 )^\intercal.
$
\begin{lemma}
    \label{lem:nonoverlap_bayes_rule}
    Suppose $\pmb{\tau}^*(\cdot)$ is pre-known, then solving the Bayes rule for non-overlapping mDice-segmentation in \eqref{eqn:overlap_mDice} is equivalent to an assignment problem.
\end{lemma}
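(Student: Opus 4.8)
The plan is to fix an input $\mb{x}$ and maximize the conditional mDice pointwise, thereby turning the functional maximization in \eqref{eqn:overlap_mDice} into a combinatorial problem over binary matrices $\pmb{\Delta}(\mb{x}) = (\Delta_{jk}(\mb{x}))_{j \le d,\, k \le K}$. Writing $D_k(\pmb{\Delta}_k(\mb{x}); \mb{x}) = \mathbb{E}\big( (2\mb{Y}_{\cdot k}^\intercal \pmb{\Delta}_k(\mb{x}) + \gamma) / (\| \mb{Y}_{\cdot k} \|_1 + \| \pmb{\Delta}_k(\mb{x}) \|_1 + \gamma) \mid \mb{X} = \mb{x} \big)$ for the conditional class-$k$ Dice, we have $\mDice_\gamma(\pmb{\Delta}) = \mathbb{E}_{\mb{X}} \sum_{k=1}^K \alpha_k D_k(\pmb{\Delta}_k(\mb{X}); \mb{X})$, so it suffices to maximize $\sum_{k=1}^K \alpha_k D_k(\pmb{\Delta}_k(\mb{x}); \mb{x})$ at each $\mb{x}$. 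The non-overlapping constraint $\sum_{k=1}^K \pmb{\Delta}_k(\mb{x}) = \mb{1}_d$ reads $\sum_{k=1}^K \Delta_{jk}(\mb{x}) = 1$ for every $j$, and since the volumes are pre-known it also imposes $\sum_{j=1}^d \Delta_{jk}(\mb{x}) = \tau^*_k(\mb{x})$, where summing over $k$ and using non-overlapping yields the feasibility identity $\sum_{k=1}^K \tau^*_k(\mb{x}) = d$.

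The key step is a per-pixel linear decomposition of each class-specific term. Exactly as in the derivation of Theorem \ref{thm:Dice_bayes}, but now for an \emph{arbitrary} selection set $S_k = I(\pmb{\Delta}_k(\mb{x}))$ of fixed cardinality $\tau^*_k(\mb{x})$ rather than the top-$\tau$ set, the feature-wise independence $\mb{Y}_j \indep \mb{Y}_{j'} \mid \mb{X}$ (for $j \neq j'$) allows me to condition on $Y_{jk} = 1$ and split numerator and denominator, giving
\[
D_k(\pmb{\Delta}_k(\mb{x}); \mb{x}) = \sum_{j \in S_k} c_{jk}(\mb{x}) + r_k(\mb{x}),
\]
where $c_{jk}(\mb{x}) = p_{jk}(\mb{x}) \sum_{l=0}^{d-1} \frac{2}{\tau^*_k(\mb{x}) + l + \gamma + 1} \mathbb{P}\big( \Gamma^{(k)}_{\sm j}(\mb{x}) = l \big)$, $r_k(\mb{x}) = \sum_{l=0}^{d} \frac{\gamma}{\tau^*_k(\mb{x}) + l + \gamma} \mathbb{P}\big( \Gamma^{(k)}(\mb{x}) = l \big)$, and $\Gamma^{(k)}(\mb{x})$, $\Gamma^{(k)}_{\sm j}(\mb{x})$ are the class-$k$ Poisson-binomial variables induced by $p_{jk}(\mb{x}) = \mathbb{P}(Y_{jk} = 1 \mid \mb{X} = \mb{x})$. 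The crucial point is that, once $\tau^*_k(\mb{x})$ is fixed, each coefficient $c_{jk}(\mb{x})$ depends only on the true conditional probabilities and not on which other pixels are selected, because $\Gamma^{(k)}_{\sm j}(\mb{x})$ is governed by the data distribution and factors out of the expectation by independence. Hence the objective is linear in the entries $\Delta_{jk}(\mb{x})$, and $\sum_k \alpha_k r_k(\mb{x})$ is an additive constant.

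Collecting terms, maximizing the conditional mDice is equivalent to $\argmax \sum_{j=1}^d \sum_{k=1}^K w_{jk}(\mb{x}) \Delta_{jk}(\mb{x})$ with weights $w_{jk}(\mb{x}) = \alpha_k c_{jk}(\mb{x})$, subject to $\sum_{k=1}^K \Delta_{jk}(\mb{x}) = 1$, $\sum_{j=1}^d \Delta_{jk}(\mb{x}) = \tau^*_k(\mb{x})$, and $\Delta_{jk}(\mb{x}) \in \{0,1\}$. This is a balanced transportation problem with unit pixel supplies and class demands $\tau^*_k(\mb{x})$; splitting each class $k$ into $\tau^*_k(\mb{x})$ identical slots (legitimate since $\sum_k \tau^*_k(\mb{x}) = d$ gives exactly $d$ slots) and assigning pixel $j$ to any slot of class $k$ with the common weight $w_{jk}(\mb{x})$ produces a square bipartite matching, i.e., a standard linear assignment problem whose optimal permutation recovers $\pmb{\Delta}^*(\mb{x})$.

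The main obstacle I anticipate is justifying the per-pixel decomposition for arbitrary selection sets: the formula in Theorem \ref{thm:Dice_bayes} was obtained for the top-$\tau$ ranking, so I must verify that the same algebra (conditioning on $Y_{jk} = 1$ and using $\Gamma^{(k)}_{\sm j}(\mb{x}) \indep Y_{jk}$ to decouple the numerator from the denominator) goes through verbatim for any index set of the prescribed size, and that the residual $r_k(\mb{x})$ genuinely does not interact with the assignment. Once linearity in $\Delta_{jk}(\mb{x})$ is secured, the identification with the assignment problem is essentially definitional, the only remaining bookkeeping being the feasibility condition $\sum_{k=1}^K \tau^*_k(\mb{x}) = d$, which is inherited directly from the non-overlapping constraint.
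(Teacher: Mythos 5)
Your proposal is correct and follows essentially the same route as the paper's proof: pointwise maximization at each $\mb{x}$, the additive per-feature decomposition of the conditional Dice inherited from the proof of Theorem \ref{thm:Dice_bayes} (which, as you rightly note, was derived there for an \emph{arbitrary} selection set $\mb{v}$ with $\|\mb{v}\|_1 = \tau$, so your anticipated obstacle is already resolved by that algebra), the observation that the $\gamma$-residual terms $r_k(\mb{x})$ are constants once $\pmb{\tau}^*(\cdot)$ is fixed, and the resulting linear objective in $\Delta_{jk}(\mb{x})$ under the row constraints $\sum_{k} \Delta_{jk}(\mb{x}) = 1$ and column constraints $\sum_{j} \Delta_{jk}(\mb{x}) = \tau^*_k(\mb{x})$. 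The only difference is cosmetic: you additionally spell out the slot-splitting reduction from this transportation form to a square bipartite assignment, a bookkeeping step the paper leaves implicit by directly identifying its formulation \eqref{eqn:assignment} with the assignment problem of \cite{kuhn1955hungarian}.
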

As indicated in Lemma \ref{lem:nonoverlap_bayes_rule}, even when the optimal volume function is pre-given, solving the Bayes rule for non-overlapping segmentation is nontrivial. For example, the most successful assignment algorithms, including the Hungarian method \citep{kuhn1955hungarian,edmonds1972theoretical, tomizawa1971some} and its variants Jonker-Volgenant algorithm \citep{crouse2016implementing}, generally achieves an $O(d^3)$ running time complexity in our content, which is time-consuming for high-dimensional segmentation. In sharp contrast, for the overlapping case, when $\tau^*(\mb{x})$ is given, the Bayes rule is simply ranking all the conditional probabilities with $O(d \log(d))$. Moreover, when $\pmb{\tau}^*(\mb{x})$ is unknown, the optimization for non-overlapping segmentation becomes a nonlinear integer optimization which is NP-hard in general \citep{murty1985some,d2020lower}. Therefore, a fast $O(d \log(d))$ greedy approximation algorithm is more feasible in practical implementation. We leave pursuing this topic as future work.  

Next, we summarize the proposed \textit{mRankDice} framework under three scenarios.

\subsection{mRankDice}
\label{sec:mRankDice}
In this section, we present the proposed \textit{mRankDice} framework for mDice-segmentation. Before we proceed, we  highlight the different roles of multiclass/multilabel modeling and the overlapping/non-overlapping constraint. Multiclass/multilabel modeling determines the probabilistic models (say the softmax or the sigmoid activation in a neural network) and an operating loss in probability estimation (say the cross-entropy or the binary cross-entropy). Meanwhile, the overlapping/non-overlapping constraint affects the segmentation prediction after the probabilities are estimated. 

Therefore, we consider following segmentation three cases: ``multilabel + overlapping'',  ``multiclass + overlapping'', and ``multilabel/multiclass + non-overlapping''.

\noindent \textbf{mRankDice (multilabel + overlapping segmentation)} is a straightforward extension of \textit{RankDice} in  Dice-segmentation (inspired by Lemma \ref{lem:overlapping}). Given a training dataset $(\mb{x}_i, \mb{y}_{i, 1:d, 1}, \cdots, \mb{y}_{i, 1:d, K})_{i=1}^n$, with the same manner, the conditional probability function is estimated under a multilabel logistic regression (the binary cross-entropy loss):
\begin{equation}
    \label{eqn:multi_label_prob_est}
    \widehat{\mb Q}(\mb{x}) = \argmin_{\mb{Q} \in \mathcal{Q}}  \sum_{i=1}^n \sum_{j=1}^d \sum_{k=1}^K \Big( y_{ijk} \log \big( q_{jk}(\mb{x}_i) \big) + (1 - y_{ijk}) \log \big( 1 - q_{jk}(\mb{x}_i) \big) \Big) + \lambda \| \mb{Q} \|^2,
    \end{equation}
where $\mb{Q} = (q_{jk}): \mathbb{R}^d \to [0,1]^{d \times K}$ is a matrix function, and $q_{jk}(\mb{x})$ is a candidate estimator of $p_{jk}(\mb{x})$. Then, given a new instance $\mb{x}$, the class-specific segmentation  $\widehat{\pmb{\Delta}}_k(\mb{x})$ is generated based on \textbf{Steps 2-3} in Section \ref{sec:binary_rankdice} with the estimated conditional probabilities $\widehat{\mb{Q}}_k(\mb{x})$ (the $k$-th column of $\widehat{\mb{Q}}(\mb{x})$). We summarize the foregoing computational scheme in Algorithm \ref{algo:mrankdice_ol}.

\noindent \textbf{mRankDice (multiclass + overlapping segmentation)} yields a different probability estimation procedure, where the conditional probability function is estimated under a multiclass logistic regression (the multiclass cross-entropy loss):
\begin{equation}
    \label{eqn:multi_class_prob_est}
    \widehat{\mb Q}(\mb{x}) = \argmin_{\mb{Q} \in \mathcal{Q}}  \sum_{i=1}^n \sum_{j=1}^d \sum_{k=1}^K  y_{ijk} \log \big( q_{jk}(\mb{x}_i) \big) + \lambda \| \mb{Q} \|^2, \ \ \text{s.t.} \ \sum_{k=1}^K q_{jk}(\cdot) = 1; \text{ for } j=1,\cdots d,
\end{equation}
where $\mb{Q} = (q_{jk}): \mathbb{R}^d \to [0,1]^{d \times K}$ is a matrix function, and $\mb{q}_{j}(\mb{x})$ is a candidate estimator of $\mb{p}_j(\mb{x})$. Although the probability estimation \eqref{eqn:multi_class_prob_est} differs from \eqref{eqn:multi_label_prob_est}, the downstream ranking and volume estimation remain the same according to Lemma \ref{lem:overlapping}. We also summarize the foregoing computational scheme in Algorithm \ref{algo:mrankdice_ol}.

\begin{algorithm}
    \SetKwInOut{Input}{Input}
    \SetKwInOut{Output}{Output}
    \Input{Training set: $(\mb{x}_i, \mb{y}_{i,1:d,1}, \cdots, \mb{y}_{i,1:d,K})_{i=1}^n$; A new testing instance: $\mb{x}$}
    \Output{The predicted segmentation for the testing instance $\widehat{\pmb{\Delta}}(\mb{x})$ }
    \If{multilabel outcome}
    {
    \textbf{Multilabel Prob Est}. Estimate the prob function $\widehat{\mb{Q}}$ via \eqref{eqn:multi_label_prob_est}\;
    }
    \If{multiclass outcome}
    {
    \textbf{Multiclass Prob Est}. Estimate the prob function $\widehat{\mb Q}$ via \eqref{eqn:multi_class_prob_est}\;
    }
    
    \For{$k=1, \cdots, K$}
    {
        \textbf{Class-specific RankDice}. Obtain $\widehat{\pmb{\Delta}}_k(\mb{x})$ from Lines 2-22 in Algorithm \ref{algo:rankdice} based on the estimated prob $\widehat{\mb{Q}}_k(\mb{x})$.
    }
    \Return{The predicted segmentation $ \widehat{\pmb{\Delta}}(\mb{x}) = (\widehat{\pmb{\Delta}}_{1}(\mb{x}), \cdots, \widehat{\pmb{\Delta}}_{K}(\mb{x})) $}

    \caption{\textbf{mRankDice} for overlapping mDice-segmentation.}
    \label{algo:mrankdice_ol}
\end{algorithm}

\noindent \textbf{mRankDice (multiclass/multilabel + non-overlapping segmentation)} is a quite difficult scenario for developing \textit{mRankDice} from \textit{RankDice} in Dice-segmentation. As indicated in Lemma \ref{lem:nonoverlap_bayes_rule}, searching for an optimal non-overlapping mDice-segmentation operator is NP-hard in general. We leave pursuing this topic as future work.  

\section{Theory}
\label{sec:theory}
In this section, we establish a theoretical foundation of segmentation, including the concept of Dice-calibration, the excess risk of the Dice metric, and the rate of convergence with respect to the excess risk of the proposed \textit{RankDice} framework. For illustration, we focus on Dice-segmentation, and the results can be extended to mDice-segmentation, and \textit{RankIoU} in terms of the IoU metric.

\subsection{Dice-calibrated segmentation}
\label{sec:Dice-calibrated}

In Theorem \ref{thm:Dice_bayes}, the Bayes rule of Dice-segmentation is obtained. To carry this agenda further, we propose concept of ``Dice-calibrated'', following the same philosophy of Fisher consistency or classification-calibration in \citet{lin2004note,zhang2004statistical,bartlett2006convexity}. Intuitively, Dice-calibration is the weakest possible condition on a consistent segmentation method with respect to the Dice metric, this is, at population level, a method ultimately searches for the Bayes rule that achieves the optimal Dice metric. Figure \ref{fig:theory} indicates the overview and logical relations among the theoretic results in this section.

\begin{figure}[h]
    \centering
    \includegraphics[scale=.45]{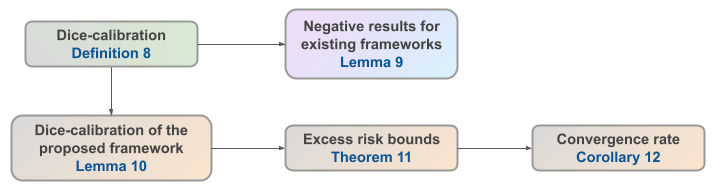}
    \caption{Flowchart of the theory for \textit{RankDice} in Section \ref{sec:Dice-calibrated}, indicating the logical relations among the developed theorems. }
    \label{fig:theory}
\end{figure}


To proceed, let $\mathcal P$ be the class of all probability measures $\mathbb{P}_{\mb{X},\mb{Y}}$ on (Borel) measurable subsets of $\mathbb{R}^d \times \{0,1\}^d$ such that $(\mb{X},\mb{Y}) \sim \mathbb{P}_{\mb{X},\mb{Y}}$, $(\mb{X},\mb{Y})\in \mathbb{R}^d\times\{0,1\}^d$, and $Y_{i} \perp Y_{j} | \mb{X}$ for $i\neq j$.
Denote $\mathcal{D}$ as the class of all (Borel) measurable segmentation operators $\pmb{\delta}: \mb{x} \in \mathbb{R}^d \to \pmb{\delta}(\mb{x}) \in \{0,1\}^d$. The definition of Dice-calibrated segmentation is given as follows.

\begin{definition}[Dice-calibrated segmentation]
    \label{def:Dice-calibrated}
    Given $\gamma \geq 0$, a (population) segmentation method $\mathcal{M}_\gamma: \mathcal{P} \to \mathcal{D}$ is Dice-calibrated if, for any $\mathbb{P}_{\mb{X}, \mb{Y}} \in \mathcal{P}$,
    $$
    \Dice_\gamma\big( \mathcal{M}_\gamma(\mathbb{P}_{\mb{X}, \mb{Y}} ) \big) = \Dice_\gamma( \pmb{\delta}^*),
    $$
    where $\pmb{\delta}^*$ is the Bayes rule for Dice-segmentation defined in Theorem \ref{thm:Dice_bayes}.
\end{definition}

Now, we show that most existing loss functions, under the framework \eqref{eqn:existing_framework}, are not Dice-calibrated.

\begin{lemma}
    \label{lem:Dice-calibrated} 
    Given a loss function $l(\cdot, \cdot)$, define $\mathcal{M}_\gamma(\mathbb{P}_{\mb{X}, \mb{Y}})$ under the framework \eqref{eqn:existing_framework}, that is,
    \begin{align*}
        \mathcal{M}_\gamma(\mathbb{P}_{\mb{X}, \mb{Y}})(\mb{x}) := \mb{1}( \widetilde{\mb{q}}_l(\mb{x}) \geq 0.5 ), \quad \widetilde{\mb{q}}_l = \argmin_{\mb{q}} \ \mathbb{E}\big(l(\mb{Y}, \mb{q}(\mb{X})) \big).
    \end{align*}
    Then, $\mathcal{M}_\gamma(\mathbb{P}_{\mb{X}, \mb{Y}})$ is not Dice-calibrated for $\gamma = 0$ when $l(\cdot, \cdot)$ is 
    any classification-calibrated loss, including the cross-entropy loss and the focal loss.
\end{lemma}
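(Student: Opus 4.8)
The plan is to work at the population level with a degenerate design: I take $\mb{X}$ to be a point mass at some $\mb{x}_0$, so that $p_j := p_j(\mb{x}_0) = \mathbb{P}(Y_j = 1)$ are fixed and the $Y_j$ are independent Bernoulli. A segmentation rule then collapses to a single vector $\pmb{\delta} \in \{0,1\}^d$, and for $\gamma = 0$,
$$
\Dice_0(\pmb{\delta}) = \mathbb{E}_{\mb{Y}}\Big( \frac{2\,\mb{Y}^\intercal \pmb{\delta}}{ \|\mb{Y}\|_1 + \|\pmb{\delta}\|_1 } \Big),
$$
with the convention $0/0 = 1$ (the $\gamma \downarrow 0$ limit, i.e.\ predicting the empty set on an empty ground truth is scored as perfect). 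The first step is to identify the population minimizer $\widetilde{\mb{q}}_l$ and the induced rule $\mb{1}(\widetilde{\mb{q}}_l \geq 0.5)$ for each loss. For the cross-entropy, the focal loss, and more generally any classification-calibrated loss, classification-calibration \citep{zhang2004statistical,bartlett2006convexity} guarantees that the per-pixel minimizer satisfies $\mb{1}(\widetilde{q}_{l,j} \geq 0.5) = \mb{1}(p_j \geq 0.5)$; hence the induced segmentation is the fixed-threshold rule $\pmb{\delta}^{\text{th}} = (\mb{1}(p_1 \geq 0.5), \cdots, \mb{1}(p_d \geq 0.5))^\intercal$, fully decoupled across pixels.

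With this in hand, the second step is an explicit counterexample for the classification-calibrated family. I would take $d = 2$ and $p_1, p_2$ slightly below $1/2$ (e.g.\ $p_1 = 0.39$, $p_2 = 0.41$, which are distinct as required). Then $\pmb{\delta}^{\text{th}} = \mb{0}$, whereas a direct computation (or Theorem \ref{thm:Dice_bayes}) shows the Bayes rule segments both pixels, $\pmb{\delta}^* = \mb{1}$, because the coupling in the Dice denominator rewards predicting the moderately likely joint positives. The resulting gap $\Dice_0(\mb{1}) - \Dice_0(\mb{0}) > 0$ is strict; moreover it does not hinge on the $0/0$ convention, since $\|\mb{1}\|_1 = 2 > 0$ removes any indeterminacy on the predicted side, and under either convention the empty rule scores at most $(1-p)^2$, still strictly below $\Dice_0(\mb{1})$. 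This is exactly the phenomenon flagged in Remark \ref{rk:T}: the Bayes threshold $p_{j_{\tau^*}}$ is input-dependent and generically below $1/2$, so the fixed-$1/2$ threshold is suboptimal.

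The third step treats the soft-Dice loss, whose minimizer I would compute directly rather than via calibration. For the simplest witness, $d = 1$ with $p < 1/2$, the population risk is $1 - p\,\frac{2q}{1+q}$, strictly decreasing in $q \in [0,1]$, so $\widetilde{q} = 1$ and the induced rule is $\delta = 1$ irrespective of $p$; yet for $p < 1/2$ the Bayes rule is $\delta = 0$, giving $\Dice_0(0) = 1 - p > p = \Dice_0(1)$ under the natural convention. An analogous symmetric example in $d = 2$ with small $p$ shows soft-Dice over-segments to $\mb{1}$ while the Bayes rule is $\mb{0}$. In each case the gap is strict, establishing that soft-Dice is not Dice-calibrated for $\gamma = 0$.

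The main obstacle is the soft-Dice step: its population minimizer solves a coupled, non-convex optimization in $\mb{q}$ that is not available in closed form in general (and, as a computation with a guaranteed-positive pixel shows, soft-Dice does \emph{not} simply saturate every coordinate to $1$ once pixels interact). My strategy to circumvent this is to restrict the witness distributions to degenerate designs with either $d = 1$ or symmetric probabilities, where symmetry forces $\widetilde{q}_j \equiv c$ and reduces the problem to a one-dimensional monotone optimization solvable explicitly. Two remaining points are routine: distinctness of the $p_j$ is obtained by an arbitrarily small perturbation, with the strict Dice gap preserved by continuity of $\Dice_0$ in $\mb{p}$; and the dependence on the $0/0$ convention is isolated to the soft-Dice example, which I would flag explicitly as the standard $\gamma \downarrow 0$ limit.
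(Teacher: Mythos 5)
Your treatment of the classification-calibrated family is correct and is essentially the paper's own argument: same $d=2$ construction with both probabilities just below $1/2$ (the paper uses $p_1 = 0.45$, $p_2 = 0.44$), same reduction of the Bayes rule via Theorem \ref{thm:Dice_bayes} to the criterion $\frac{1}{2}p_1(\mb{x}) - p_2(\mb{x}) + p_1(\mb{x})p_2(\mb{x}) \geq 0$ distinguishing $(1,0)^\intercal$ from $(1,1)^\intercal$, and the same conclusion that the fixed-$0.5$-threshold rule outputs $(0,0)^\intercal$ while the Bayes rule is $(1,1)^\intercal$. Your extra check that this gap survives either convention for $0/0$ is a useful robustness observation, and the degenerate point-mass design is harmless since the paper also argues pointwise in $\mb{x}$.

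The soft-Dice half, however, has a genuine gap: your witnesses prove inconsistency only under your convention $0/0 = 1$, which is not the convention the lemma is stated under. The paper works with $0/0 \coloneqq 0$ (explicit in Lemma \ref{lem:app}, and implicit in Theorem \ref{thm:Dice_bayes}, whose $\gamma = 0$ volume objective equals $0$ at $\tau = 0$ and is strictly positive for every $\tau \geq 1$), so the Bayes rule of Definition \ref{def:Dice-calibrated} never predicts the empty set. Under that convention, your $d=1$ example collapses: for every $p > 0$ the Bayes rule is $\delta^* = 1$, which is exactly what soft-Dice's saturated minimizer $\widetilde{q} = 1$ produces, so there is no gap. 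Likewise for symmetric $d=2$ with $p_1 = p_2 = p < 1/2$: the criterion above reads $-\frac{1}{2}p + p^2 < 0$, so the Bayes rule is $(1,1)^\intercal$, again agreeing with the thresholded soft-Dice rule. Your strategy of restricting to $d=1$ or symmetric designs — adopted precisely to sidestep the coupled, non-closed-form soft-Dice minimization — is what traps you: under the paper's convention those are exactly the regimes where ``segment everything'' is Bayes optimal. The paper's way out is an asymmetric $d=2$ example, $p_1(\mb{x}) = 0.45$, $p_2(\mb{x}) = 0.4$, and it does not require the minimizer in closed form: it computes both partial derivatives of the population soft-Dice risk and shows they are $\leq 0$ throughout $[0,1]^2$ whenever $p_1 \geq p_2$ and $2p_2 - p_1 - p_1p_2 \geq 0$ (here $0.17 \geq 0$), forcing $\widetilde{\mb{q}}(\mb{x}) = (1,1)^\intercal$ and hence the thresholded rule $(1,1)^\intercal$, while $\frac{1}{2}p_1 - p_2 + p_1p_2 = 0.005 > 0$ makes the Bayes rule $(1,0)^\intercal$ — a counterexample that never touches the empty prediction at all. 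To repair your proof you would need an argument of this monotonicity type (or another asymmetric witness), not the $d=1$/symmetric reductions.
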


Lemma \ref{lem:Dice-calibrated} indicates that the existing framework \eqref{eqn:existing_framework} with most losses ultimately yields a suboptimal solution to Dice-segmentation, even if a ``classification-calibrated'' loss, such as the cross-entropy loss or the focal loss \citep{charoenphakdee2021focal}, is used. 
Meanwhile, as indicated in \citet{bertels2019optimization}, the optimization with the soft-Dice loss can introduce a volumetric bias for tasks with high inherent uncertainty. In sharp contrast, the proposed \textit{RankDice} method is Dice-calibrated (Lemma \ref{lem:RankDice-calibrated}) and its asymptotic convergence rate in terms of the Dice metric is provided in Theorem \ref{thm:risk_bound}.

To proceed, we give the definition of \textit{RankDice} at population level in Appendix \ref{sec:pop_rankdice}, which replaces the average in \eqref{eqn:prob_est} by the population expectation. Moreover, the cross-entropy loss in \eqref{eqn:prob_est} can be extended to an arbitrary \textit{strictly proper} loss \citep{gneiting2007strictly}. The most common strictly proper losses are the cross-entropy loss and the squared error loss. 

\begin{lemma}[Dice-calibrated]
    \label{lem:RankDice-calibrated}
    The proposed RankDice framework with a \textit{strictly proper} loss is Dice-calibrated.
\end{lemma}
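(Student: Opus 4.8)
The plan is to show that, at the population level, each of the three steps of \textit{RankDice} reproduces exactly the construction of the Bayes rule in Theorem \ref{thm:Dice_bayes}, so that the output attains the optimal Dice value. The engine of the argument is strict properness: it guarantees that Step 1 recovers the true conditional probabilities, after which Steps 2--3 become literally the ranking-and-volume construction of $\pmb{\delta}^*$.

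First I would analyze Step 1. For the population \textit{RankDice} of Appendix \ref{sec:pop_rankdice}, the probability function is $\widetilde{\mb{q}} = \argmin_{\mb{q}} \mathbb{E}\big( l(\mb{Y}, \mb{q}(\mb{X})) \big)$, where the minimization ranges over all (Borel) measurable $\mb{q}: \mathbb{R}^d \to [0,1]^d$, a class that contains the true $\mb{p}$. Conditioning on $\mb{X} = \mb{x}$, it suffices to minimize the pointwise conditional risk $\mathbb{E}\big( l(\mb{Y}, \mb{q}(\mb{x})) \mid \mb{X} = \mb{x} \big)$ over $\mb{q}(\mb{x}) \in [0,1]^d$. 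Under $\mathcal{P}$, the coordinates satisfy $Y_i \perp Y_j \mid \mb{X}$, so the conditional law of $\mb{Y} \mid \mb{X} = \mb{x}$ is the product of Bernoulli laws with means $p_j(\mb{x})$; this product-Bernoulli family is exactly parameterized by $\mb{q}(\mb{x})$. By the defining property of a \textit{strictly proper} loss, the expected score over this family is uniquely minimized at the true parameter, forcing $\widetilde{q}_j(\mb{x}) = p_j(\mb{x})$ for all $j$. Integrating over $\mb{X}$, we obtain $\widetilde{\mb{q}}(\mb{x}) = \mb{p}(\mb{x})$ for $\mathbb{P}_{\mb{X}}$-almost every $\mb{x}$.

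Next I would substitute $\widetilde{\mb{q}} = \mb{p}$ into Steps 2--3. Since $\widetilde{q}_j(\mb{x}) = p_j(\mb{x})$ almost everywhere, and since by the standing assumption the $p_j(\mb{x})$ are distinct with probability one, the decreasing order produced in Step 2 coincides with the ranking of the true conditional probabilities, so the index sets agree with $J_\tau(\mb{x})$. Consequently the Poisson--binomial variables $\widehat{\Gamma}(\mb{x})$ and $\widehat{\Gamma}_{\sm j}(\mb{x})$ built from $\widetilde{\mb{q}} = \mb{p}$ have the same distributions as $\Gamma(\mb{x})$ and $\Gamma_{\sm j}(\mb{x})$, whence the objective maximized in \eqref{eqn:vol_est} equals, term by term for every $\tau \in \{0,\cdots,d\}$, the objective in \eqref{eqn:vol_est_true_pro}. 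Therefore the estimated volume $\widehat{\tau}(\mb{x})$ is a maximizer of precisely the criterion whose maximizer defines $\tau^*(\mb{x})$, and the prediction \eqref{eqn:rankdice-pred} selects the top-$\widehat{\tau}(\mb{x})$ coordinates under the true probability order. By Theorem \ref{thm:Dice_bayes}, any such operator is a global maximizer of $\Dice_\gamma$, so $\Dice_\gamma\big( \mathcal{M}_\gamma(\mathbb{P}_{\mb{X},\mb{Y}}) \big) = \Dice_\gamma(\pmb{\delta}^*)$, which is the definition of Dice-calibration in Definition \ref{def:Dice-calibrated}.

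The main obstacle I anticipate is the handling of non-uniqueness: the $\argmax$ defining $\tau^*(\mb{x})$ (and hence $\widehat{\tau}(\mb{x})$) need not be a singleton, so I must argue that \emph{any} maximizer still delivers the optimal Dice value. This is settled by the ``if and only if'' characterization in Theorem \ref{thm:Dice_bayes}: every maximizer of the shared objective yields a global maximizer of $\Dice_\gamma$, so ties are harmless. A secondary technical care is that the null set $\{\mb{x}: \widetilde{\mb{q}}(\mb{x}) \neq \mb{p}(\mb{x})\}$ carries zero $\mathbb{P}_{\mb{X}}$-mass and hence does not affect $\Dice_\gamma$, which is an expectation over $\mb{X}$; and that the minimization in Step 1 is taken over a class rich enough to contain $\mb{p}$, as is standard in calibration analysis at the population level.
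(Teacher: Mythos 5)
Your proposal is correct and follows essentially the same route as the paper's own (very brief) proof: strict properness of the loss forces the population minimizer of Step 1 to coincide with the true conditional probability $\mb{p}$, after which Steps 2--3 reproduce exactly the construction of $\tau^*(\mb{x})$ and $\pmb{\delta}^*(\mb{x})$ in Theorem \ref{thm:Dice_bayes}, giving Dice-calibration by Definition \ref{def:Dice-calibrated}. Your additional care about argmax ties (handled by the ``if and only if'' characterization) and the $\mathbb{P}_{\mb{X}}$-null set where $\widetilde{\mb{q}} \neq \mb{p}$ fills in details the paper leaves implicit, but does not change the argument.
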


Next, we present an excess risk bound in terms of the Dice metric, that is, $\Dice(\pmb{\delta}^*)- \Dice(\widehat{\pmb{\delta}})$.

\begin{theorem}[Excess risk bounds]
    \label{thm:risk_bound}
    Given $\gamma \geq 0$, let $\widehat{\mb{q}}(\cdot)$ be an estimated probability of $\mb{p}(\cdot)$, and $\widehat{\pmb{\delta}}(\cdot)$ be the RankDice segmentation function defined in \eqref{eqn:rankdice-pred} based on $\widehat{\mb{q}}(\cdot)$, then
    \begin{equation}
        \label{eqn:risk_bound}
        \Dice_\gamma(\pmb{\delta}^*)- \Dice_\gamma(\widehat{\pmb{\delta}}) \leq C_1 \mathbb{E}_\mb{X} { \|\widehat{\mb{q}}(\mb{X}) - \mb{p}(\mb{X}) \|_1 },
    \end{equation}
    where $C_1 > 0$ is a universal constant depending only on $\gamma$. 
\end{theorem}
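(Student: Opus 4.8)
The plan is to reduce the excess risk to a pointwise (in $\mb{x}$) comparison and then control everything through a single $\ell_1$-Lipschitz estimate for a conditional Dice functional. For a probability vector $\mb{r}\in[0,1]^d$ and a fixed segmentation $\mb{u}\in\{0,1\}^d$, write
\[
W_\gamma(\mb{u};\mb{r}) = \mathbb{E}\Big(\frac{2\mb{Y}^\intercal\mb{u}+\gamma}{\|\mb{Y}\|_1+\|\mb{u}\|_1+\gamma}\Big),
\]
where $\mb{Y}$ has independent $\mathrm{Bernoulli}(r_j)$ coordinates. Under the conditional-independence assumption $Y_i\perp Y_j\mid\mb{X}$, the tower property gives $\Dice_\gamma(\pmb{\delta})=\mathbb{E}_{\mb{X}}\big[W_\gamma(\pmb{\delta}(\mb{X});\mb{p}(\mb{X}))\big]$ for \emph{every} $\pmb{\delta}$ (the true law of $\mb{Y}\mid\mb{X}$ is always $\mb{p}$), so it suffices to bound $W_\gamma(\pmb{\delta}^*(\mb{x});\mb{p}(\mb{x}))-W_\gamma(\widehat{\pmb{\delta}}(\mb{x});\mb{p}(\mb{x}))$ for each fixed $\mb{x}$ and then take $\mathbb{E}_{\mb{X}}$.

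The two segmentations entering this difference are both maximizers, but for different probability vectors: by the pointwise characterization in Theorem \ref{thm:Dice_bayes}, $\pmb{\delta}^*(\mb{x})=\argmax_{\mb{u}}W_\gamma(\mb{u};\mb{p}(\mb{x}))$, while the RankDice rule is, by construction (Steps 2--3, i.e.\ ranking $\widehat{\mb{q}}$ and taking the maximizer $\widehat{\tau}(\mb{x})$ of \eqref{eqn:vol_est}), exactly the plug-in Bayes rule $\widehat{\pmb{\delta}}(\mb{x})=\argmax_{\mb{u}}W_\gamma(\mb{u};\widehat{\mb{q}}(\mb{x}))$ --- this is Theorem \ref{thm:Dice_bayes} applied with $\widehat{\mb{q}}$ in place of $\mb{p}$. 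I would then insert the plug-in functional and split (arguments $\mb{x}$ suppressed)
\[
W_\gamma(\pmb{\delta}^*;\mb{p}) - W_\gamma(\widehat{\pmb{\delta}};\mb{p}) = \underbrace{\big[W_\gamma(\pmb{\delta}^*;\mb{p})-W_\gamma(\pmb{\delta}^*;\widehat{\mb{q}})\big]}_{(\mathrm{I})} + \underbrace{\big[W_\gamma(\pmb{\delta}^*;\widehat{\mb{q}})-W_\gamma(\widehat{\pmb{\delta}};\widehat{\mb{q}})\big]}_{(\mathrm{II})} + \underbrace{\big[W_\gamma(\widehat{\pmb{\delta}};\widehat{\mb{q}})-W_\gamma(\widehat{\pmb{\delta}};\mb{p})\big]}_{(\mathrm{III})}.
\]
The middle term satisfies $(\mathrm{II})\le 0$ since $\widehat{\pmb{\delta}}$ maximizes $W_\gamma(\cdot;\widehat{\mb{q}})$, leaving $(\mathrm{I})+(\mathrm{III})$, each of which compares $W_\gamma$ for a \emph{fixed} segmentation under $\mb{p}$ versus under $\widehat{\mb{q}}$.

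The heart of the argument is then the estimate that for any fixed $\mb{u}\in\{0,1\}^d$,
\[
\big|W_\gamma(\mb{u};\mb{r})-W_\gamma(\mb{u};\mb{r}')\big| \le C_\gamma\,\|\mb{r}-\mb{r}'\|_1 .
\]
I would prove this by a coordinate-wise interpolation (hybrid argument), changing $\mb{r}$ into $\mb{r}'$ one coordinate at a time and using that $W_\gamma(\mb{u};\cdot)$ is affine in each $r_k$ because the $Y_j$ are independent. The single-coordinate increment equals $(r_k-r_k')\big[f_k(1)-f_k(0)\big]$, where $f_k(y)$ is the conditional expectation of the Dice ratio given $Y_k=y$ (remaining coordinates distributed according to the current hybrid). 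Since the Dice ratio always lies in $[0,1]$, $|f_k(1)-f_k(0)|\le 1$, which already yields $C_\gamma=1$ per coordinate and hence the claim; a sharper bound $|f_k(1)-f_k(0)|\le 4/(\|\mb{u}\|_1+\gamma+1)$ follows from the explicit ratio increment $\tfrac{2M-N}{M(M+1)}$ (if $k$ is segmented) or $\tfrac{-N}{M(M+1)}$ (otherwise), using $M:=\sum_{j\ne k}Y_j+\|\mb{u}\|_1+\gamma\ge\|\mb{u}\|_1+\gamma$ together with $0\le N\le 2M$.

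Applying this estimate to $(\mathrm{I})$ and $(\mathrm{III})$ with $\mb{u}=\pmb{\delta}^*(\mb{x})$ and $\mb{u}=\widehat{\pmb{\delta}}(\mb{x})$ respectively gives $W_\gamma(\pmb{\delta}^*;\mb{p})-W_\gamma(\widehat{\pmb{\delta}};\mb{p})\le 2C_\gamma\|\mb{p}(\mb{x})-\widehat{\mb{q}}(\mb{x})\|_1$, and taking $\mathbb{E}_{\mb{X}}$ produces \eqref{eqn:risk_bound} with $C_1=2C_\gamma$ (one may take $C_1=2$, or the sharper $C_1=8/(\gamma+1)$ from the refined increment). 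The only genuinely delicate point is the single-coordinate increment estimate: one must verify it is bounded by a constant \emph{independent of the dimension $d$}, so that summing the $d$ hybrid steps returns exactly the $\ell_1$ norm rather than a dimension-dependent factor. This is precisely what the boundedness of the Dice ratio --- or, more sharply, the Poisson-binomial denominator lower bound $M\ge\|\mb{u}\|_1+\gamma$ --- delivers; the degenerate case $\|\mb{u}\|_1=\gamma=0$ (where the ratio is $0/0$) is handled by the convention $0/0:=0$ and contributes nothing.
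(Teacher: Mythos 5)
Your proposal is correct, and its skeleton coincides with the paper's proof: the paper uses exactly your three-term decomposition, writing $\Dice_\gamma(\pmb{\delta}^*)-\Dice_\gamma(\widehat{\pmb{\delta}})$ as (true$-$plug-in) at $\pmb{\delta}^*$, plus (plug-in at $\pmb{\delta}^*$ minus plug-in at $\widehat{\pmb{\delta}}$), plus (plug-in$-$true) at $\widehat{\pmb{\delta}}$, and killing the middle term by the plug-in optimality of $\widehat{\pmb{\delta}}$ (Theorem \ref{thm:Dice_bayes} with $\widehat{\mb{q}}$ in place of $\mb{p}$). Where you genuinely diverge is in the key $\ell_1$-perturbation estimate. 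The paper expands the conditional Dice functional through its Poisson-binomial representation, $\sum_{s\le\tau} 2q_{j_s}(\mb{x})\,\mathbb{E}\big(1/(\tau+\gamma+1+\Gamma_{\sm j_s}(\mb{x}))\big)+\gamma\,\mathbb{E}\big(1/(\tau+\gamma+\Gamma(\mb{x}))\big)$, and bounds each piece via differences of reciprocal moments, obtaining the constant $3/(1+\gamma)+2c_1$ with $c_1=1/\gamma$ for $\gamma>0$. Your multilinear hybrid argument — swapping one Bernoulli parameter at a time and using that the Dice ratio lies in $[0,1]$ — replaces all of this with a single-coordinate increment bound, and it buys three things: (i) a dimension-free constant with no bookkeeping over $\tau$ and $l$; (ii) a constant ($C_1=2$) that is uniform in $\gamma$, whereas the paper's constant degrades as $\gamma\to 0^{+}$; and (iii) arguably greater rigor at one point — the paper's step bounding $\big|\mathbb{E}\big(1/(a+\widehat{\Gamma}_{\sm j_s})\big)-\mathbb{E}\big(1/(a+\Gamma_{\sm j_s})\big)\big|$ by $\big|\mathbb{E}(\Gamma_{\sm j_s}-\widehat{\Gamma}_{\sm j_s})\big|/a^2$ compares expectations of a nonlinear function through means alone, which is not valid verbatim (distributions with equal means can give a nonzero left side) and needs a maximal coupling of the Bernoulli coordinates to repair; your interpolation sidesteps this entirely. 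Your refined increment computation (the $\tfrac{2M-N}{M(M+1)}$ and $\tfrac{-N}{M(M+1)}$ cases, with $M\ge\|\mb{u}\|_1+\gamma$ and $N\le 2M$) is also correct, though note it actually yields $2/(\|\mb{u}\|_1+\gamma+1)$ per coordinate, slightly better than your stated $4/(\|\mb{u}\|_1+\gamma+1)$, and the ``sharper'' $C_1=8/(\gamma+1)$ beats $C_1=2$ only for $\gamma>3$. The one caveat common to both proofs: plug-in optimality of $\widehat{\pmb{\delta}}$ under ties in $\widehat{\mb{q}}$ should be noted (Theorem \ref{thm:Dice_bayes} assumes distinct probabilities), but ranked top-$\widehat{\tau}$ sets still attain the plug-in maximum when ties occur, so the middle term remains nonpositive.
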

As indicated in Theorem \ref{thm:risk_bound}, the excess risk of the Dice metric for the proposed \textit{RankDice} framework is upper bounded by the total variation (TV) distance between the estimated probability $\widehat{\mb{q}}$ and the true probability $\mb{p}$. Note that the Kullback-Leibler divergence (the excess risk for the cross-entropy) dominates the TV distance. It follows that if the KL divergence between $p_j$ and $\widehat{q}_j$ goes to 0, then $\widehat{\mb{q}}$ converges to $\mb{p}$ in the TV sense, and so does $\Dice_\gamma(\widehat{\pmb{\delta}})$ to $\Dice_\gamma(\pmb{\delta}^*)$.


Taken together, we present the rate of convergence for the empirical estimator obtained from the proposed \textit{RankDice} framework (Steps 1-3) in Section \ref{sec:binary_rankdice}.

\begin{corollary}[Convergence rate]
    \label{thm:rate}
    Let $\widehat{\mb{q}}(\cdot)$ and $\widehat{\pmb{\delta}}(\cdot)$ be obtained by the proposed \textit{RankDice} framework (Steps 1-3) in Section \ref{sec:binary_rankdice}, and 
    \begin{equation*}
        \mathcal{E}_{\text{CE}}(\widehat{\mb{q}}) := \mathbb{E}\Big( l_{\text{CE}}\big(\mb{Y}, \widehat{\mb{q}}(\mb{X}) \big) \Big) - \mathbb{E}\Big( l_{\text{CE}}\big(\mb{Y}, \mb{p}(\mb{X}) \big) \Big) = O_P( \epsilon_n ),
    \end{equation*}
    where $l_{\text{CE}}(\cdot, \cdot)$ is defined in \eqref{eqn:classification_loss}.
    Then,
    \begin{equation}
        \label{eqn:rate_Dice}
        \Dice_\gamma(\pmb{\delta}^*)- \Dice_\gamma(\widehat{\pmb{\delta}}) = O_P( \sqrt{d} \epsilon^{1/2}_n ).
    \end{equation}
\end{corollary}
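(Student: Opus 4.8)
The plan is to chain together the excess risk bound from Theorem \ref{thm:risk_bound} with a Pinsker-type inequality relating the cross-entropy excess to the total variation distance. First I would invoke Theorem \ref{thm:risk_bound}, which gives
\begin{equation*}
    \Dice_\gamma(\pmb{\delta}^*) - \Dice_\gamma(\widehat{\pmb{\delta}}) \leq C_1 \, \mathbb{E}_{\mb{X}} \| \widehat{\mb{q}}(\mb{X}) - \mb{p}(\mb{X}) \|_1,
\end{equation*}
so the entire task reduces to controlling the expected $L_1$ distance $\mathbb{E}_{\mb{X}} \| \widehat{\mb{q}}(\mb{X}) - \mb{p}(\mb{X}) \|_1$ by the cross-entropy excess risk $\mathcal{E}_{\text{CE}}(\widehat{\mb{q}}) = O_P(\epsilon_n)$. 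The key observation is that under the conditional independence assumption $Y_i \perp Y_j \mid \mb{X}$, the cross-entropy loss decomposes coordinate-wise, and its population excess over the Bayes-optimal $\mb{p}$ equals a sum of per-coordinate Kullback--Leibler divergences between Bernoulli distributions:
\begin{equation*}
    \mathcal{E}_{\text{CE}}(\widehat{\mb{q}}) = \mathbb{E}_{\mb{X}} \sum_{j=1}^d \KL\big( \text{Ber}(p_j(\mb{X})) \,\|\, \text{Ber}(\widehat{q}_j(\mb{X})) \big).
\end{equation*}

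Next I would apply Pinsker's inequality coordinate-wise, which bounds each squared total-variation term by the corresponding KL divergence: $|\widehat{q}_j(\mb{X}) - p_j(\mb{X})|^2 \leq \tfrac{1}{2}\KL(\text{Ber}(p_j)\|\text{Ber}(\widehat{q}_j))$ (the TV distance between two Bernoullis being exactly $|\widehat{q}_j - p_j|$). To pass from the per-coordinate $L_2$-type control to the $L_1$ aggregate needed for Theorem \ref{thm:risk_bound}, I would use Cauchy--Schwarz across the $d$ coordinates,
\begin{equation*}
    \sum_{j=1}^d |\widehat{q}_j(\mb{X}) - p_j(\mb{X})| \leq \sqrt{d} \Big( \sum_{j=1}^d |\widehat{q}_j(\mb{X}) - p_j(\mb{X})|^2 \Big)^{1/2},
\end{equation*}
which is precisely where the $\sqrt{d}$ factor in \eqref{eqn:rate_Dice} enters. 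Taking expectation over $\mb{X}$ and applying Jensen's inequality (or Cauchy--Schwarz in the probability space) to move the expectation inside the square root yields $\mathbb{E}_{\mb{X}} \|\widehat{\mb{q}}(\mb{X}) - \mb{p}(\mb{X})\|_1 \leq \sqrt{d} \, (\tfrac{1}{2}\mathcal{E}_{\text{CE}}(\widehat{\mb{q}}))^{1/2} = O_P(\sqrt{d}\,\epsilon_n^{1/2})$, and substituting into the Theorem \ref{thm:risk_bound} bound gives \eqref{eqn:rate_Dice}.

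The main obstacle I anticipate is the coordinate-wise Pinsker step, specifically ensuring the KL divergences are finite and the inequality holds uniformly. If $\widehat{q}_j(\mb{X})$ is allowed to hit $0$ or $1$ while $p_j(\mb{X}) \in (0,1)$, the KL divergence diverges, so the argument implicitly relies on the estimated probabilities staying bounded away from the endpoints (or on the convention that infinite KL makes the bound vacuously true). The other subtlety is the two applications of Cauchy--Schwarz --- one across coordinates producing the $\sqrt{d}$, and one across the probability space when interchanging expectation and square root --- which must be ordered carefully so that the final bound reads $O_P(\sqrt{d}\,\epsilon_n^{1/2})$ rather than a worse power of $d$; the clean decomposition of cross-entropy excess into a single expectation of summed KL terms is what makes this bookkeeping come out correctly.
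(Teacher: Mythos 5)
Your proof is correct and takes essentially the same route as the paper's: both reduce via Theorem \ref{thm:risk_bound} to bounding $\mathbb{E}_{\mb{X}}\|\widehat{\mb{q}}(\mb{X})-\mb{p}(\mb{X})\|_1$, use the fact that the cross-entropy excess risk equals the expected sum of per-coordinate Bernoulli $\KL$ divergences, apply Pinsker coordinate-wise, and then use Cauchy--Schwarz together with Jensen to obtain $\sqrt{d/2}\,\mathcal{E}_{\text{CE}}(\widehat{\mb{q}})^{1/2}$. The only difference is the order in which you apply Cauchy--Schwarz across the $d$ coordinates versus across the probability space, which is immaterial and yields the same bound.
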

Note that $\mathcal{E}_{\text{CE}}$ is the excess risk of the cross-entropy loss or the negative conditional log-likelihood in \eqref{eqn:prob_est}, and its asymptotics as well as a rate of convergence can be established based on statistical learning theory of empirical risk minimization \citep{pollard1984convergence,shen1997methods,bartlett2005local,gine2006concentration,cucker2007learning}, which depends on the sample size and the complexity of the probability class. Then, the rate of convergence of the excess risk in terms of the Dice metric is obtained via \eqref{eqn:rate_Dice}. Note that both \eqref{eqn:risk_bound} and \eqref{eqn:rate_Dice} are derived for a fixed dimension, and the upper bounds can be extended and improved when the dimension of segmentation grows with the sample size. 

Finally, we briefly discuss the connections of the developed theory (i.e., Lemma \ref{lem:RankDice-calibrated}, Theorem \ref{thm:risk_bound} and Corollary \ref{thm:rate}) with the existing results. For example, \cite{popordanoska2021relationship} derived an upper bound for the volume bias $\|\mathbb{E}_{\mb{X}} (\widehat{\mb{q}}(\mb{X}) - \mb{p}(\mb{X}))\|_1$ in terms of the TV distance. It is worth noting that the volume bias focuses on conditional probability estimation and a small volume bias may not necessarily yield a consistent segmentation rule in terms of the Dice metric. In contrast, our result on the excess risk $\Dice(\pmb{\delta}^*) - \Dice(\widehat{\pmb{\delta}})$ characterizes the performance of segmentation rule $\widehat{\pmb{\delta}}$. Besides, \cite{bao2020calibrated} proved the consistency of their method under a linear fractional approximation of Dice metric (see Appendix \ref{sec:E-Dice}), which seems not directly comparable to ours.

\subsection{Relation between Dice and IoU metrics}
\label{sec:IoU}

In this section, we consider the relation and difference between Dice and IoU metrics, and present the Bayes rule for IoU-segmentation. 

\begin{lemma}
    \label{lem:IoU_bayes}
A segmentation rule $\pmb{\delta}^*$ is a global maximizer of ${\IoU}_\gamma(\pmb{\delta})$ if and only if it satisfies that
\[ \delta_j^*(\mb{x}) = 
   \begin{cases} 
      1 & \text{if } p_j(\mb{x}) \text{ ranks top }\tau^*(\mb{x}), \\
      0 & \text{otherwise}.
   \end{cases}
\]
The optimal volume $\tau^*(\mb{x})$ is given as
\begin{equation}
    \label{eqn:IoU_bayes}
    \tau^*(\mb{x}) = \argmax_{\tau \in \{0, 1, \cdots, d\} } \ \Big( \sum_{j \in J_\tau(\mb{x}) } p_j(\mb{x}) + \gamma \Big) \sum_{l=0}^{d-\tau} \frac{1}{\tau + l + \gamma} \mathbb{P}\big( \Gamma_{\sm J_{\tau}(\mb{x})}(\mb{x}) = l \big),
\end{equation}
where $J_\tau(\mb{x}) = \big \{ j : \sum_{j'=1}^d \mathbb{I} \big( p_{j'}(\mb{x}) \geq p_j(\mb{x}) \big) \leq \tau \big\}$ is the index set of the $\tau$-largest conditional probabilities with $J_0(\mb{x}) = \emptyset$, and ${\Gamma}_{\sm J_{\tau}(\mb{x})}(\mb{x}) = \sum_{j' \notin J_{\tau}(\mb{x})} {B}_{j'}(\mb{x})$ is a Poisson-binomial random variable, and ${B}_j(\mb{x})$ is a Bernoulli random variable with the success probability $p_{j}(\mb{x})$.
\end{lemma}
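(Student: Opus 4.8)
The plan is to mirror the strategy behind Theorem~\ref{thm:Dice_bayes}: condition on $\mb{X}=\mb{x}$ so that maximizing $\IoU_\gamma(\pmb{\delta})$ reduces to separately maximizing, for each $\mb{x}$, the conditional objective $\mathbb{E}\big[(\mb{Y}^\intercal\pmb{\delta}(\mb{x})+\gamma)/(\|\mb{Y}\|_1+\|\pmb{\delta}(\mb{x})\|_1-\mb{Y}^\intercal\pmb{\delta}(\mb{x})+\gamma)\mid\mb{X}=\mb{x}\big]$ over the $2^d$ binary vectors $\pmb{\delta}(\mb{x})\in\{0,1\}^d$. Writing $I=I(\pmb{\delta}(\mb{x}))$ with $|I|=\tau$ and $T_I=\sum_{j\in I}Y_j$, I would first record the simplification specific to IoU: the union size equals $\|\mb{Y}\|_1+\tau-T_I=\tau+\sum_{j\notin I}Y_j$, so the denominator depends only on $\{Y_j:j\notin I\}$ while the numerator $T_I+\gamma$ depends only on $\{Y_j:j\in I\}$. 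Under the conditional-independence assumption $Y_i\perp Y_j\mid\mb{X}$, the numerator and denominator are therefore independent random variables — a feature absent from the Dice case and the source of the cleaner product form in \eqref{eqn:IoU_bayes}.

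The argument then splits into two stages. First, for a fixed volume $\tau$, I would show the optimal index set is $J_\tau(\mb{x})$ by an exchange argument. Take a candidate $I$ containing some $j$ whose probability is not among the top $\tau$, pick $j'\notin I$ with $p_{j'}(\mb{x})>p_j(\mb{x})$, and form $I'=I\setminus\{j\}\cup\{j'\}$. Collecting the variables outside $\{j,j'\}$ into $a=\gamma+\sum_{k\in I\setminus\{j\}}Y_k$ and $c=\tau+\gamma+\sum_{k\notin I,\,k\neq j'}Y_k$ (both independent of $Y_j,Y_{j'}$, with $c>0$), the two conditional IoU ratios become $(Y_j+a)/(Y_{j'}+c)$ for $I$ and $(Y_{j'}+a)/(Y_j+c)$ for $I'$. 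Averaging over the independent Bernoullis $Y_j,Y_{j'}$ and summing the four outcomes, the $(0,0)$ and $(1,1)$ cases cancel and the conditional difference collapses to
\[
\frac{1+a+c}{c(1+c)}\,\big(p_{j'}(\mb{x})-p_j(\mb{x})\big),
\]
which is strictly positive. Un-conditioning on $a,c$ preserves the sign since the prefactor is always positive, so the swap strictly increases the objective. Iterating swaps shows $J_\tau(\mb{x})$ is the unique optimizer at each volume, and the assumed distinctness of the $p_j(\mb{x})$ supplies both the strictness and the uniqueness needed for the ``only if'' direction.

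Second, evaluating the conditional objective at $I=J_\tau(\mb{x})$ and invoking the independence of $T_{J_\tau}=\sum_{j\in J_\tau}Y_j$ from $\Gamma_{\sm J_\tau(\mb{x})}(\mb{x})=\sum_{j\notin J_\tau}Y_j$ gives the factorization $\mathbb{E}[T_{J_\tau}+\gamma]\cdot\mathbb{E}\big[(\tau+\gamma+\Gamma_{\sm J_\tau(\mb{x})})^{-1}\big]$, which equals $\big(\sum_{j\in J_\tau(\mb{x})}p_j(\mb{x})+\gamma\big)\sum_{l=0}^{d-\tau}(\tau+\gamma+l)^{-1}\mathbb{P}\big(\Gamma_{\sm J_\tau(\mb{x})}(\mb{x})=l\big)$, exactly the summand in \eqref{eqn:IoU_bayes}. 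Maximizing this over $\tau\in\{0,\dots,d\}$ defines $\tau^*(\mb{x})$, and aggregating the pointwise maximizers over $\mb{x}$ (a standard measurable-selection step, since the overall IoU is the $\mb{X}$-expectation of the conditional objective) yields the global maximizer of $\IoU_\gamma$. I expect the main obstacle to be the exchange argument of the second paragraph: establishing that a single swap of a lower- for a higher-probability feature never decreases the conditional IoU requires the explicit four-case expansion and the sign identity above, rather than a soft monotonicity argument, because the objective is not a monotone functional of $\pmb{\delta}$ — increasing $\tau$ alters both the numerator and the denominator simultaneously.
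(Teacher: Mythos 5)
Your proposal is correct, and its second stage coincides with the paper's central observation: because the selected coordinates $\{Y_j : j\in I\}$ and the unselected ones are conditionally independent given $\mb{X}=\mb{x}$, the conditional IoU factors as $\big(\mathbb{E}[\|\mb{Y}_I\|_1\mid\mb{x}]+\gamma\big)\cdot\mathbb{E}\big[(\tau+\gamma+\Gamma_{\sm I}(\mb{x}))^{-1}\big]$, which immediately yields the expression in \eqref{eqn:IoU_bayes}. Where you genuinely diverge is the fixed-$\tau$ step. The paper applies this factorization to an \emph{arbitrary} $I$ with $|I|=\tau$ and maximizes the two factors separately: the first factor $\sum_{j\in I}p_j(\mb{x})+\gamma$ is trivially maximized by the top-$\tau$ set, and for the second factor it cites a stochastic-ordering result (Theorem 2.2.9 of Belzunce et al.) — the complement of the top-$\tau$ set has elementwise smallest success probabilities, so $\Gamma_{\sm I(\mb{v})}(\mb{x})$ is stochastically greater than $\Gamma_{\sm I(\mb{v}^*)}(\mb{x})$, and the tail-integral representation of the decreasing function $u\mapsto 1/(u+\tau+\gamma)$ shows the reciprocal moment is also maximized at the top-$\tau$ set; both factors being maximized at the same set settles the step, with strictness supplied by the first factor and the distinctness of the $p_j(\mb{x})$. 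You instead run a pairwise exchange directly on the ratio, conditioning on the coordinates outside $\{j,j'\}$ and computing the four-outcome difference in closed form; I checked your identity and it is right — the difference is $(p_{j'}(\mb{x})-p_j(\mb{x}))\,(1+a+c)/\big(c(1+c)\big)$, and $c\geq\tau+\gamma\geq 1$ for $\tau\geq 1$ (the only case where a swap arises) keeps the prefactor positive. Your route is more elementary, requiring no external stochastic-order theorem, and it delivers strict improvement per swap directly, which is exactly what the ``only if''/uniqueness direction needs; the paper's route is shorter given the citation and makes transparent \emph{why} the top-$\tau$ set wins in both factors simultaneously. Your remaining remarks — the measurable-selection step for aggregating pointwise maximizers, and the structural contrast with the Dice case where no such numerator/denominator independence holds — are consistent with the paper's treatment (the $\tau=\gamma=0$ term should be read under the paper's $0/0\coloneqq 0$ convention, a detail neither you nor the paper's proof dwells on).
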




In view of Lemma \ref{lem:IoU_bayes}, IoU-segmentation shares a substantial similarity with Dice-segmentation in terms of the Bayes rule. On this ground, a consistent \textit{RankIoU} framework is also developed based on a \textit{plug-in} rule by replacing $\mb{p}(\mb{x})$ as $\widehat{\mb{q}}(\mb{x})$.
Specifically, \textit{RankIoU} comprises three steps, where \textbf{Steps 1-2} are the same as in \textit{RankDice}; see Section \ref{sec:binary_rankdice}.

\noindent \textbf{Step 3$\mb{'}$ (IoU volume estimation)}: From \eqref{eqn:vol_est_true_pro}, we estimate the volume $\widehat{\tau}(\mb{x})$ by replacing the true conditional probability $\mb{p}(\mb{x})$ with the estimated one $\widehat{\mb{q}}(\mb{x})$:
\begin{equation*}
    \widehat{\tau}(\mb{x}) = \argmax_{\tau \in \{0, 1, \cdots, d\} } \ \Big( \sum_{j \in J_\tau(\mb{x}) } \widehat{q}_j(\mb{x}) + \gamma \Big) \sum_{l=0}^{d-\tau} \frac{1}{\tau + l + \gamma} \mathbb{P}\big( \widehat{\Gamma}_{\sm J_{\tau}(\mb{x})}(\mb{x}) = l \big),
\end{equation*}
where $\widehat{\Gamma}_{\sm J_{\tau}(\mb{x})}(\mb{x}) = \sum_{j \notin J_{\tau}(\mb{x})} \widehat{B}_{j}(\mb{x})$ is a Poisson-binomial random variable, and $\widehat{B}_j(\mb{x})$ are independent Bernoulli random variables with success probabilities $\widehat{q}_{j}(\mb{x})$; for $j=1,\cdots,d$.

Similar to \textit{RankDice}, the predicted IoU-segmentation $\widehat{\pmb{\delta}}(\mb{x}) = (\widehat{\delta}_{1}(\mb{x}), \cdots, \widehat{\delta}_{d}(\mb{x}))^\intercal$ is produced by taking the top-$\widehat{\tau}(\mb{x})$ conditional probabilities:
\begin{equation*}
    \widehat{\delta}_{j}(\mb{x}) =  1, \text{ if } j \in \{ j_1, \cdots, j_{\widehat{\tau}(\mb{x})} \}; \quad \widehat{\delta}_{j}(\mb{x}) =  0, \text{ otherwise.}
\end{equation*}

For multiclass/multilabel segmentation, the conditional probability estimation \eqref{eqn:multi_label_prob_est} and \eqref{eqn:multi_class_prob_est} are carried over into \textit{mRankIoU} and the subsequent ranking and volume estimation remain the same as \textbf{Step 2} and \textbf{Step 3$\mb{'}$} in binary segmentation. 

Computationally, \textit{RankIoU} involves the evaluation of $\mathbb{P}\big( \widehat{\Gamma}_{\sm J_{\tau}(\mb{x})}(\mb{x}) = l \big)$ in \textbf{Step 3$\mb{'}$}.
The FFT algorithm and the truncated refined normal approximation (T-RNA) are applicable after minor modifications; however, the blind approximation (BA) may not be appropriate due to the discrepancy of $\widehat{\Gamma}(\mb{x})$ and $\widehat{\Gamma}_{\sm J_{\tau}(\mb{x})}(\mb{x})$, especially when the size of $J_{\tau}(\mb{x})$ is large; see Section \ref{sec:compute}. Thus, the computation scheme of \textit{RankIoU} might be relatively expensive in high-dimensional segmentation.
Here, we present a parallel result of Lemma \ref*{lem:shrinkage} to narrow down the searching range in \textbf{Step 3$\mb{'}$} of \textit{RankIoU}.

\begin{lemma}\label{lem:shrinkage_IoU}
    If 
    \begin{equation*}
    \sum_{s=1}^{\tau}\widehat{q}_{j_s}(\mb{x}) + \gamma 
    \geq \frac{\widehat{q}_{j_{\tau+1}}(\mb{x})}{1 - \widehat{q}_{j_{\tau+1}}(\mb{x})} \max\Big(d + \gamma, \frac{( (d-\tau)\widehat{q}_{j_{\tau+1}}(\mb{x}) + \tau + \gamma)^2}{\tau + \gamma}\Big),
    \end{equation*}
    then 
    $\varpi_{\tau}(\mb{x}) \geq \varpi_{\tau'}(\mb{x})$ for all $\tau'>\tau$,
    where 
    \begin{equation*}
        \varpi_{\tau}(\mb{x}) = \Big( \sum_{j \in J_\tau(\mb{x}) } \widehat{q}_j(\mb{x}) + \gamma \Big) \sum_{l=0}^{d-\tau} \frac{1}{\tau + l + \gamma} \mathbb{P}\big( \widehat{\Gamma}_{\sm J_{\tau}(\mb{x})}(\mb{x}) = l \big).
    \end{equation*}
\end{lemma}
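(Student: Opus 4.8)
The plan is to reduce the claim to a one-step comparison and then telescope. Writing $S_\tau := \sum_{s=1}^{\tau}\widehat{q}_{j_s}(\mb{x})$ and suppressing $\mb{x}$, the definition of $\varpi_\tau$ already factors as $\varpi_\tau = (S_\tau+\gamma)\,g(\tau)$, where
\[
g(\tau) := \sum_{l=0}^{d-\tau}\frac{\mathbb{P}(\widehat{\Gamma}_{\sm J_\tau}=l)}{\tau+l+\gamma} = \mathbb{E}\Big[\frac{1}{\tau+\widehat{\Gamma}_{\sm J_\tau}+\gamma}\Big].
\]
It then suffices to show that every one-step increment $\varpi_{t+1}-\varpi_t$ is nonpositive for all $t\ge\tau$; summing these increments yields $\varpi_{\tau'}\le\varpi_\tau$ for every $\tau'>\tau$.

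For the one-step difference I would use the decomposition $\widehat{\Gamma}_{\sm J_\tau} = \widehat{\Gamma}_{\sm J_{\tau+1}} + \widehat{B}_{j_{\tau+1}}$, where $\widehat{B}_{j_{\tau+1}}$ is the independent Bernoulli variable with success probability $\widehat{q}_{j_{\tau+1}}$. Conditioning on this Bernoulli and writing $U := \tau+\widehat{\Gamma}_{\sm J_{\tau+1}}+\gamma$, a direct calculation gives
\[
\varpi_{\tau+1}-\varpi_\tau = -\,(S_\tau+\gamma)(1-\widehat{q}_{j_{\tau+1}})\,\mathbb{E}\Big[\tfrac{1}{U(U+1)}\Big] + \widehat{q}_{j_{\tau+1}}\,\mathbb{E}\Big[\tfrac{1}{U+1}\Big].
\]
Hence $\varpi_{\tau+1}\le\varpi_\tau$ exactly when $S_\tau+\gamma \ge \frac{\widehat{q}_{j_{\tau+1}}}{1-\widehat{q}_{j_{\tau+1}}}\,R_\tau$, where $R_\tau := \mathbb{E}[1/(U+1)]\big/\mathbb{E}[1/(U(U+1))]$ is a ratio of two moments of the Poisson-binomial variable $\widehat{\Gamma}_{\sm J_{\tau+1}}$.

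The heart of the argument is to bound $R_\tau$, which I would do in two complementary ways. Since $U\le d-1+\gamma$ almost surely, the pointwise inequality $1/(U(U+1))\ge \tfrac{1}{d-1+\gamma}\cdot 1/(U+1)$ yields the crude estimate $R_\tau \le d-1+\gamma \le d+\gamma$. For the refined estimate I would bound the numerator by $\mathbb{E}[1/(U+1)]\le 1/(\tau+\gamma+1)$ (because $U\ge\tau+\gamma$), and bound the denominator below by Jensen's inequality applied to the convex map $u\mapsto 1/(u(u+1))$, giving $\mathbb{E}[1/(U(U+1))]\ge 1/(\mathbb{E}[U](\mathbb{E}[U]+1))$. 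Combining these with the elementary inequality $\frac{\mathbb{E}[U](\mathbb{E}[U]+1)}{\tau+\gamma+1}\le \frac{(\mathbb{E}[U])^2}{\tau+\gamma}$ (equivalent to $\mathbb{E}[U]\ge\tau+\gamma$, which holds since $\widehat{\Gamma}_{\sm J_{\tau+1}}\ge 0$) and with $\mathbb{E}[U]=\tau+\gamma+\sum_{s>\tau+1}\widehat{q}_{j_s}\le (d-\tau)\widehat{q}_{j_{\tau+1}}+\tau+\gamma$ produces the refined bound $R_\tau\le ((d-\tau)\widehat{q}_{j_{\tau+1}}+\tau+\gamma)^2/(\tau+\gamma)$. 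Consequently $R_\tau\le\max\big(d+\gamma,\ ((d-\tau)\widehat{q}_{j_{\tau+1}}+\tau+\gamma)^2/(\tau+\gamma)\big)$, so the stated hypothesis guarantees $\varpi_{\tau+1}\le\varpi_\tau$.

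Finally I would propagate this to all $\tau'>\tau$. The crude bound $R_t\le d+\gamma$ holds verbatim at every index $t$. Moreover, along the sorted probabilities $S_t$ is nondecreasing in $t$ while $\widehat{q}_{j_{t+1}}/(1-\widehat{q}_{j_{t+1}})$ is nonincreasing in $t$ (since $x\mapsto x/(1-x)$ is increasing and $\widehat{q}_{j_{t+1}}\le \widehat{q}_{j_{\tau+1}}$ for $t\ge\tau$). Therefore the hypothesis at $\tau$, which in particular forces $S_\tau+\gamma\ge \frac{\widehat{q}_{j_{\tau+1}}}{1-\widehat{q}_{j_{\tau+1}}}(d+\gamma)$, implies $S_t+\gamma\ge \frac{\widehat{q}_{j_{t+1}}}{1-\widehat{q}_{j_{t+1}}}(d+\gamma)\ge \frac{\widehat{q}_{j_{t+1}}}{1-\widehat{q}_{j_{t+1}}}R_t$ for every $t\ge\tau$, so each increment $\varpi_{t+1}-\varpi_t$ is nonpositive and telescoping completes the proof. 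The main obstacle is the control of the moment ratio $R_\tau$: the crude bound is immediate, but matching the second term of the stated maximum requires the convexity (Jensen) lower bound on $\mathbb{E}[1/(U(U+1))]$ and the bookkeeping relating $\mathbb{E}[U]$ to $\widehat{q}_{j_{\tau+1}}$; some care is also needed when $\widehat{q}_{j_{\tau+1}}\to 1$ (where the criterion is vacuous) and at the boundary $\tau=\gamma=0$ under the convention $0/0:=0$.
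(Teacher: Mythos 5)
Your proof is correct, and it takes a genuinely different route from the paper's. The paper compares $\varpi_\tau$ and $\varpi_{\tau'}$ directly for each $\tau'>\tau$: it decomposes $\widehat{\Gamma}_{\sm J_{\tau}(\mb{x})}(\mb{x}) = \widehat{\Gamma}_{\sm J_{\tau'}(\mb{x})}(\mb{x}) + \sum_{j=\tau+1}^{\tau'}\widehat{B}_j(\mb{x})$, merges the two expectations into a single fraction, lower-bounds that expectation by replacing the Poisson-binomial quantities with bounds on their means (a Jensen-type step the paper leaves terse), and arrives at a sufficient condition of the form $S_\tau+\gamma \geq \frac{q}{1-q}\,\frac{((d-\tau')q+\tau'+\gamma)^2}{\tau'+\gamma}$ for each $\tau'$, where $S_\tau=\sum_{s=1}^{\tau}\widehat{q}_{j_s}(\mb{x})$ and $q=\widehat{q}_{j_{\tau+1}}(\mb{x})$; maximizing this threshold over $\tau'\in(\tau,d]$ (the threshold is convex in $\tau'$, so the maximum sits at the endpoints, with value $d+\gamma$ at $\tau'=d$) produces exactly the stated maximum. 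You instead telescope one-step increments via the exact identity $\varpi_{t+1}-\varpi_t=-(S_t+\gamma)(1-q_{t+1})\,\mathbb{E}[1/(U(U+1))]+q_{t+1}\,\mathbb{E}[1/(U+1)]$ with $U=t+\gamma+\widehat{\Gamma}_{\sm J_{t+1}(\mb{x})}(\mb{x})$, reduce everything to the moment ratio $R_t$, and control it by the almost-sure bound $U\leq d-1+\gamma$ together with, at index $\tau$, Jensen applied to the convex map $u\mapsto 1/(u(u+1))$; the propagation through $t>\tau$ by monotonicity of $S_t$ and of $q\mapsto q/(1-q)$ along the sorted probabilities is sound, and each step checks out. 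Two comparative points are worth recording: (i) your route in fact proves a stronger lemma, since the crude bound gives $R_t\leq d-1+\gamma<d+\gamma$ at every index, so the single condition $S_\tau+\gamma\geq\frac{q}{1-q}(d+\gamma)$ already implies the conclusion and your refined Jensen computation (matching the second term of the maximum) is logically superfluous in your argument — in the paper that second term is unavoidable because its per-$\tau'$ threshold is largest as $\tau'\downarrow\tau$, which is exactly why the max appears in the statement; (ii) your exact increment identity makes the convexity inputs explicit, whereas the paper compresses two applications of Jensen (in $\widehat{\Gamma}_{\sm J_{\tau'}(\mb{x})}(\mb{x})$ and in the Bernoulli sum $\sum_{j=\tau+1}^{\tau'}\widehat{B}_j(\mb{x})$) into one displayed inequality, so your write-up is the more self-contained of the two. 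The edge cases you flag ($q\to 1$ making the criterion vacuous, and $\tau=\gamma=0$ under the paper's $0/0:=0$ convention) are precisely where the paper's computation also degenerates.
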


Theoretically, the concept of ``IoU-calibrated'' can be established (by replacing \Dice{} as \IoU{} in Definition \ref{def:Dice-calibrated}) and the excess risk bounds can be derived in parallel to Dice-segmentation.

\begin{theorem}
    \label{thm:risk_bound_IoU}
    Given $\gamma \geq 0$, let $\widehat{\mb{q}}(\cdot)$ be an estimated probability of $\mb{p}(\cdot)$, and $\widehat{\pmb{\delta}}(\cdot)$ be the RankIoU segmentation function based on $\widehat{\mb{q}}(\cdot)$, then 
    \begin{equation*}
        \IoU_\gamma(\pmb{\delta}^*)- \IoU_\gamma(\widehat{\pmb{\delta}}) \leq C_2 \mathbb{E}_\mb{X} { \|\widehat{\mb{q}}(\mb{X}) - \mb{p}(\mb{X}) \|_1 },
    \end{equation*}
    where $C_2 > 0$ is a universal constant depending only on $\gamma$. Consequently, if $\mathcal{E}_{\text{CE}}(\widehat{\mb{q}}) = O_P( \epsilon_n )$, then
    \begin{equation*}
        \IoU_\gamma(\pmb{\delta}^*)- \IoU_\gamma(\widehat{\pmb{\delta}}) = O_P( \sqrt{d} \epsilon^{1/2}_n ).
    \end{equation*} 
\end{theorem}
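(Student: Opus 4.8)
The plan is to follow the template of Theorem \ref{thm:risk_bound} for the Dice metric, reducing the excess risk to a pointwise (conditional on $\mb{X} = \mb{x}$) comparison and exploiting the special product structure of the IoU utility. For a fixed $\mb{x}$ and an index set $S \subseteq \{1, \dots, d\}$ with $|S| = \tau$, write the conditional IoU utility under a Bernoulli model with success probabilities $\mb{r} = (r_1, \dots, r_d)^\intercal$ as $W(S; \mb{r}) = \mathbb{E}_{\mb{r}}\big( (A + \gamma)/(B + \tau + \gamma) \big)$, where $A = \sum_{j \in S} B_j$ and $B = \sum_{j \notin S} B_j$ are independent Poisson-binomial variables with $B_j \sim \text{Bernoulli}(r_j)$; the denominator simplification uses $\|\mb{Y}\|_1 + \tau - \sum_{j \in S} Y_j = \sum_{j \notin S} Y_j + \tau$. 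Because $A \indep B$, this factorizes as $W(S; \mb{r}) = \big( \sum_{j \in S} r_j + \gamma \big)\, \mathbb{E}_{\mb{r}}\big( 1/(B + \tau + \gamma) \big)$, matching $\varpi_\tau$ in Lemma \ref{lem:IoU_bayes}. By conditioning on $\mb{X}$ and using $Y_i \indep Y_j \mid \mb{X}$, we have $\IoU_\gamma(\pmb{\delta}^*) - \IoU_\gamma(\widehat{\pmb{\delta}}) = \mathbb{E}_{\mb{X}}\big[ W(S^*; \mb{p}(\mb{X})) - W(\widehat{S}; \mb{p}(\mb{X})) \big]$, where $S^* = I(\pmb{\delta}^*(\mb{X}))$ and $\widehat{S} = I(\widehat{\pmb{\delta}}(\mb{X}))$.

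Next I would decompose the pointwise gap as
\[
W(S^*; \mb{p}) - W(\widehat{S}; \mb{p}) = \big[ W(S^*; \mb{p}) - W(S^*; \widehat{\mb{q}}) \big] + \big[ W(S^*; \widehat{\mb{q}}) - W(\widehat{S}; \widehat{\mb{q}}) \big] + \big[ W(\widehat{S}; \widehat{\mb{q}}) - W(\widehat{S}; \mb{p}) \big].
\]
The middle bracket is nonpositive: applying Lemma \ref{lem:IoU_bayes} to the law with success probabilities $\widehat{\mb{q}}(\mb{x})$ shows that the RankIoU set $\widehat{S}$ (the top-$\widehat{\tau}$ features ranked by $\widehat{\mb{q}}$, with $\widehat{\tau}$ the argmax in Step 3$'$) is the global maximizer of $S \mapsto W(S; \widehat{\mb{q}})$ over all subsets, so $W(\widehat{S}; \widehat{\mb{q}}) \geq W(S^*; \widehat{\mb{q}})$. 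This single observation simultaneously absorbs both the ranking error and the volume-selection error, so no separate treatment of the ranking is needed. It remains to bound the two outer brackets, each of the form $|W(S; \mb{p}) - W(S; \widehat{\mb{q}})|$ for a fixed $S$.

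The technical core is therefore a Lipschitz estimate: for any fixed $S$ with $|S| = \tau$, $|W(S; \mb{p}) - W(S; \widehat{\mb{q}})| \leq C \|\mb{p} - \widehat{\mb{q}}\|_1$ with $C$ depending only on $\gamma$. Using the factorization, I would split $W = a(\mb{r})\, g(\mb{r})$ with $a(\mb{r}) = \sum_{j \in S} r_j + \gamma$ and $g(\mb{r}) = \mathbb{E}_{\mb{r}}\big( 1/(B + \tau + \gamma) \big)$, and bound $|W(S;\mb{p}) - W(S;\widehat{\mb{q}})| \leq |a(\mb{p}) - a(\widehat{\mb{q}})|\, g(\mb{p}) + a(\widehat{\mb{q}})\, |g(\mb{p}) - g(\widehat{\mb{q}})|$. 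The factor $a$ is linear, so $|a(\mb{p}) - a(\widehat{\mb{q}})| \leq \|\mb{p} - \widehat{\mb{q}}\|_1$, while $g(\mb{p}) \leq (\tau+\gamma)^{-1}$ and $a(\widehat{\mb{q}}) \leq \tau + \gamma$. For $g$, changing one coordinate $r_j \to r_j'$ for some $j \notin S$ alters $g$ by $(r_j - r_j')\big( \mathbb{E}[h(1 + B_{\sm j})] - \mathbb{E}[h(B_{\sm j})] \big)$, where $B_{\sm j} = \sum_{j' \notin S,\, j' \neq j} B_{j'}$ and $h(b) = (b+\tau+\gamma)^{-1}$; since $h$ is positive and decreasing, the bracketed difference is at most $h(0) = (\tau+\gamma)^{-1}$ in absolute value, and summing the one-at-a-time changes yields $|g(\mb{p}) - g(\widehat{\mb{q}})| \leq (\tau+\gamma)^{-1}\|\mb{p} - \widehat{\mb{q}}\|_1$. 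Combining, each outer bracket is at most $\big( (\tau+\gamma)^{-1} + 1 \big)\|\mb{p} - \widehat{\mb{q}}\|_1$. The only obstacle is the edge case $\tau + \gamma < 1$, which forces $\tau = 0$; but then $S = \emptyset$ makes $a$ constant in $\mb{r}$, so the first (problematic) term vanishes, and the convention $0/0 := 0$ disposes of $\tau = \gamma = 0$. Hence a uniform bound $C$ (depending only on $\gamma$) holds across all $\tau$, giving $\IoU_\gamma(\pmb{\delta}^*) - \IoU_\gamma(\widehat{\pmb{\delta}}) \leq C_2\, \mathbb{E}_{\mb{X}}\|\widehat{\mb{q}}(\mb{X}) - \mb{p}(\mb{X})\|_1$ after taking expectations.

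Finally, the rate follows exactly as in Corollary \ref{thm:rate}: $\mathcal{E}_{\text{CE}}(\widehat{\mb{q}})$ equals the expected sum of per-coordinate divergences $\mathbb{E}_{\mb{X}}\sum_j \KL(p_j(\mb{X}) \,\|\, \widehat{q}_j(\mb{X}))$; Pinsker bounds each TV term $|p_j - \widehat{q}_j|$ by $\sqrt{\tfrac12 \KL_j}$, and Cauchy–Schwarz over the $d$ coordinates together with Jensen gives $\mathbb{E}_{\mb{X}}\|\widehat{\mb{q}} - \mb{p}\|_1 \leq \sqrt{d/2}\,\sqrt{\mathcal{E}_{\text{CE}}(\widehat{\mb{q}})} = O_P(\sqrt{d}\,\epsilon_n^{1/2})$, whence $\IoU_\gamma(\pmb{\delta}^*) - \IoU_\gamma(\widehat{\pmb{\delta}}) = O_P(\sqrt{d}\,\epsilon_n^{1/2})$. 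I expect the main effort to lie in making the Lipschitz estimate fully rigorous and uniform over $\tau$ (and over the data-dependent sets $S^*, \widehat{S}$), since the $(\tau+\gamma)^{-1}$ factors must be controlled through the edge cases; the decomposition and the global optimality of $\widehat{S}$ under $\widehat{\mb{q}}$ are the clean, structural parts.
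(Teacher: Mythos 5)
Your proof is correct and takes essentially the same route as the paper's: the same pointwise three-term decomposition with the middle bracket nonpositive by the optimality of the RankIoU rule under $\widehat{\mb{q}}$, the same factorized Lipschitz bound on $\big(\sum_{j\in S} r_j + \gamma\big)\,\mathbb{E}\big(1/(B+\tau+\gamma)\big)$ with the $(\tau+\gamma)^{-1}$ factors controlled uniformly over $\tau$, and the identical Pinsker--Cauchy--Schwarz--Jensen argument for the rate. Your explicit one-coordinate-at-a-time coupling for $|g(\mb{p})-g(\widehat{\mb{q}})|$ is a slightly more careful justification of the step the paper states tersely as $\big|\mathbb{E}\big(\Gamma_{\sm J_\tau(\mb{x})}(\mb{x}) - \widehat{\Gamma}_{\sm J_\tau(\mb{x})}(\mb{x})\big)\big|/(\tau+\gamma)^2$, but it yields the same estimate and does not change the structure of the argument.
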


\section{Numerical experiments}
\label{sec:num}
This section describes a set of simulations and real datasets that demonstrate the segmentation performance of the proposed \textit{RankDice} and \textit{mRankDice} frameworks compared with the existing \textit{argmax}- and \textit{thresholding}-based frameworks using various loss functions and network architectures. For illustration, the segmentation performances for all numerical experiments are evaluated by empirical Dice/IoU metrics with $\gamma = 0$, see Appendix \ref{sec:E-Dice}. For the mDice/mIoU metric, the class-specific weight is defined as in \eqref{eqn:Dice_weight}. All experiments are conducted using PyTorch and CUDA on an NVIDIA GeForce RTX 3080 GPU. All Python codes are available for download at our GitHub repository (\url{https://github.com/statmlben/rankseg}).

\subsection{Simulation}
\label{sec:sim}

In this section, we mainly compare the proposed \textit{RankDice} framework with the \textit{thresholding}-based framework \eqref{eqn:existing_framework} in various simulated examples. Note that for Dice-segmentation with binary outcomes, \textit{threshold}- and \textit{argmax}-based frameworks yield the same solution.
Both frameworks require an estimation of conditional probability function in the first stage. Therefore, in order to convincingly demonstrate the difference between two frameworks, in our simulation, we assume the true conditional probabilities $p_j(\mb{x}) = \mathbb{P}(Y_j | \mb{x}); j = 1, \cdots, d$ are perfectly estimated, and report the Dice metric of the downstream segmentation produced by two different frameworks.

\noindent \textbf{Example 1.} To mimic the spatial smoothness in practical segmentation problem, especially for image segmentation, the simulated dataset based on matrix response ($d = W \times H$) is generated as follows. First, the true probability matrix $\mb{P} = (p_{wh})_{W \times H}$ are generated by two patterns:
\begin{itemize}
    \item Step decay: $p_{wh} \sim U(0.5,1)$, if $w \leq \floor{\rho W}$ and $h \leq \floor{\rho H}$; $p_{wh} \sim N_{[0,1]}(\beta,0.1)$, otherwise.
    \item Exponential decay: $p_{wh} = \exp\big(- \beta(w + h)\big)$, as visualized in the upper panel of Figure \ref{fig:sim}.
    \item Linear decay: $p_{wh} = 1 -  \beta(w + h) / (W + H)$, as visualized in the lower panel of Figure \ref{fig:sim}.
\end{itemize}
Here $\beta$ is a decay parameter, $U(0,1)$ is the uniform distribution, and $N_{[0,1]}(\beta,0.1)$ is the truncated normal distribution with mean $\beta$ and standard deviation 0.1. In our simulation, we consider $\beta = 0.1, 0.3, 0.5$ with $\rho = 0.1$ for step decay, $\beta = 1.01, 1.05, 1.10$ for exponential decay, and $ \beta = 1, 2, 4$ for linear decay. For each case, four different dimensions are considered: $W=H=28, 64, 128, 256$, and therefore $d$ increases from 784 to 65,536. Then, the proposed \textit{RankDice} framework and the \textit{thresholding}-based framework (at 0.5) are conducted on the true probability matrix $\mb{P}$. Both decay scenarios are replicated 100 times, and the averaged Dice metrics and its standard errors are summarized in Table \ref{tab:sim}. 

\noindent \textbf{Example 2.} As indicated in Theorem \ref{thm:Dice_bayes} and Remark \ref{rk:T}, the optimal segmentation volume (or the optimal threshold) varies significantly across different inputs. This example aims to illustrate the \textit{suboptimality of  (tuning a threshold of) a fixed thresholding} based on step decay in Example 1. To this end, the conditional probability matrices $(\mb{P}_i)_{i=1, \cdots, n}$ of inputs are generated as follows. First, $\rho_i \sim U(0,1)$ to mimic the different segmentation scales/patterns over images in real applications. Next, $\mb{P}_i$ is generated by step decay based on $\beta=0.1$ and $\rho=\rho_i$. Then, the proposed \textit{RankDice} framework and the fixed \textit{thresholding}-based framework (at 0.1. $\cdots$, 0.9) are applied as the same manner in Example 1. All methods are applied with  $n = 2000$ and $W = H = 64$, and the averaged Dice metrics and its standard errors are summarized in Table \ref{tab:sim2}. In this example, the heterogeneous conditional probabilities yield different optimal segmentation volumes (or thresholds) for images, thus (tuning a threshold on) a fixed thresholding leads to a suboptimal solution. To better illustrate the adaptiveness over optimal thresholds, we also present the optimal thresholds for different images with various generating parameters $(\beta, \rho)$ on segmentation patterns of Example 2 in Figure \ref{fig:sim2}, and similar results are demonstrated in Figure \ref{fig:VOC_heatmap} for Pascal VOC 2012 dataset.

The major conclusions on the simulated examples are listed as follows.
\begin{itemize}
    \item It is evident that RankDice significantly outperforms the \textit{thresholding}-based (at 0.5) framework in both decay scenarios with various dimensions (Table \ref{tab:sim}). 
    The substantial improvement is consistent with the findings of Lemma \ref{thm:Dice_bayes}, which indicates that segmentation and classification are entirely distinct problems.
    \item Interestingly, the amount of improvement is gradually increased when the decay of probabilities becomes progressively faster (for exponential decay and linear decay). This suggests that the proposed \textit{RankDice} might be even more advantageous in \textit{well-separated segmentation}.
    \item As suggested in Table \ref{tab:sim2}, the proposed \textit{RankDice} generally outperforms a fixed thresholding framework (with any threshold). This because that the optimal threshold can vary greatly across inputs, as indicated in Figure \ref{fig:sim2}. Moreover, tuning the threshold may improve the performance of the thresholding-based framework, yet it still leads to a suboptimal solution compared with the proposed \textit{RankDice}.
\end{itemize}




\begin{figure}[h!]
    \centering
    \includegraphics[scale=0.35]{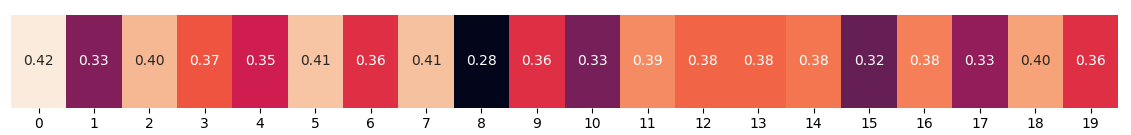}
    \caption{The heatmap for averaged (over all validation samples) optimal thresholding probabilities (i.e. the probability cutpoint $\widehat{q}_{j_{\widehat{\tau}}}$) provided by the proposed \textit{RankDice}, see Remark \ref{rk:T} for more details. Here, the $x$-axis indicates the class, and the results are provided by a PSPNet trained with the cross-entropy loss. }
    \label{fig:VOC_heatmap}
\end{figure}

\subsection{Real datasets}
\label{sec:app}
This section examines the performance of the proposed \textit{RankDice} framework in the PASCAL VOC 2012 \citep{pascal-voc-2012}, the fine-annotated CityScapes semantic segmentation benchmark \citep{cordts2016cityscapes}, and Kvasir SEG dataset polyp segmentation dataset \citep{ajtai19830}. Three different neural network architectures are considered: DeepLab-V3+ \citep{chen2018encoder} with resnet101 as the backbone, PSPNet with resnet50 as the backbone, and FCN8 with resnet101 as the backbone. We report both mDice and mIoU metrics of the segmentation produced by \textit{Threshold}, \textit{Argmax} and \textit{RankDice} on the \textit{same} estimated network/probability. 
Note that only overlapping segmentation is considered.

\noindent \textbf{Fine-annotated Cityscapes dataset} contains 5,000 high quality pixel-level annotated images. For all methods, we employ SGD on the learning rate (\texttt{lr}) schedule \texttt{lr\_schedule=`poly'}, and the initial learning rate \texttt{initial\_lr=0.01}, \texttt{weight\_decay=100}, \texttt{momentum=0.9}, crop size 512x512, batch size 6, and 300 epochs. The performance on validation set is measured in terms of the mDice and mIoU averaged across 19 object classes (Table \ref{tab:CityScapes}).

\noindent \textbf{Pascal VOC 2012 dataset} contains 20 foreground object classes and one background class. The dataset contains 1,464 training and 1,449 validation pixel-level annotated images. We augment the dataset by using the additional annotations provided by \citet{hariharan2011semantic}. For all methods, we employ SGD on \texttt{lr\_schedule=`poly'}, and the initial learning rate \texttt{initial\_lr=0.01}, \texttt{weight\_decay=100}, \texttt{momentum=0.9}, crop size 480x480, batch size 8, and an early stop with patient 10 based on validation loss. The performance on validation set is measured in terms of the mDice and mIoU averaged across the 20 object classes (Table \ref{tab:VOC}). In this dataset, we also present a heatmap (Figure \ref{fig:VOC_heatmap}) for averaged minimal estimated probabilities for segmented features (i.e. $\widehat{q}_{j_{\widehat{\tau}}}$) by the proposed \textit{RankDice}, to highlight its difference to the \textit{thresholding}-based framework.

\noindent \textbf{Kvasir SEG dataset} contains 1000 polyp images and their ground truth segmentation (a single class) from the Kvasir dataset. The scale of the images varies from 332x487 to 1920x1072 pixels. For all methods, we employ SGD on \texttt{lr\_schedule=`poly'}, and the initial learning rate \texttt{initial\_lr=0.01}, \texttt{weight\_decay=100}, \texttt{momentum=0.9}, crop size 320x320, batch size 8, and 140 epochs. The performance on testing set is measured in terms of the Dice and IoU (Table \ref{tab:kvasir}). 

\noindent \textbf{Multiclass/multilabel loss.} In both datasets, we use six loss functions (including multiclass and multilabel losses) in the implementation, including the cross-entropy (CE), the focal loss (Focal), the binary cross-entropy (BCE), the soft-Dice loss (Soft-Dice), the binary soft-Dice loss (B-Soft-Dice), and the LovaszSoftmax loss (LovaszSoftmax). For multiclass losses, including CE, Focal, Soft-Dice, and LovaszSoftmax, we use the softmax function as the output layer activation function of a neural network. For multilabel losses, including BCE and B-Soft-Dice, we use the sigmoid function as the output layer activation function of a neural network.

\noindent \textbf{Dice-segmentation based on a single class.} For the first two datasets, when we focus on a single object class, it reduces to a Dice-segmentation with binary outcomes. To examine the performance for the proposed \textit{RankDice} in binary segmentation, we also report the Dice/IoU metric for each label separately. In principle, we need to train a model only on the binary label for an object class, then produce the segmentation prediction by \textit{thresholding} (or \textit{argmax}) and \textit{RankDice}. However, based on our empirical study, a model trained from full labels is significantly better than the one trained from a single binary label. We thus train a model based the same procedure in the aforementioned segmentation with multiclass/multilabel losses on full labels, and then produce the prediction based on the estimated probability for each object class separately. The best two performances (``PSPNet + CE'' and ``PSPNet + BCE'') for both datasets are summarized in Tables \ref{tab:CityScapes_binary} and \ref{tab:VOC_binary}. The performance for other models and losses can be found in the supplementary. In Figure \ref{fig:VOC_example}, we present the segmentation results on illustrative examples for all methods to demonstrate the difference between the proposed \textit{RankDice} and the existing methods.

\noindent \textbf{The fixed-thresholding framework with different thresholds.} As indicated in Remark \ref{rk:T} and Example 2 in Section \ref{sec:sim}, the optimal segmentation volume (or the optimal thresholding) varies significantly across different images, thus (tuning on) a fixed-thresholding yields a suboptimal solution. In Tables \ref{tab:Dthold} and \ref{tab:sim2}, we also report the numerical performance based on different thresholds for real datasets and Example 2, respectively. 

\noindent \textbf{Probability calibration via Temperature scaling (TS).} According to Theorem \ref{thm:risk_bound}, the consistency of the proposed method holds if the estimated conditional probabilities are calibrated. Alternatively, improving the probability calibration may improve the segmentation performance for the proposed framework. Therefore, in Table \ref{tab:calibration}, we examine the numerical results of the proposed method via TS (with different tuning temperatures), which is one of the most effective probability calibration methods as suggested by \cite{guo2017calibration}.

\begin{table*}[h]\centering
    \scalebox{.75}{
    \begin{tabular}{@{}clllcccccccccccccc@{}} \toprule
     & Model & \phantom{a} & Loss & \phantom{a} & Threshold (at 0.5) & \phantom{a} & Argmax & \phantom{a} & mRankDice (our)  \\
     & &&  && (mDice, mIoU) ($\times .01$) &&  (mDice, mIoU) ($\times .01$) && (mDice, mIoU) ($\times .01$) \\
     \midrule
    & DeepLab-V3+ && CE && (56.00, 48.40) && (54.20, 46.60) && (\textbf{57.80}, \textbf{49.80}) \\
    & (resnet101) && Focal && (54.10, 46.60) && (53.30, 45.60) && \gray{({56.50}, {48.70})} \\
    & && BCE && (49.80, 24.90) && (44.20, 22.10) && ({54.00}, {27.00}) \\
    & && Soft-Dice && (39.50, 35.90) && (39.50, 35.90) && \gray{(39.50, 35.90)} \\
    & && B-Soft-Dice && (41.00, 20.50) && (27.60, 13.80) && \gray{(41.10, 20.50)} \\
    & && LovaszSoftmax && (55.20, 47.60) && (52.30, 45.10) && \gray{(55.50, 47.80)} \\
    \midrule
    & PSPNet && CE && (57.50, 49.60) && (56.50, 48.50) && (\textbf{59.30}, \textbf{51.00}) \\
    & (resnet50) && Focal && (56.00, 48.20) && (55.80, 47.70) && \gray{({58.20}, {50.00})} \\
    & && BCE && (51.40, 25.70) && (47.60, 23.80) && ({55.10}, {27.60}) \\
    & && Soft-Dice && (49.10, 43.50) && (48.70, 43.20) && \gray{(49.30, 43.60)} \\
    & && B-Soft-Dice && (46.30, 23.10) && (32.70, 16.40) && \gray{(46.20, 23.10)} \\
    & && LovaszSoftmax && (56.80, 48.90) && (55.40, 47.70) && \gray{(56.70, 49.10)} \\
    \midrule
    & FCN8 && CE && (51.40, 43.70) && (50.50, 42.60) && (\textbf{53.50}, \textbf{45.30}) \\
    & (resnet101) && Focal && (48.50, 41.20) && (49.60, 41.60) && \gray{({51.50}, {43.70})} \\
    & && BCE && (39.40, 19.70) && (39.40, 19.70) && ({41.30}, {20.60}) \\
    & && Soft-Dice && (28.30, 24.30) && (28.30, 24.30) && \gray{(28.30, 24.80)} \\
    & && B-Soft-Dice && (29.10, 14.60) && (29.10, 14.60) && \gray{(29.10, 14.60)} \\
    & && LovaszSoftmax && (48.10, 40.40) && (42.90, 35.80) && \gray{(48.90, 40.90)} \\
    \bottomrule
    \end{tabular}}
    \caption{Averaged mDice and mIoU metrics of \textit{Threshold}, \textit{Argmax}, and the proposed \textit{mRankDice} based on state-of-the-art models/losses on \textbf{Fine-annotated CityScapes} \textit{val} set. \gray{Gray color} indicates that \textit{RankDice}/\textit{mRankDice} is inappropriately applied to a loss function which is not strictly proper. The best performance in each model is bold-faced.}
    \label{tab:CityScapes}
\end{table*}

\begin{table*}[h]\centering
    \scalebox{.75}{
    \begin{tabular}{@{}clllcccccccccccccc@{}} \toprule
     & Model & \phantom{a} & Loss & \phantom{a} & Threshold (at 0.5) & \phantom{a} & Argmax & \phantom{a} & mRankDice (our)  \\
     & &&  && (mDice, mIoU) ($\times .01$) &&  (mDice, mIoU) ($\times .01$) && (mDice, mIoU) ($\times .01$) \\
     \midrule
    & DeepLab-V3+ && CE && (63.60, 56.70) && (61.90, 55.30) && ({64.01}, {57.01}) \\
    & (resnet101) && Focal && (62.70, 55.01) && (60.50, 53.20) && \gray{({62.90}, {55.10})} \\
    & && BCE && (63.30, 31.70) && (59.90, 29.90) && (\textbf{64.60}, \textbf{32.30}) \\
    & && Soft-Dice && --- && --- && --- \\
    & && B-Soft-Dice && --- && --- && --- \\
    & && LovaszSoftmax && (57.70, 51.60) && (56.20, 50.30) && \gray{(57.80, 51.60)} \\
    \midrule
    & PSPNet && CE && (64.60, 57.10) && (63.20, 55.90) && ({65.40}, {57.80}) \\
    & (resnet50) && Focal && (64.00, 56.10) && (63.90, 56.10) && \gray{({66.60}, {58.50})} \\
    & && BCE && (64.20, 32.10) && (65.20, 32.60) && (\textbf{67.10, 33.50}) \\
    & && Soft-Dice && (59.60, 54.00) && (58.80, 53.20) && \gray{(60.00, 54.30)} \\
    & && B-Soft-Dice && (63.30, 31.60) && (54.00. 27.00) && \gray{(64.30, 32.20)} \\
    & && LovaszSoftmax && (62.00, 55.20) && (60.80, 54.10) && \gray{(62.20, 55.40)} \\
    \midrule
    & FCN8 && CE && (49.50, 41.90) && (45.30, 38.40) && ({50.40}, {42.70}) \\
    & (resnet101) && Focal && (50.40, 41.80) && (47.20, 39.30) && \gray{\bf ({51.50}, {42.50})} \\
    & && BCE && (46.20, 23.10) && (44.20, 22.10) && ({47.70}, {23.80}) \\
    & && Soft-Dice && --- && --- && --- \\
    & && B-Soft-Dice && --- && --- && --- \\
    & && LovaszSoftmax && (39.80, 34.30) && (37.30, 32.20) && \gray{(40.00, 34.40)} \\
    \bottomrule
    \end{tabular}}
    \caption{Averaged mDice and mIoU of \textit{threshold}, \textit{argmax}, and the proposed \textit{mRankDice} based on state-of-the-art models/losses on \textbf{PASCAL VOC 2012} \textit{val} set. ``---'' indicates that either the performance is significantly worse or the training is unstable. \gray{Gray color} indicates that \textit{RankDice}/\textit{mRankDice} is inappropriately applied to a loss function which is not strictly proper. The best performance in each model is bold-faced.}
    \label{tab:VOC}
\end{table*}

\begin{table*}[h!]\centering
    \scalebox{.75}{
    \begin{tabular}{@{}clllcccccccccccccc@{}} \toprule
     & Model & \phantom{a} & Loss & \phantom{a} & Threshold/Argmax & \phantom{a} & mRankDice (our)  \\
     & &&  && (Dice, IoU) ($\times .01$) &&  (Dice, IoU) ($\times .01$) \\
     \midrule
    & DeepLab-V3+ && CE && (87.9, 80.7) && (\textbf{88.3}, \textbf{80.9}) \\
    & (resnet101) && Focal && (86.5, 87.3) && \gray{({83.1}, {73.2})} \\
    & && Soft-Dice && (85.7, 77.8) && \gray{({85.8}, {77.9})} \\
    & && LovaszSoftmax && (84.3, 77.3) && \gray{(84.5, 77.4)} \\
    \midrule
    & PSPNet && CE &&  (86.3, 79.2) && (\textbf{87.1}, \textbf{79.8}) \\
    & (resnet50) && Focal  && (83.8, 75.4) && \gray{({81.8}, {72.4})} \\
    & && Soft-Dice && (83.5, 75.9) && \gray{(83.7, 76.1)} \\
    & && LovaszSoftmax && (86.0, 79.2) && \gray{(86.0, 79.2)} \\
    \midrule
    & FCN8 && CE && (81.9, 73.5) && (\textbf{82.1}, \textbf{73.6}) \\
    & (resnet101) && Focal  && (78.5, 69.0) && \gray{\bf ({70.3}, {58.3})} \\
    & && Soft-Dice && --- && --- \\
    & && LovaszSoftmax && (82.0, 73.4) && \gray{(82.0, 73.4)} \\
    \bottomrule
    \end{tabular}}
    \caption{Averaged mDice and mIoU of \textit{threshold}/\textit{argmax}, and the proposed \textit{mRankDice} based on state-of-the-art models/losses on \textbf{Kvasir SEG} dataset (with a single class segmentation). ``---'' indicates that either the performance is significantly worse. \gray{Gray color} indicates that \textit{RankDice} is inappropriately applied to a loss function which is not strictly proper. The best performance in each model is bold-faced.}
    \label{tab:kvasir}
\end{table*}

Overall, the empirical results show that the proposed \textit{RankDice}/\textit{mRankDice} framework yields good performance in three segmentation benchmarks. The major empirical conclusions on the proposed RankDice are listed as follows. 

\begin{itemize}
    \item As suggested in Tables \ref{tab:CityScapes} and \ref{tab:VOC}, the proposed RankDice framework \textit{consistently} outperforms the \textit{threshold}-based and \textit{argmax}-based frameworks based on the same trained model/network. The percentage of improvement on the best performance (for each framework) are 3.13\% (over \textit{threshold}) and 4.96\% (over \textit{argmax}) for CityScapes dataset (PSPNet + CE), and 3.87\% (over \textit{threshold}) and 2.91\% (over \textit{argmax}) for Pascal VOC 2012 dataset (PSPNet + CE/BCE), and 0.926\% for Kvasir SEG dataset (PSPNet + CE).
    \item For Dice-segmentation based on a single class, as suggested in Tables \ref{tab:CityScapes_binary} and \ref{tab:VOC_binary}, the proposed \textit{RankDice} framework \textit{consistently} outperforms the \textit{threshold}-/\textit{argmax}-based framework for each class. The largest percentage of class-specific improvement in terms of the Dice metric on the best performance (for each framework) is 23.6\% for CityScapes dataset, and 26.9\% for Pascal VOC 2021 dataset. 
    \item The proposed \textit{RankDice} works significantly better in ``difficult'' segmentation. As indicated in Tables \ref{tab:CityScapes_binary} and \ref{tab:VOC_binary}, the improvements by \textit{RankDice} are negatively correlated with the resulting Dice/IoU metrics. It is also suggested by Figure \ref{fig:VOC_example}, where we illustrate three images from classes \texttt{cat} (no improvement) and \texttt{chair} (26.9\% improvement). As presented in all examples of \texttt{chair} and the last example of \texttt{cat}, the improvement is significant when segmentation is difficult (the target object is either occluded or similar with other objects).
    \item As suggested in Table \ref{tab:Dthold}, the empirical results are in line with Remark \ref{rk:T} (theoretically) and Table \ref{tab:sim2} (numerically), suggesting that the proposed \textit{RankDice} generally outperforms a fixed thresholding framework (with any threshold). This because that the optimal threshold can vary greatly across images, as indicated in Figure \ref{fig:VOC_heatmap}. Moreover, tuning the threshold may improve the performance of the fixed-thresholding framework, yet it still leads to a suboptimal solution compared with the proposed \textit{RankDice}.
    \item As suggested in Table \ref{tab:calibration}, the segmentation performance can be potentially improved by TS probability calibration method, especially  4.59\% improvement for CE loss and 3.28\% improvement for BCE in Pascal VOC 2021 dataset. Yet, the TS demands an additional validation dataset to tune the optimal temperature, thus more numerical experiments are required to suggest the effectiveness of this promising method. We leave pursuing this topic as future work.  
    \item As expected, the proposed \textit{RankDice}/\textit{mRankDice} performs well for strictly proper loss functions, including CE and BCE. In addition, we show that the performance is continuously improved compared with existing frameworks for some classification calibrated (only) losses, such as the focal loss. It is possible that this phenomenon is due to the relationship between the estimated scores (from focal loss) and the true conditional probabilities (cf. \cite{charoenphakdee2021focal,liu2021deep}). We leave this topic as future work.
    \item Although the \textit{RankDice}/\textit{mRankDice} framework is developed for Dice/mDice optimization, the performance in terms of the IoU/mIoU metric is also consistently improved. 
\end{itemize}

Moreover, we also present some important observations based on our experiments about losses, frameworks, and models. 
\begin{itemize}
    \item CE, Focal and BCE are the top three losses for Dice-segmentation. While some Dice approximating losses, such as Soft-Dice and binary Soft-Dice, usually lead to suboptimal solutions. 
    \item It seems that the multiclass modeling and multiclass losses are more preferred for both datasets. Moreover, the \textit{threshold}-based framework usually outperforms the \textit{argmax}-based framework for both multiclass and multilabel losses.
    \item For multiclass losses, including CE Focal, Soft-Dice, and LovaszSoftmax, the Dice and IoU metrics are consistent, i.e., a higher Dice yields a higher IoU score. For multilabel losses, including BCE and B-Soft-Dice, there is a significant difference between Dice and IoU metrics.
\end{itemize}

\begin{table*}[h!]
    \centering
    \scalebox{.73}{
    \begin{tabular}{@{}lrccccccccccccccc@{}} \toprule
     & Object class & \phantom{a} & \multicolumn{3}{c}{Threshold/Argmax}  & \phantom{a} &\multicolumn{3}{c}{RankDice (our)} && \textbf{imps.} \\
     & &&  \multicolumn{3}{c}{(Dice, IoU) ($\times .01$)} &&  \multicolumn{3}{c}{(Dice, IoU) ($\times .01$)} && (best vs. best) \\
     & && CE & Focal & BCE  &&  CE & Focal & BCE && (Dice) \\
     \midrule
     & \texttt{road}          && (85.7, 77.2) & (86.1, 77.9) & (92.2, 46.1) && (85.6, 77.1) & (86.0, 77.7) & (92.2, 46.1) && $\sim$ \\
     & \texttt{sidewalk}      && (57.3, 47.8) & (53.6, 43.8) & (43.8, 21.9) && (60.8, 50.8) & (58.7, 48.4) & (54.0, 27.0) && 6.1\% \\
     & \texttt{building}      && (84.6, 76.2) & (83.4, 74.8) & (79.4, 39.7) && (85.1, 76.7) & (83.6, 74.7) & (82.1, 41.0) && $\sim$ \\
     & \texttt{wall}          && (17.4, 13.6) & (16.1, 12.4) & (04.9, 02.4) && (21.0, 16.4) & (21.5, 16.8) & (08.3, 04.2) && 23.6\% \\
     & \texttt{fence}         && (14.7, 10.9) & (12.4, 08.9) & (15.2, 07.6) && (15.8, 11.7) & (13.7, 09.8) & (19.1, 09.5) && 25.7\% \\
     & \texttt{pole}          && (41.9, 29.0) & (34.7, 23.4) & (27.1, 13.5) && (46.0, 31.7) & (35.6, 23.1) & (36.4, 18.2) && 9.8\% \\
     & \texttt{traffic light} && (34.9, 26.5) & (31.5, 24.0) & (18.7, 09.4) && (37.4, 28.3) & (33.5, 24.7) & (21.3, 10.6) && 7.2\% \\
     & \texttt{traffic sign}  && (49.9, 39.0) & (45.9, 35.1) & (35.3, 17.6) && (51.4, 40.1) & (46.6, 35.1) & (39.6, 19.8) && 3.0\% \\
     & \texttt{vegetation}    && (90.2, 84.1) & (90.2, 83.8) & (89.0, 44.5) && (90.3, 84.1) & (89.6, 82.8) & (89.4, 44.7) && $\sim$ \\
     & \texttt{terrain}       && (25.7, 20.1) & (24.1, 18.5) & (19.8, 09.9) && (29.4, 23.1) & (28.7, 22.7) & (25.3, 12.7) && 14.4\% \\
     & \texttt{sky}           && (83.6, 77.0) & (82.0, 75.2) & (80.1, 40.0) && (84.5, 77.8) & (83.1, 76.2) & (80.7, 40.3) && 1.1\% \\
     & \texttt{person}        && (45.1, 36.3) & (42.6, 34.1) & (32.8, 16.4) && (49.5, 40.0) & (47.6, 38.2) & (38.6, 19.3) && 9.8\% \\
     & \texttt{rider}         && (35.1, 27.3) & (31.2, 24.0) & (18.6, 09.3) && (37.2, 29.2) & (33.9, 26.3) & (24.0, 12.0) && 6.0\% \\
     & \texttt{car}           && (84.1, 76.9) & (83.4, 76.2) & (80.8, 40.4) && (84.0, 76.6) & (81.8, 74.0) & (81.2, 40.6) && $\sim$ \\
     & \texttt{truck}         && (24.7, 21.9) & (25.6, 22.7) & (21.8, 10.9) && (26.6, 23.3) & (28.1, 24.8) & (26.8, 13.4) && 9.8\% \\
     & \texttt{bus}           && (46.8, 42.2) & (48.8, 43.8) & (36.3, 18.2) && (51.3, 46.5) & (51.5, 46.8) & (39.2, 19.6) && 5.5\% \\
     & \texttt{train}         && (34.9, 30.7) & (36.3, 31.0) & (33.8, 16.9) && (35.8, 31.5) & (37.3, 32.2) & (34.7, 17.4) && 2.8\% \\
     & \texttt{motorcycle}    && (19.7, 15.8) & (20.4, 16.1) & (07.0, 03.5) && (22.2, 17.7) & (21.1, 16.8) & (08.7, 04.4) && 8.8\% \\
     & \texttt{bicycle}       && (41.4, 32.5) & (42.1, 32.9) & (32.9, 16.5) && (41.9, 32.6) & (42.0, 32.5) & (36.7, 18.4) && $\sim$ \\
    \bottomrule
    \end{tabular}}
    \caption{Averaged class-specific Dice and IoU metrics of \textit{Threshold}/\textit{Argmax}, and the proposed \textit{RankDice} based on various losses of PSPNet + resnet50 on \textbf{Fine-annotated CityScapes} \textit{val} set. The class-specific improvement in terms of the Dice metric of the proposed RankDice framework is computed in ``imps.'', where $\sim$ indicates that the difference between \textit{Threshold}/\textit{Argmax} and \textit{RankDice} is smaller than 1.0\%.}
    \label{tab:CityScapes_binary}
\end{table*}

\begin{table*}[h!]
    \centering
    \scalebox{.73}{
    \begin{tabular}{@{}lrccccccccccccccc@{}} \toprule
     & Object class & \phantom{a} & \multicolumn{3}{c}{Threshold/Argmax}  & \phantom{a} &\multicolumn{3}{c}{RankDice (our)} && \textbf{imps.} \\
     & &&  \multicolumn{3}{c}{(Dice, IoU) ($\times .01$)} &&  \multicolumn{3}{c}{(Dice, IoU) ($\times .01$)} && (best vs. best) \\
     & && CE & Focal & BCE  &&  CE & Focal & BCE && (Dice) \\
     \midrule
     & \texttt{  Aeroplane} && (71.2, 63.4) & (68.4, 59.2) & (72.9, 36.5) && (71.3, 63.4) & (72.7, 64.1) & (75.3, 37.6) && 3.3\% \\
     & \texttt{    Bicycle} && (37.1, 25.9) & (19.6, 12.4) & (14.6, 7.30) && (38.7, 27.3) & (30.5, 20.6) & (23.1, 11.5) && 4.3\% \\
     & \texttt{       Bird} && (76.0, 68.2) & (74.3, 65.2) & (74.2, 37.1) && (76.6, 68.7) & (75.8, 66.4) & (76.3, 38.1) && $\sim$ \\
     & \texttt{       Boat} && (51.1, 42.7) & (59.5, 49.1) & (55.5, 27.8) && (51.3, 42.9) & (61.9, 51.5) & (61.0, 30.5) && 4.0\% \\
     & \texttt{     Bottle} && (42.8, 35.8) & (36.2, 30.0) & (39.1, 19.6) && (44.2, 36.8) & (37.6, 31.4) & (41.1, 20.6) && 3.3\% \\
     & \texttt{        Bus} && (72.8, 68.3) & (72.3, 67.5) & (74.8, 37.4) && (74.1, 69.6) & (73.5, 68.8) & (75.9, 37.9) && 1.5\% \\
     & \texttt{        Car} && (53.5, 47.5) & (51.1, 45.6) & (48.9, 24.4) && (55.0, 49.0) & (53.6, 47.9) & (51.7, 25.9) && 2.7\%\\
     & \texttt{        Cat} && (75.0, 69.2) & (74.1, 67.9) & (73.1, 36.6) && (75.5, 69.7) & (75.4, 68.7) & (75.1, 37.6) && $\sim$ \\
     & \texttt{      Chair} && (17.5, 12.8) & (16.7, 11.6) & (10.2, 5.10) && (19.6, 14.4) & (22.2, 16.1) & (14.5, 7.30) && 26.9\% \\
     & \texttt{        Cow} && (65.3, 58.6) & (60.1, 53.7) & (64.9, 32.4) && (66.5, 59.9) & (62.3, 56.0) & (68.4, 34.2) && 4.8\% \\
     & \texttt{Diningtable} && (32.9, 27.5) & (33.6, 27.4) & (31.7, 15.9) && (34.5, 29.2) & (38.6, 32.4) & (35.3, 17.6) && 14.9\% \\
     & \texttt{        Dog} && (64.6, 57.9) & (71.0, 63.4) & (71.7, 35.9) && (65.5, 58.7) & (72.5, 64.9) & (74.4, 37.2) && 3.8\% \\
     & \texttt{      Horse} && (63.9, 55.3) & (67.3, 58.3) & (67.0, 33.5) && (65.3, 56.6) & (69.5, 60.1) & (70.9, 35.4) && 5.4\% \\
     & \texttt{  Motorbike} && (69.7, 60.6) & (65.5, 56.7) & (66.9, 33.5) && (71.6, 62.6) & (67.0, 57.9) & (70.1, 35.1) && 2.7\% \\
     & \texttt{     Person} && (67.0, 57.7) & (65.0, 55.4) & (67.4, 33.7) && (67.4, 58.1) & (67.2, 57.6) & (69.7, 34.8) && 3.4\% \\
     & \texttt{Pottedplant} && (26.9, 20.2) & (22.4, 17.3) & (25.5, 12.8) && (29.1, 22.0) & (26.9, 20.7) & (28.6, 14.3) && 8.2\% \\
     & \texttt{      Sheep} && (53.9, 47.4) & (62.8, 55.4) & (62.1, 31.1) && (54.3, 47.9) & (66.0, 58.6) & (66.9, 33.4) && 6.5\% \\
     & \texttt{       Sofa} && (29.8, 25.0) & (29.8, 24.4) & (33.7, 16.8) && (32.0, 26.9) & (34.6, 29.0) & (38.9, 19.4) && 14.5\% \\
     & \texttt{      Train} && (77.7, 71.0) & (75.8, 68.9) & (80.3, 40.1) && (77.9, 71.1) & (77.3, 70.4) & (82.1, 41.1) && 2.2\% \\
     & \texttt{  Tvmonitor} && (48.4, 41.4) & (50.7, 41.6) & (53.7, 26.8) && (49.2, 42.0) & (54.1, 45.4) & (56.4, 28.2) && 5.0\% \\
    \bottomrule
    \end{tabular}}
    \caption{Class-specific Dice and IoU of \textit{Threshold}/\textit{Argmax}, and the proposed \textit{RankDice} based on various losses of PSPNet + resnet50 on \textbf{PASCAL VOC 2012} \textit{val} set. The class-specific improvement in terms of the Dice metric of the proposed \textit{RankDice} framework is computed in ``imps.'', where $\sim$ indicates that the difference between \textit{Threshold}/\textit{Argmax} and the proposed \textit{RankDice} is smaller than 1.0\%.}
    \label{tab:VOC_binary}
\end{table*}

\begin{figure}
    \centering
    \includegraphics[scale=0.275]{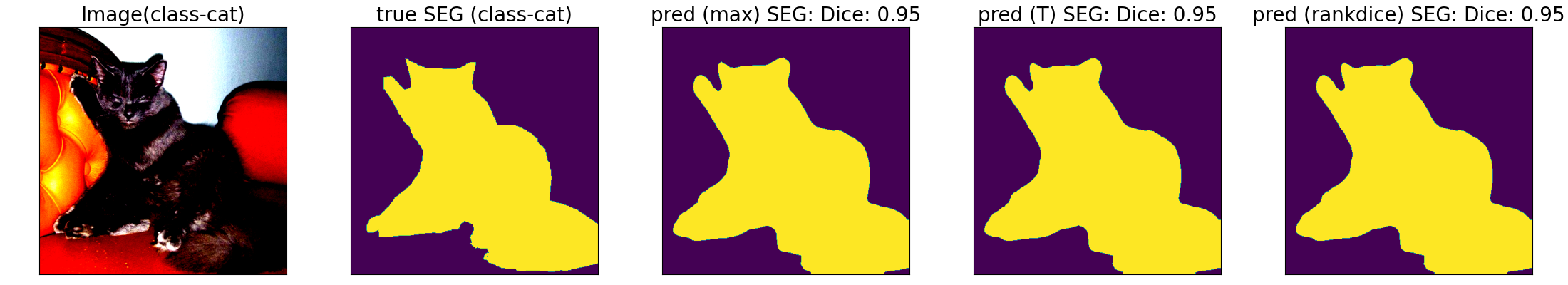}
    \includegraphics[scale=0.275]{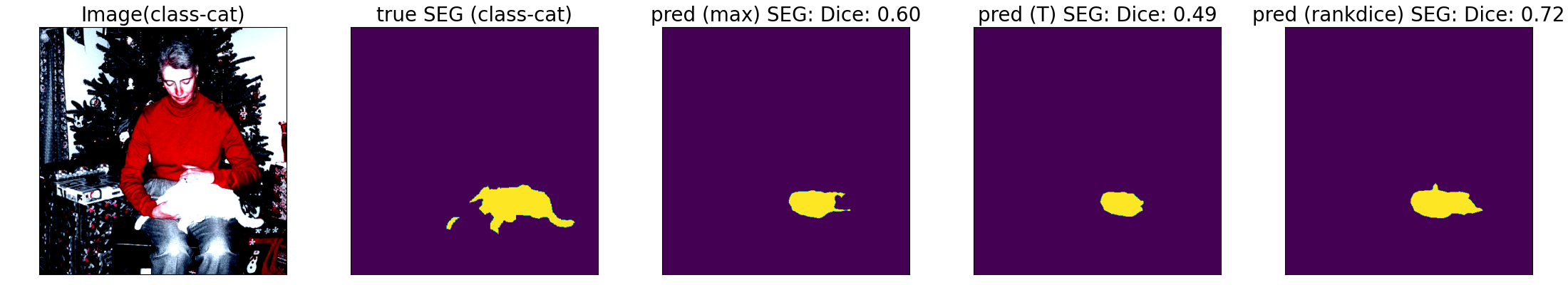}
    \includegraphics[scale=0.275]{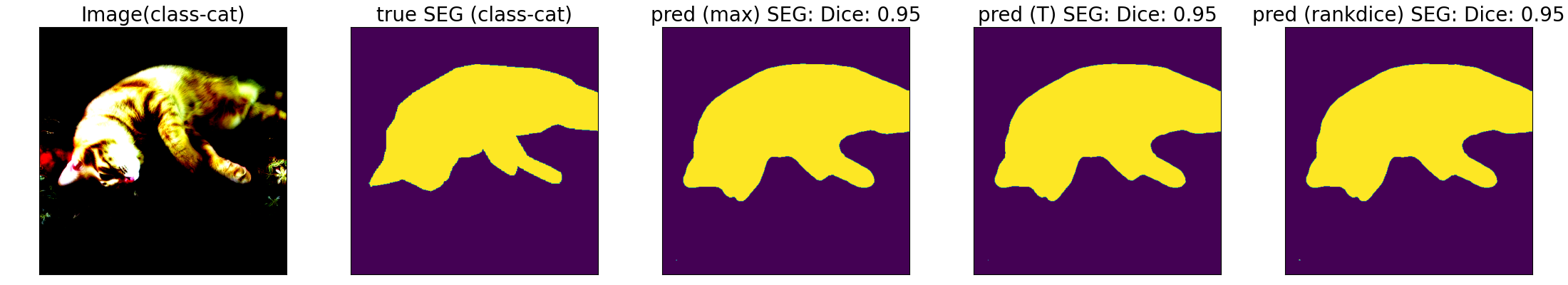}
    \rule[1ex]{\textwidth}{0.1pt}
    \includegraphics[scale=0.275]{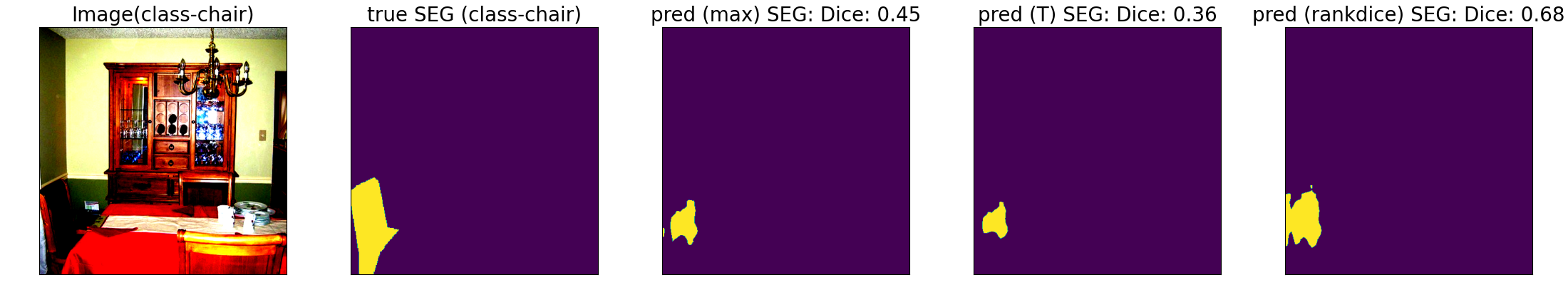}
    \includegraphics[scale=0.275]{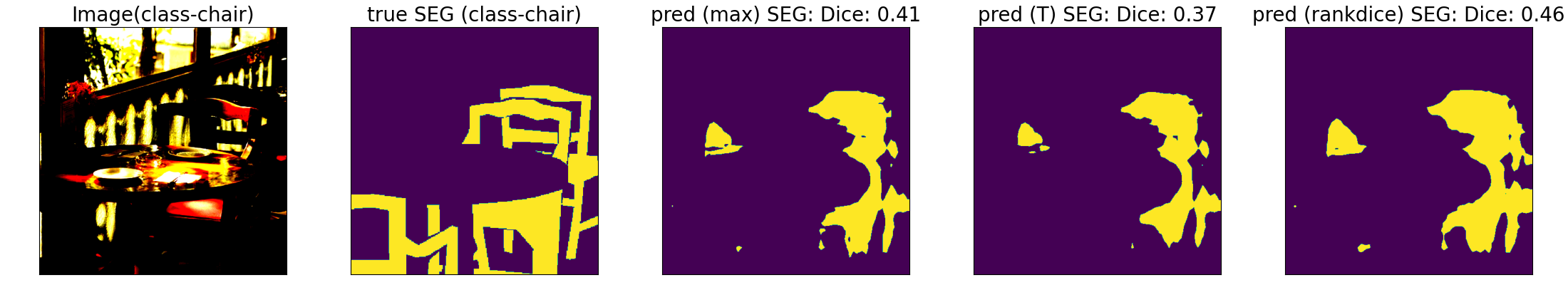}
    \includegraphics[scale=0.275]{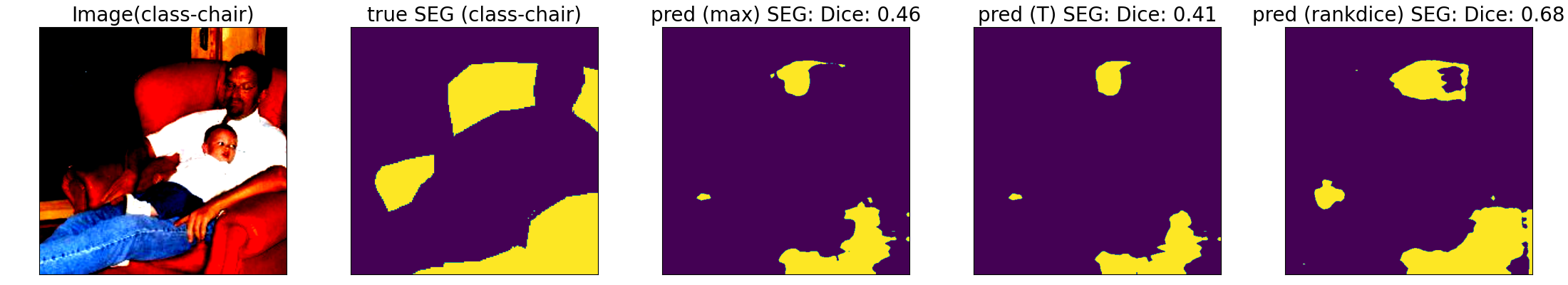}
    \caption{Comparison of segmentation results between the proposed method and existing methods for classes \texttt{cat} (upper panel) and \texttt{chair} (lower panel). Column 1 indicates original images, Column 2 indicates ground truths, and Columns 3-5 indicate the predicted segmentation produced by \textit{argmax}, \textit{thresholding}, and the proposed \textit{RankDice}, respectively. The results are provided by a PSPNet trained with the cross-entropy loss.}
    \label{fig:VOC_example}
\end{figure}

\begin{table}
\begin{minipage}{\textwidth}
    \begin{minipage}[b]{0.49\textwidth}
        \scalebox{.8}{
        \begin{tabular}{@{}cccccccccccccccccc@{}} \toprule
        & Framework & \phantom{a} & Thold & \phantom{a} &  (Dice, IoU) ($\times .01$)  \\
         \midrule
         & \textit{threshold}-based    && 0.1 && (49.10, 24.60) \\
         &                             && 0.2 && (53.00, 26.50) \\
         &                             && 0.3 && (53.60, 26.80) \\
         &                             && 0.4 && (52.90, 26.50) \\
         &                             && 0.5 && (51.40, 25.70) \\
         &                             && 0.6 && (49.60, 24.80) \\
         &                             && 0.7 && (47.00, 23.50) \\
         &                             && 0.8 && (43.40, 21.70) \\
         &                             && 0.9 && (37.40, 18.70) \\
         \cmidrule{1-2}
         & \textit{RankDice}(our)      && --- && \textbf{(\textbf{55.10}, \textbf{27.60})} \\
        \bottomrule
        \end{tabular}}
    \end{minipage}
    \hfill
    \begin{minipage}[b]{0.49\textwidth}
        \scalebox{.8}{
        \begin{tabular}{@{}cccccccccccccccccc@{}} \toprule
        & Framework & \phantom{a} & Thold & \phantom{a} &  (Dice, IoU) ($\times .01$)  \\
         \midrule
         & \textit{threshold}-based    && 0.1 && (56.80, 28.40) \\
         &                             && 0.2 && (63.90, 32.00) \\
         &                             && 0.3 && (65.70, 32.80) \\
         &                             && 0.4 && (65.60, 32.80) \\
         &                             && 0.5 && (64.20, 32.10) \\
         &                             && 0.6 && (62.30, 32.00) \\
         &                             && 0.7 && (59.30, 29.60) \\
         &                             && 0.8 && (54.20, 27.10) \\
         &                             && 0.9 && (43.40, 21.70) \\
         \cmidrule{1-2}
         & \textit{RankDice}(our)      && --- && \textbf{(67.10, 33.50)} \\
        \bottomrule
        \end{tabular}}
    \end{minipage}
\end{minipage}
\caption{The averaged Dice and IoU metrics and their standard errors (in parentheses) of the proposed \textit{RankDice} framework and the fixed-\textit{thresholding} (with different thresholds) framework in \textbf{Fine-annotated CityScapes} (left) and \textbf{PASCAL VOC 2012} (right) datasets. The performance is reported based on PSPNet + resnet50 with the BCE loss.}
\label{tab:Dthold}
\end{table}

\begin{table}
    \scalebox{.9}{
    \begin{tabular}{@{}cccccccccccccccccc@{}} \toprule
    & Dataset    & Temp & \phantom{a} & \multicolumn{2}{c}{Threshold}  & \phantom{a} & \multicolumn{2}{c}{RankDice (our)} \\ 
    &    &      &             & \multicolumn{2}{c}{(Dice, IoU) ($\times .01$)} && \multicolumn{2}{c}{(Dice, IoU) ($\times .01$)}  \\
    &    &      &             & CE & BCE && CE & BCE \\
    \midrule
    & \textbf{CityScapes} & 1.0 && (\textbf{57.50}, \textbf{49.60}) & (51.40, 25.70) && (59.30, 51.00) & (\textbf{55.10}, \textbf{27.60}) \\
                        & & 1.2 && (57.40, 49.60) & /              && (\textbf{59.40}, \textbf{51.20}) & (54.70, 26.30) \\
                        & & 1.5 && (56.90, 49.20) & /              && (59.40, 51.20) & (50.60, 25.30) \\
                        & & 1.7 && (56.20, 48.50) & /              && (59.10, 51.00) & (47.20, 23.60) \\
                        & & 2.0 && (54.40, 46.90) & /              && (58.00, 50.10) & (47.20, 23.60) \\
                        & & 2.2 && (52.80, 45.40) & /              && (57.10, 49.30) & (44.70, 22.40) \\
                        & & 2.5 && (49.80, 42.50) & /              && (55.40, 48.00) & (41.50, 20.70) \\
    \midrule
    & \textbf{VOC 2012} &  1.0  && (64.60, 57.10) & (64.20, 32.10) && (65.40, 57.80) & (67.10, 33.50) \\
                        & & 1.2 && (64.50, 57.50) & /              && (66.10, 58.30) & (68.80, 34.40) \\
                        & & 1.5 && (\textbf{65.30}, \textbf{57.70}) & /              && (66.90, 59.10) & (\textbf{69.30}, \textbf{34.60}) \\
                        & & 1.7 && (65.30, 57.70) & /              && (67.60, 59.80) & (69.00, 34.50) \\
                        & & 2.0 && (64.80, 57.10) & /              && (68.30, 60.50) & (68.00, 34.00) \\
                        & & 2.2 && (63.80, 56.00) & /              && (\textbf{68.40}, \textbf{60.60}) & (67.00, 33.50) \\
                        & & 2.5 && (61.30, 53.40) & /              && (67.90, 60.20) & (65.10, 32.50) \\
            \bottomrule
            \end{tabular}}
    \caption{The averaged Dice and IoU metrics and their standard errors (in parentheses) of the proposed \textit{RankDice} framework and the fixed-\textit{thresholding} framework based on temperature-scaling calibration methods with different temperature tuning parameters in \textbf{Fine-annotated CityScapes} and \textbf{PASCAL VOC 2012} datasets. `/' indicates that the performance of the thresholding-based framework over different temperatures are all the same under BCE loss. The performance is reported based on PSPNet + resnet50.}
    \label{tab:calibration}
    \end{table}

\section{Conclusions and future work}



\noindent\textbf{Summary.}
In this paper, we proposed a ranking-based framework for segmentation called \textit{RankSEG} that  comprises three steps: conditional probability estimation, ranking, and volume estimation.
Specifically, we have focused on the Dice metric and developed \textit{RankDice}, a version of \textit{RankSEG} for optimal Dice-segmentation. 
We introduced a key concept ``Dice-calibrated'' and demonstrated that \textit{RankDice} is able to recover the optimal segmentation rule, as opposed to the existing fixed-thresholding frameworks that are suboptimal with respect to the Dice metric.
Computationally, we have developed efficient exact/approximate numerical methods, including GPU-enabled algorithms, to carry out \textit{RankDice}.
Moreover, we established general theoretical results, including excess risk bounds and a rate of convergence for \textit{RankDice}, showing that \textit{RankDice} is consistent when the conditional probability estimation is well-calibrated.
Empirical experiments suggested that the proposed framework performs consistently well on a variety of segmentation benchmarks and state-of-the-art deep learning architectures. In parallel to \textit{RankDice}, we also developed the framework \textit{RankIoU} for the IoU metric. The theoretical results are similar, while the computation for the optimal IoU-segmentation could be more expensive in high-dimensional situation.

\noindent\textbf{Limitation and future work.} 
(i) For multiclass/multilabel segmentation, our results in this paper cover the overlapping (allowing) case; however, computing the optimal segmentation for the non-overlapping case is NP-hard. Thus, it would be interesting to develop a scalable approximating algorithm to utilize the proposed framework in the non-overlapping setting.
(ii) The conditional independence, $Y_i \perp Y_j | \mb{X}$ for any $i \neq j$, is crucial for Theorem \ref{thm:Dice_bayes} and subsequent theorems in Section \ref{sec:theory}. In some segmentation applications, it is of interest to extend the proposed frameworks and theorems with locally dependent outcomes. (iii) When given the testing features, the proposed method can be extended to maximize the F1-score in classification tasks.

\acks{We would like to acknowledge support for this project from the HK RGC-ECS 24302422 and the CUHK direct fund. Ben Dai developed the main framework (\textit{RankDice}), theory, algorithms and experiments. Chunlin Li mainly extended \textit{RankDice} to \textit{RankIoU} (Section \ref{sec:IoU})}, and partly contributed to the proof of Lemma \ref{lem:shrinkage} and Theorem \ref{thm:risk_bound}. We would like to thank the referees and the Action Editor for constructive feedback which greatly improved this work.

\newpage

\appendix

\section{Empirical evaluation of the Dice metric}
\label{sec:E-Dice}

Recall the definition of the Dice and IoU metrics in \eqref{eqn:dice_loss}, their empirical evaluation based on a validation/testing dataset $(\tilde{\mb{x}}_i, \tilde{\mb{y}}_i)_{i=1, \cdots, m}$, can be written as:
\begin{align}
    \widehat{\Dice}_\gamma(\pmb{\delta}) & = \frac{1}{m} \sum_{i=1}^{m} \frac{2 \tilde{\mb{y}}^\intercal_i \pmb{\delta}(\tilde{\mb{x}}_i)   + \gamma }{ \| \tilde{\mb{y}}_i \|_1 + \| \pmb{\delta}(\tilde{\mb{x}}_{i}) \|_1 + \gamma } = \frac{1}{m} \sum_{i=1}^{m} \frac{2 \text{TP}_i  + \gamma }{ 2 \text{TP}_i + \text{FP}_i + \text{FN}_i + \gamma }, \nonumber \\
    \widehat{\IoU}_\gamma(\pmb{\delta}) & = \frac{1}{m} \sum_{i=1}^{m} \Big( \frac{ \tilde{\mb{y}}_i^\intercal \pmb{\delta}(\tilde{\mb{x}}_{i}) + \gamma }{ \| \tilde{\mb{y}}_i \|_1 + \| \pmb{\delta}(\tilde{\mb{x}}_i) \|_1 - \tilde{\mb{y}}_i^\intercal \pmb{\delta}(\tilde{\mb{x}}_i) + \gamma } \Big) = \frac{1}{m} \sum_{i=1}^{m} \frac{\text{TP}_i + \gamma}{ \text{TP}_i + \text{FP}_i + \text{FN}_i + \gamma },
    \label{eqn:emp_dice}
\end{align}
where $\text{TP}_i$, $\text{FP}_i$ and $\text{FN}_i$ are defined at the instance level. In general, the empirical Dice and IoU metrics are not equal to the evaluation criteria used in some literature:
\begin{align}
    & \widehat{\Dice}_\gamma(\pmb{\delta}) \neq \overbar{\Dice}_\gamma(\pmb{\delta}) := \frac{ \frac{1}{m} \sum_{i=1}^{m} 2 \tilde{\mb{y}}^\intercal_i \pmb{\delta}(\tilde{\mb{x}}_i)   + \gamma }{ \frac{1}{m} \sum_{i=1}^{m} \| \tilde{\mb{y}}_i \|_1 + \frac{1}{m} \sum_{i=1}^{m} \| \pmb{\delta}(\tilde{\mb{x}}_{i}) \|_1 + \gamma } \overset{\mathbb{P}}{\longrightarrow} \frac{ \mathbb{E}\big( 2 \mb{Y}^\intercal \pmb{\delta}(\mb{X}) \big) + \gamma }{ \mathbb{E}\big( \| \mb{Y} \|_1 ) + \mathbb{E}\big( \| \pmb{\delta}(\mb{X}) \|_1 ) + \gamma }, \nonumber \\
    & \widehat{\IoU}_\gamma(\pmb{\delta}) \neq \overbar{\IoU}_\gamma(\pmb{\delta}) := \frac{ \frac{1}{m} \sum_{i=1}^{m} \tilde{\mb{y}}^\intercal_i \pmb{\delta}(\tilde{\mb{x}}_i)  + \gamma }{ \frac{1}{m} \sum_{i=1}^{m} \| \tilde{\mb{y}}_i \|_1 + \frac{1}{m} \sum_{i=1}^{m} \| \pmb{\delta}(\tilde{\mb{x}}_{i}) \|_1 - \frac{1}{m} \sum_{i=1}^m \tilde{\mb{y}}_i^\intercal \pmb{\delta}(\tilde{\mb{x}}_i) + \gamma } \nonumber \\
    & \hspace{5cm} \qquad \overset{\mathbb{P}}{\longrightarrow} \frac{ \mathbb{E}\big( 2 \mb{Y}^\intercal \pmb{\delta}(\mb{X}) \big) + \gamma }{ \mathbb{E}\big( \| \mb{Y} \|_1 ) + \mathbb{E}\big( \| \pmb{\delta}(\mb{X}) \|_1 ) - \mathbb{E}\big( \mb{Y}^\intercal \pmb{\delta}(\mb{X}) \big) + \gamma }.
    \label{eqn:mis_dice}
\end{align}
Here $\overset{\mathbb{P}}{\to}$ denotes convergence in probability following from the law of large numbers and Slutsky's theorem. Clearly, both empirical and population evaluations in \eqref{eqn:mis_dice} do not match with the empirical verisons in \eqref{eqn:emp_dice} and the population Dice in \eqref{eqn:dice_loss}. Although the empirical evaluation in \eqref{eqn:mis_dice} is widely used, it inherently discounts the effects of instances with small segmented features/pixels, leading to bias in the empirical evaluation. The issues of $\overbar{\Dice}_\gamma(\pmb{\delta})$ and $\overbar{\IoU}_\gamma(\pmb{\delta})$ are also indicated in some recent literature, including \cite{cordts2016cityscapes} and \cite{berman2018lovasz}. 

Therefore, it is highly recommended using the empirical Dice in \eqref{eqn:emp_dice} in implementation, and our numerical results in Section \ref{sec:num} are reported based on \eqref{eqn:emp_dice}.

\section{Auxiliary definitions}

\subsection{Population RankSEG}
\label{sec:pop_rankdice}
In this section, we present the definition of population \textit{RankSEG}, including the proposed frameworks \textit{RankDice} and \textit{RankIoU}. In other words, we work with population of $(\mb{X}, \mb{Y}) \in \mathbb{R}^d \times \{0,1\}^d$. Denote $\mathcal{Q}$ as the class of all measurable functions $\mb{q}: \mb{x} \in \mathbb{R}^d \to \mb{q}(\mb{x}) = (q_1(\mb{x}), \cdots, q_d(\mb{x}))^\intercal \in [0,1]^d$.

\noindent \textbf{Step 1 (Conditional probability estimation)}: Estimate the conditional probability based on a strictly proper loss $l(\cdot, \cdot)$:
\begin{equation}
\label{eqn:pop_Rankdice}
\widehat{\mb{q}} = \argmin_{\mb{q} \in \mathcal{Q}}  \mathbb{E} \big( l( \mb{Y}, \mb{q}(\mb{X}) ) \big).
\end{equation}

\noindent \textbf{Step 2 (Ranking)}: Given a new instance $\mb{x}$, sort its estimated conditional probabilities decreasingly, and denote the corresponding indices as $j_1, \cdots, j_d$, that is, $\widehat{{q}}_{j_1}(\mb{x}) \geq \widehat{{q}}_{j_2}(\mb{x}) \geq \cdots \geq \widehat{{q}}_{j_d}(\mb{x})$.

\noindent \textbf{Step 3 (Volume estimation)}: From \eqref{eqn:vol_est_true_pro}, we estimate the volume $\widehat{\tau}(\mb{x})$ by replacing the true conditional probability $\mb{p}(\mb{x})$ by the estimated one $\widehat{\mb{q}}(\mb{x})$:
\begin{align*}
\text{(RankDice)}\ \ \
\widehat{\tau}(\mb{x}) 
& = \argmax_{\tau \in \{0, \cdots, d\} } \sum_{s = 1}^\tau \sum_{l=0}^{d-1} \frac{2}{\tau + l + \gamma + 1} \widehat{{q}}_{j_s}(\mb{x}) \mathbb{P} \big( \widehat{\Gamma}_{\sm j_s}(\mb{x}) = l \big) + \sum_{l=0}^d \frac{\gamma}{\tau + l + \gamma} \mathbb{P}\big( \widehat{\Gamma}(\mb{x}) = l \big), \\
\text{(RankIoU)}\ \ \
\widehat{\tau}(\mb{x}) 
& = \argmax_{\tau \in \{0, 1, \cdots, d\} } \ \Big( \sum_{j \in J_\tau(\mb{x}) } \widehat{q}_j(\mb{x}) + \gamma \Big) \sum_{l=0}^{d-\tau} \frac{1}{\tau + l + \gamma} \mathbb{P}\big( \widehat{\Gamma}_{\sm J_{\tau}(\mb{x})}(\mb{x}) = l \big),
\end{align*}
where $\widehat{\Gamma}(\mb{x}) = \sum_{j=1}^d \widehat{B}_{j}(\mb{x})$, $\widehat{\Gamma}_{\sm j_s}(\mb{x}) = \sum_{j \neq j_s} \widehat{B}_{j}(\mb{x})$, and $\widehat{\Gamma}_{\sm J_{\tau}(\mb{x})}(\mb{x}) = \sum_{j \notin J_{\tau}(\mb{x})} \widehat{B}_{j}(\mb{x})$ denote Poisson-binomial random variables, and $\widehat{B}_j(\mb{x})$ is a Bernoulli random variable with the success probability $\widehat{q}_{j}(\mb{x})$; for $j=1,\cdots,d$.

\subsection{Poisson-binomial distribution}
\label{sec:PBD}
The Poisson binomial distribution is the discrete probability distribution of a sum of independent non-identical Bernoulli trials. Specifically, suppose $B_1, \cdots, B_d$ are independent Bernoulli random variables, with probabilities of success $\mb{p} = (p_1, \cdots, p_d)^\intercal$, then $\Gamma = \sum_{j=1}^d B_j$ is a Poisson-Binomial random variable with parameter $\mb{p}$, denoted as $\Gamma \sim \text{PB}(\mb{p})$, and its probability mass function is:

$$
\mathbb{P}\big ( \Gamma  = l \big)  = \sum_{\mb{b}: \|\mb{b}\|_1 = l} \prod_{j=1}^d \big( b_{j} p_{j} + (1 - b_{j}) \big(1 - p_{j} \big) \big),
$$
where $\mb{b} = (b_1, \cdots, b_d)^\intercal \in \{0,1\}^d$. Moreover, the mean, variance, and skewness for $\Gamma \sim \text{PB}(\mb{p})$ are listed as follows.
\begin{align*}
    \mu \coloneqq \mathbb{E}(\Gamma) = \sum_{j=1}^d p_j, \ \sigma^2 \coloneqq \Var(\Gamma) = \sum_{j=1}^d p_j (1 - p_j), \ \eta \coloneqq \text{Skew}(\Gamma) = \frac{1}{ \sigma^{3}} \sum_{j=1}^d p_j (1 - p_j) (1 - 2p_j).
\end{align*}

\subsection{Conditional independence in segmentation}
\label{sec:dep}
In this section, we adopt a probabilistic perspective on the likelihood of the segmentation task, to suggest that the conditional independence ($Y_{j}\perp Y_{j'} \mid \mb X$ for $j\neq j'$) is implicitly assumed to ensure the validity of the cross-entropy (CE) loss, and widely accepted due to the high dimensional nature of segmentation data.

Suppose $(\mb{X}_i, \mb{Y}_i) \overset{\text{iid}}{\sim} \mathbb{P}_{\mb{X}, \mb{Y}}$, the negative conditional log-likelihood function of $\mb{q}$ for the probabilistic model is:
        \begin{align*}
            \mathcal{L}_n(\mb{q}) &:= - \log \Big( \prod_{i=1}^{n} \mathbb P_{\mb{q}}(\mb Y=\mb{y}_i \mid \mb X = \mb{x}_i) \Big) = - \log \Big( \prod_{i=1}^{n} \prod_{j=1}^{d} q_j(\mb x_i)^{y_{ij}} (1 - q_j(\mb x_i))^{1-y_{ij}} \Big) \\
            &= -\sum_{i=1}^{n} \sum_{j=1}^{d} (y_{ij} \log(q_j(\mb x_i)) + (1 - y_{ij})\log(1 - q_j(\mb x_i)) ) = \sum_{i=1}^n l_{\text{CE}}( \mb{y}_i, \mb{q}(\mb{x}_i) ),
        \end{align*}
    where the second equality follows from the conditional independence assumption $Y_{j}\perp Y_{j'} \mid \mb X$ for $j\neq j'$, which connects the CE loss to the negative conditional log-likelihood function. In this sense, the conditional independence is naturally (and implicitly) assumed by CE to presuppose the probabilistic interpretation of its estimator.

    Moreover, the conditional independence is widely accepted due to the high dimensional nature of segmentation data. For example, given a 512x512 image, it is infeasible to consider $512^4$ pairs of label-dependence, which can even be adaptive with respect to $\mb{x}$. 

\section{Simulation results and Implementation details}
\subsection{Simulation setting and results}
This subsection includes the simulation setting demonstration (Figure \ref{fig:sim}) and the numerical results (Tables \ref{tab:sim} and \ref{tab:sim2}, and Figure \ref{fig:sim2}).

\begin{figure*}[t!]
    \centering
    \begin{subfigure}
        \centering
        \includegraphics[height=1.3in]{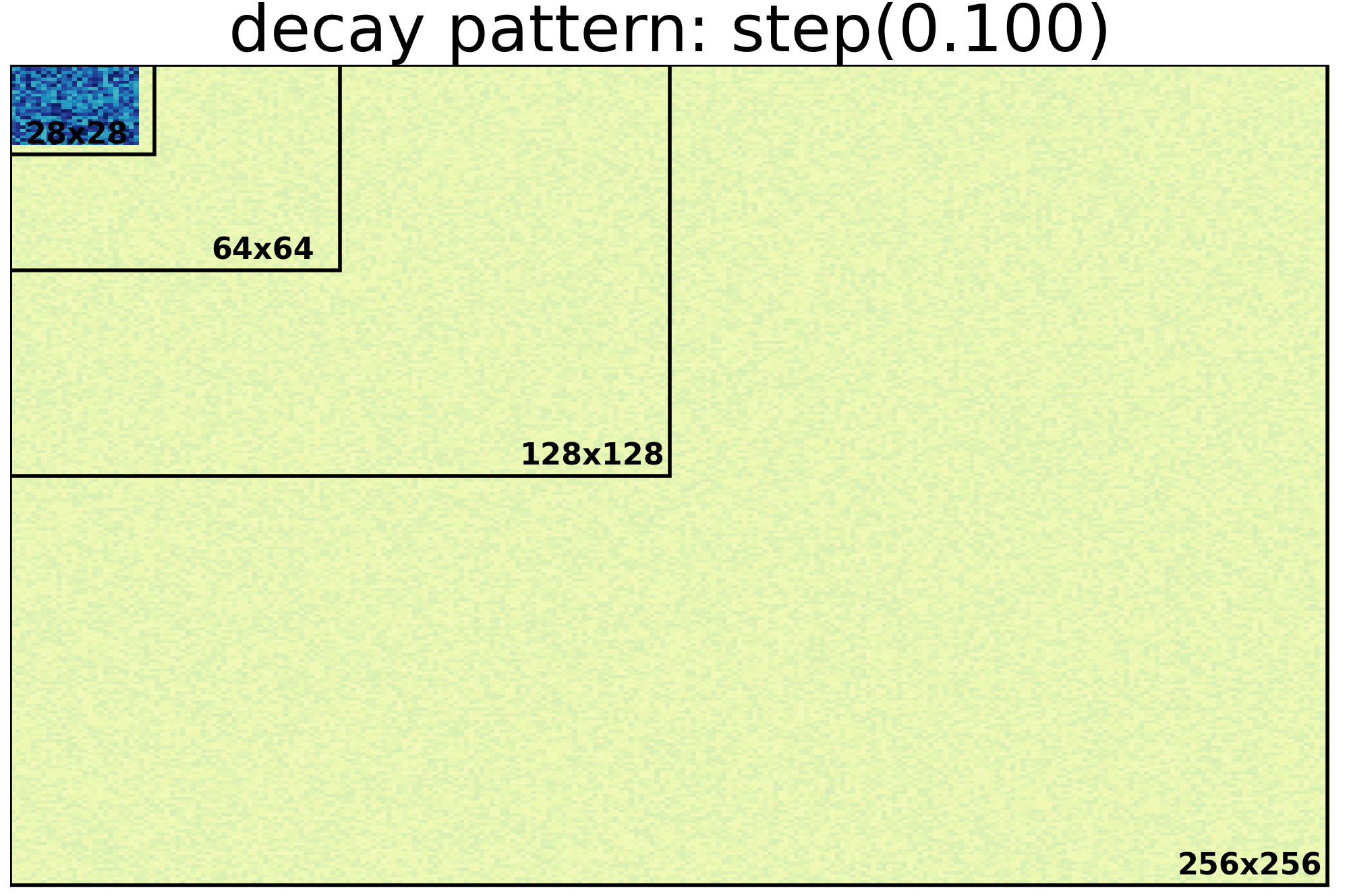}
    \end{subfigure}%
    \begin{subfigure}
        \centering
        \includegraphics[height=1.3in]{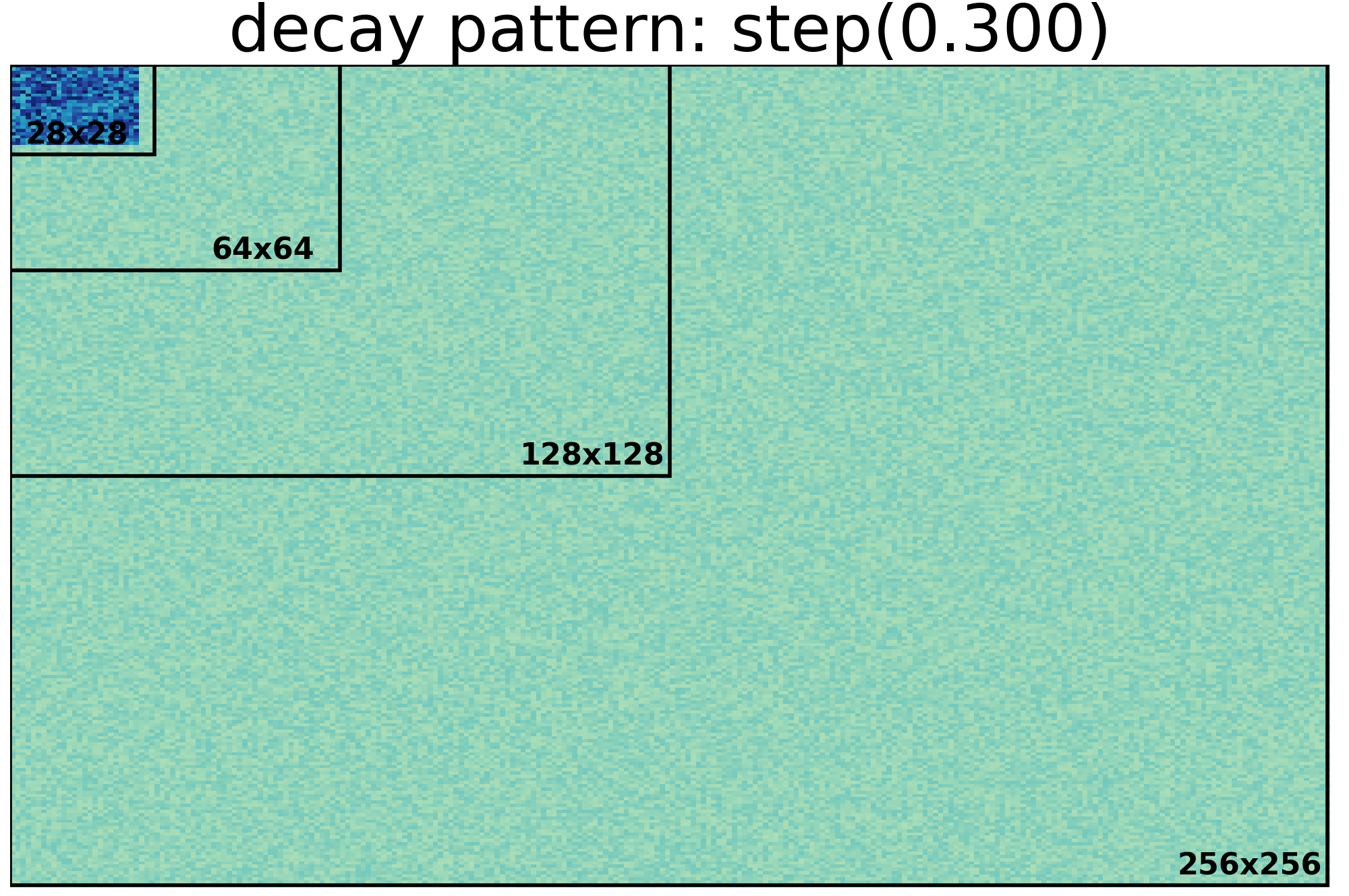}
    \end{subfigure}
    \begin{subfigure}
        \centering
        \includegraphics[height=1.3in]{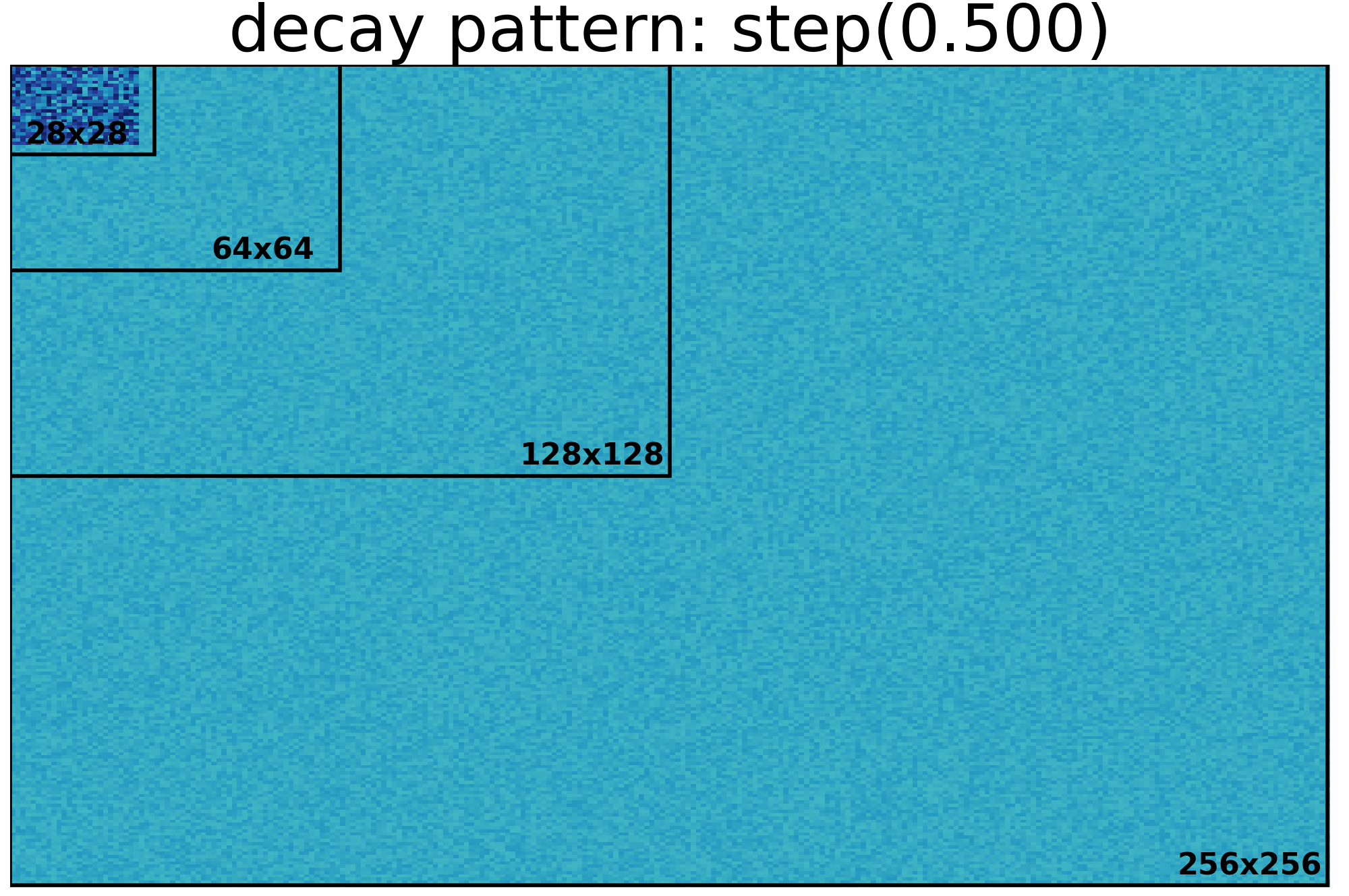}
    \end{subfigure}
    \begin{subfigure}
        \centering
        \includegraphics[height=1.3in]{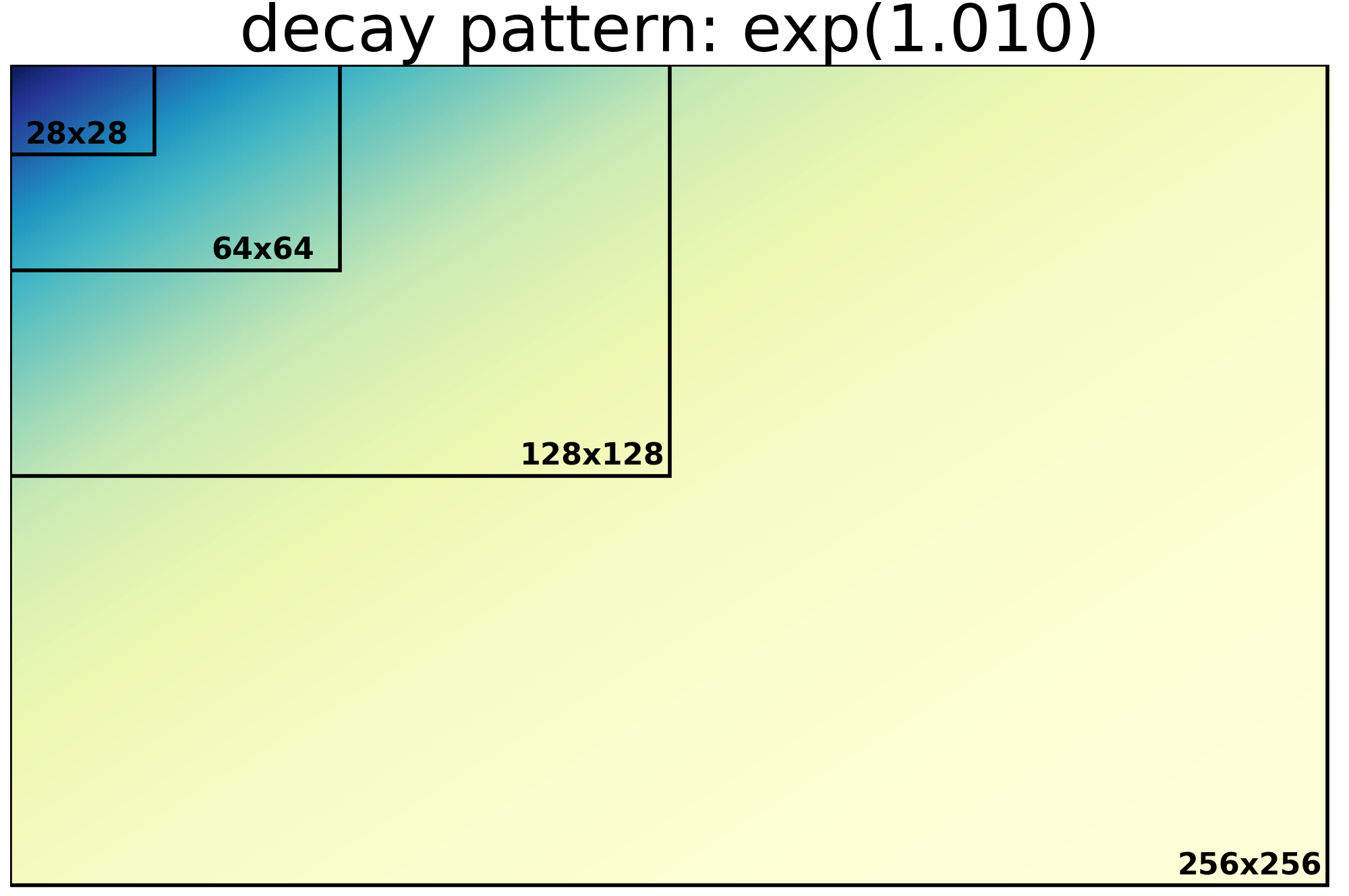}
    \end{subfigure}%
    \begin{subfigure}
        \centering
        \includegraphics[height=1.3in]{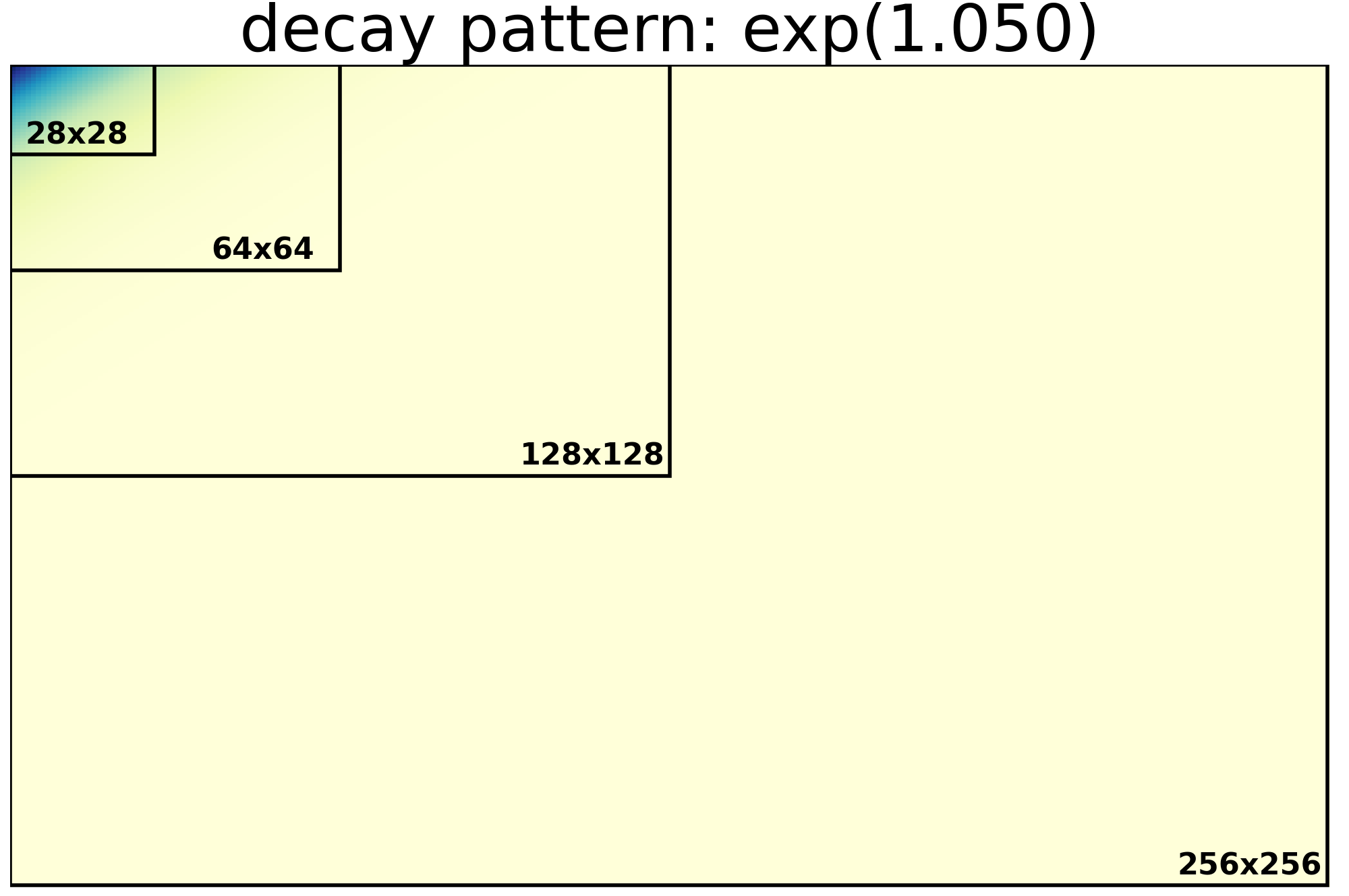}
    \end{subfigure}
    \begin{subfigure}
        \centering
        \includegraphics[height=1.3in]{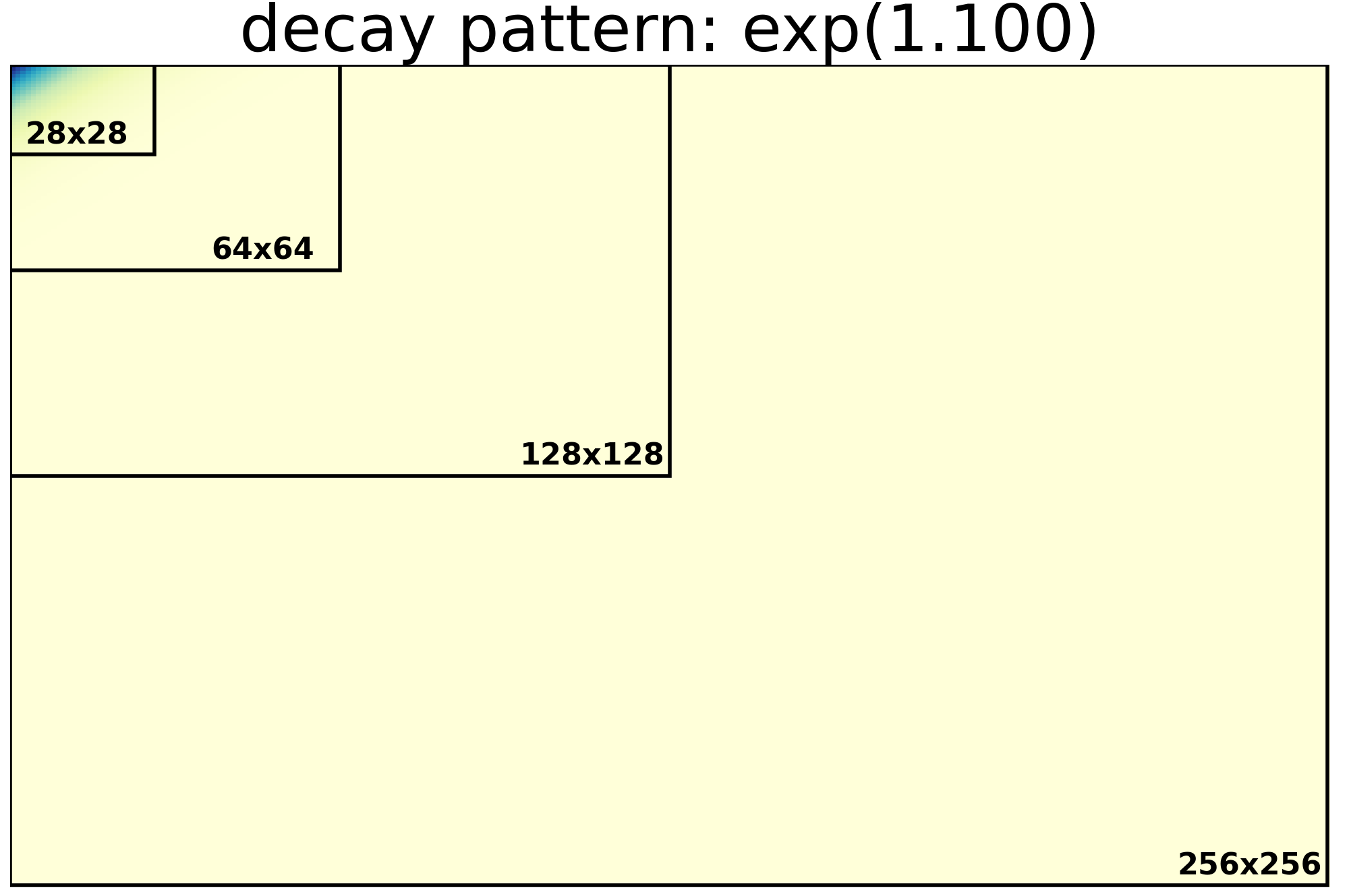}
    \end{subfigure}
    \begin{subfigure}
        \centering
        \includegraphics[height=1.3in]{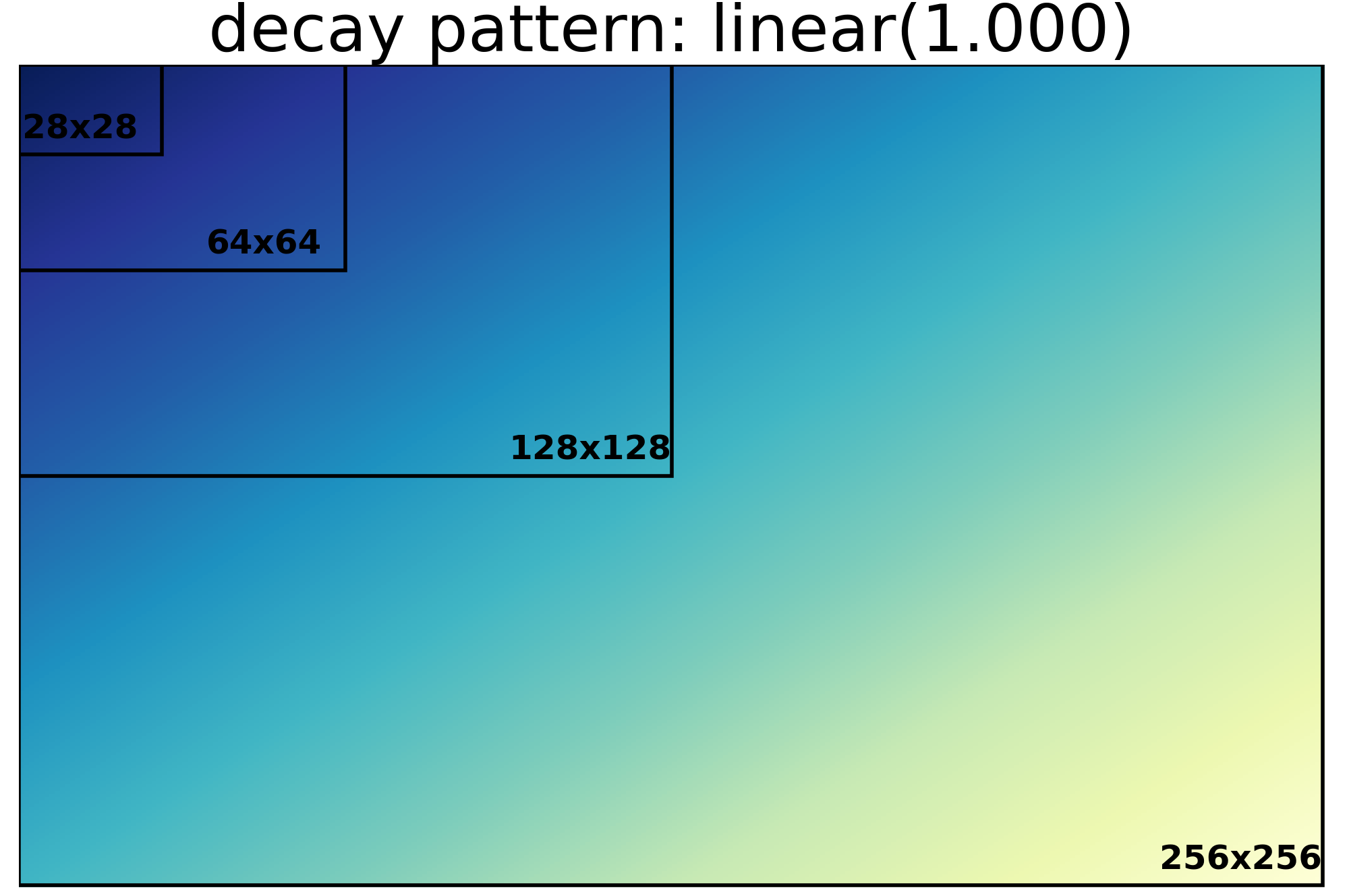}
    \end{subfigure}%
    \begin{subfigure}
        \centering
        \includegraphics[height=1.3in]{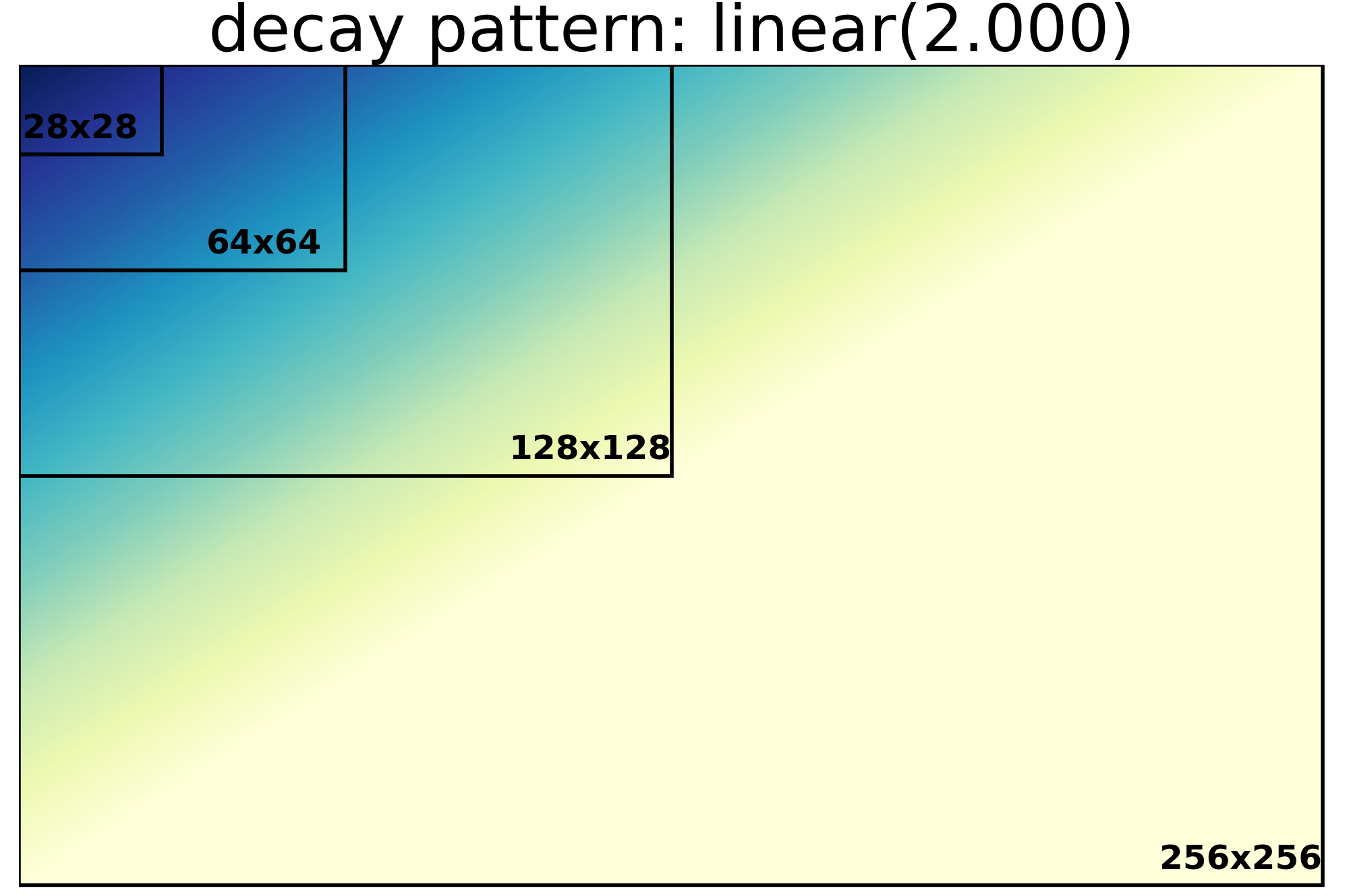}
    \end{subfigure}
    \begin{subfigure}
        \centering
        \includegraphics[height=1.3in]{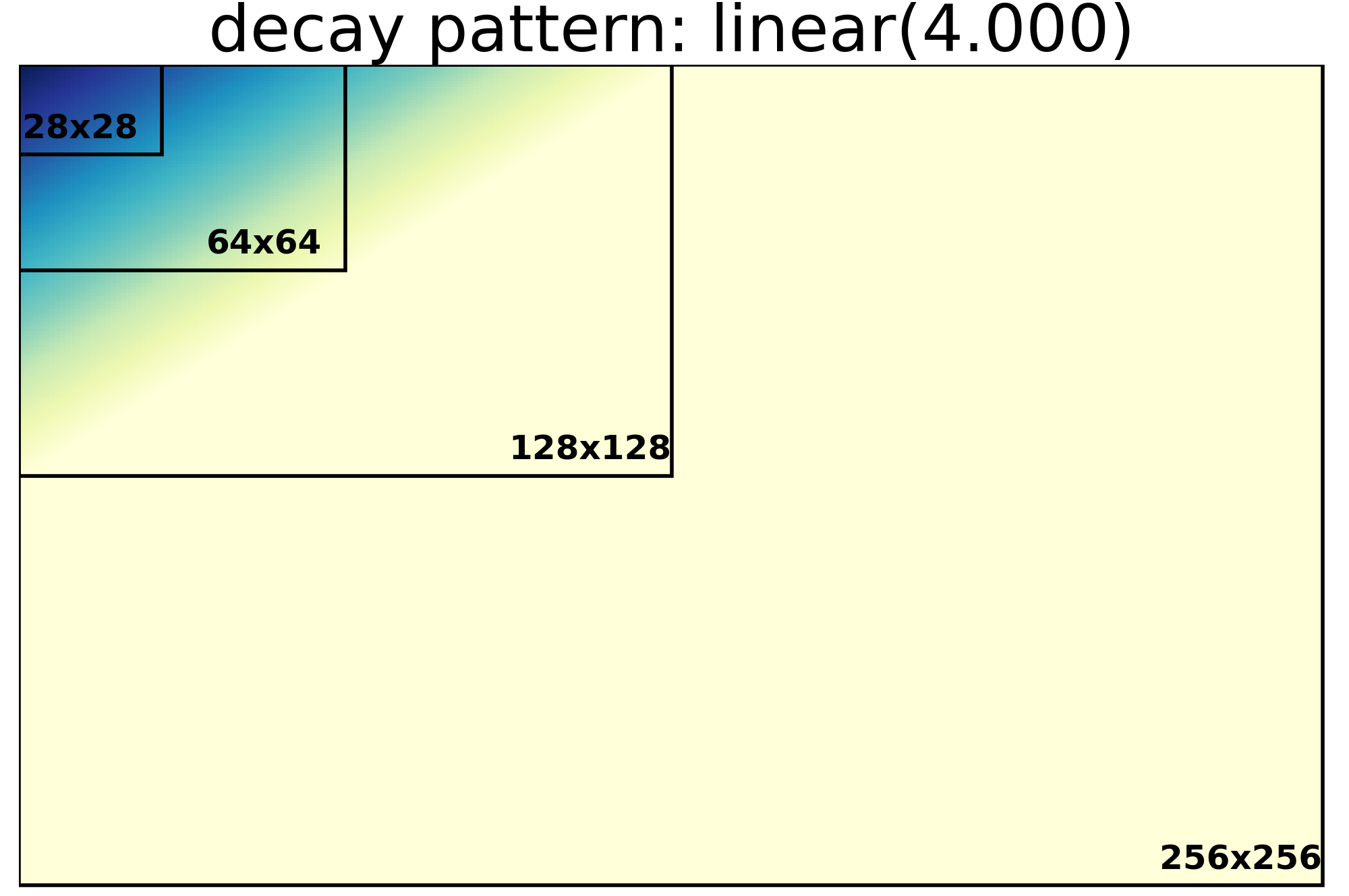}
    \end{subfigure}
    \caption{Simulation setting in Section \ref{sec:sim} with different decay patterns and dimensions/shapes (28x28 - 256x256). \textbf{Upper panel.} Heatmaps for the simulated probabilities with step decay ($\beta=0.1, 0.3, 0.5$). \textbf{Middle panel.} Heatmaps for the simulated probabilities with exponential decay (base=1.01, 1.05, 1.10). \textbf{Lower panel.} Heatmaps for the simulated probabilities with linear decay (slope=1, 2, 4). The performance for the proposed \textit{RankDice} and \textit{thresholding}-based frameworks is summarized in Table \ref{tab:sim}.}
    \label{fig:sim}
\end{figure*}

\begin{table}
    \centering
        \begin{minipage}[t]{0.49\textwidth}
            \centering
            \scalebox{.65}{
                \begin{tabular}[t]{@{}cccccccccccccccccc@{}} \toprule
                 Decay & \phantom{a} & Shape & \phantom{a} &  Threshold (at 0.5) & \phantom{a} & RankDice (our) \\
                 \midrule
                  step(0.1) && 28x28 && 0.049(.000) && 0.274(.001) \\
                  && 64x64 && 0.083(.000) && 0.279(.000) \\
                  && 128x128 && 0.081(.000) && 0.278(.000) \\
                  && 256x256 && 0.089(.000) && 0.279(.000) \\
                    \cmidrule{3-7}
                  step(0.3) && 28x28 && 0.022(.001) && 0.499(.001) \\ 
                  && 64x64 && 0.038(.000) && 0.517(.001) \\
                  && 128x128 && 0.036(.000) && 0.518(.000) \\
                  && 256x256 && 0.040(.000) && 0.518(.000) \\
                    \cmidrule{3-7}
                  step(0.5) && 28x28 && 0.708(.000) && 0.708(.000) \\
                  && 64x64 && 0.707(.000) && 0.707(.000) \\
                  && 128x128 && 0.708(.000) && 0.708(.000) \\
                  && 256x256 && 0.708(.000) && 0.708(.000) \\
                   \midrule
                 exp(1.01) && 28x28 && 0.870(.000) && 0.870(.000)  \\
                 && 64x64 && 0.669(.000) && 0.714(.000)  \\
                 && 128x128 && 0.410(.000) && 0.551(.000)  \\
                 && 256x256 && 0.286(.000) && 0.450(.000)  \\
                    \cmidrule{3-7}
                 exp(1.05) && 28x28 && 0.427(.001) && 0.551(.001)  \\
                 && 64x64 && 0.296(.001) && 0.446(.001)  \\
                 && 128x128 && 0.276(.001) && 0.427(.001)  \\
                 && 256x256 && 0.274(.001) && 0.427(.001)  \\
                   \cmidrule{3-7}
                 exp(1.10) && 28x28 && 0.332(.002) && 0.467(.002)  \\
                 && 64x64 && 0.301(.001) && 0.439(.002)  \\
                 && 128x128 && 0.300(.002) && 0.438(.002)  \\
                 && 256x256 && 0.298 (.002) && 0.436(.002)  \\
                \bottomrule
                \end{tabular}}
        \end{minipage}
        \hfill
        \begin{minipage}[t]{0.49\hsize}\centering
            \scalebox{.65}{
                \begin{tabular}[t]{@{}cccccccccccccccccc@{}} \toprule
                 Decay & \phantom{a} & Shape & \phantom{a} &  Threshold (at 0.5) & \phantom{a} & RankDice (our) \\
                 \midrule
                 linear(1.00) && 28x28 && 0.679(.001) && 0.717(.001)  \\
                 && 64x64 && 0.672(.000) && 0.711(.000)  \\
                 && 128x128 && 0.669(.000) && 0.709(.000)  \\
                 && 256x256 && 0.668(.000) && 0.707(.000)  \\
                \cmidrule{3-7}
                 linear(2.00) && 28x28 && 0.578(.001) && 0.647(.001)  \\
                 && 64x64 && 0.575(.001) && 0.642(.001)  \\
                 && 128x128 && 0.573(.000) && 0.638(.000)  \\
                 && 256x256 && 0.573(.000) && 0.637(.000)  \\
                \cmidrule{3-7}
                 linear(4.00) && 28x28 && 0.588(.003) && 0.663(.002)  \\
                 && 64x64 && 0.580(.001) && 0.646(.001)  \\
                 && 128x128 && 0.575(.001) && 0.642(.001)  \\
                 && 256x256 && 0.574(.000) && 0.639(.000)  \\
                \bottomrule
                \end{tabular}}
            \caption{The averaged Dice metrics and its standard errors (in parentheses) of the proposed \textit{RankDice} framework and the \textit{thresholding}-based (or \textit{argmax}-based) framework in Example 1 (see Fig \ref{fig:sim}) in Section \ref{sec:sim}.}
            \label{tab:sim}
        \end{minipage}
\end{table}

\begin{minipage}{\textwidth}
    \begin{minipage}[b]{0.49\textwidth}
        \scalebox{.85}{
        \begin{tabular}{@{}cccccccccccccccccc@{}} \toprule
        & Framework & \phantom{a} & Threshold & \phantom{a} &  Dice  \\
         \midrule
         & \textit{threshold}-based && 0.1 && 0.481(.005) \\
         &                             && 0.2 && 0.560(.005) \\
         &                             && 0.3 && 0.560(.005) \\
         &                             && 0.4 && 0.560(.005) \\
         &                             && 0.5 && 0.560(.005) \\
         &                             && 0.6 && 0.528(.005) \\
         &                             && 0.7 && 0.471(.005) \\
         &                             && 0.8 && 0.377(.005) \\
         &                             && 0.9 && 0.230(.004) \\
         \cmidrule{1-2}
         & \textit{RankDice}(our)      && --- && \textbf{0.601(0.005)} \\
        \bottomrule
        \end{tabular}}
        \captionof{table}{The averaged Dice metrics and its standard errors (in parentheses) of the proposed \textit{RankDice} framework and the \textit{thresholding}-based (with different thresholds) framework in Example 2 in Section \ref{sec:sim}.}
        \label{tab:sim2}
    \end{minipage}
    \hfill
    \begin{minipage}[b]{0.49\textwidth}
        \centering
        \includegraphics[scale=0.16]{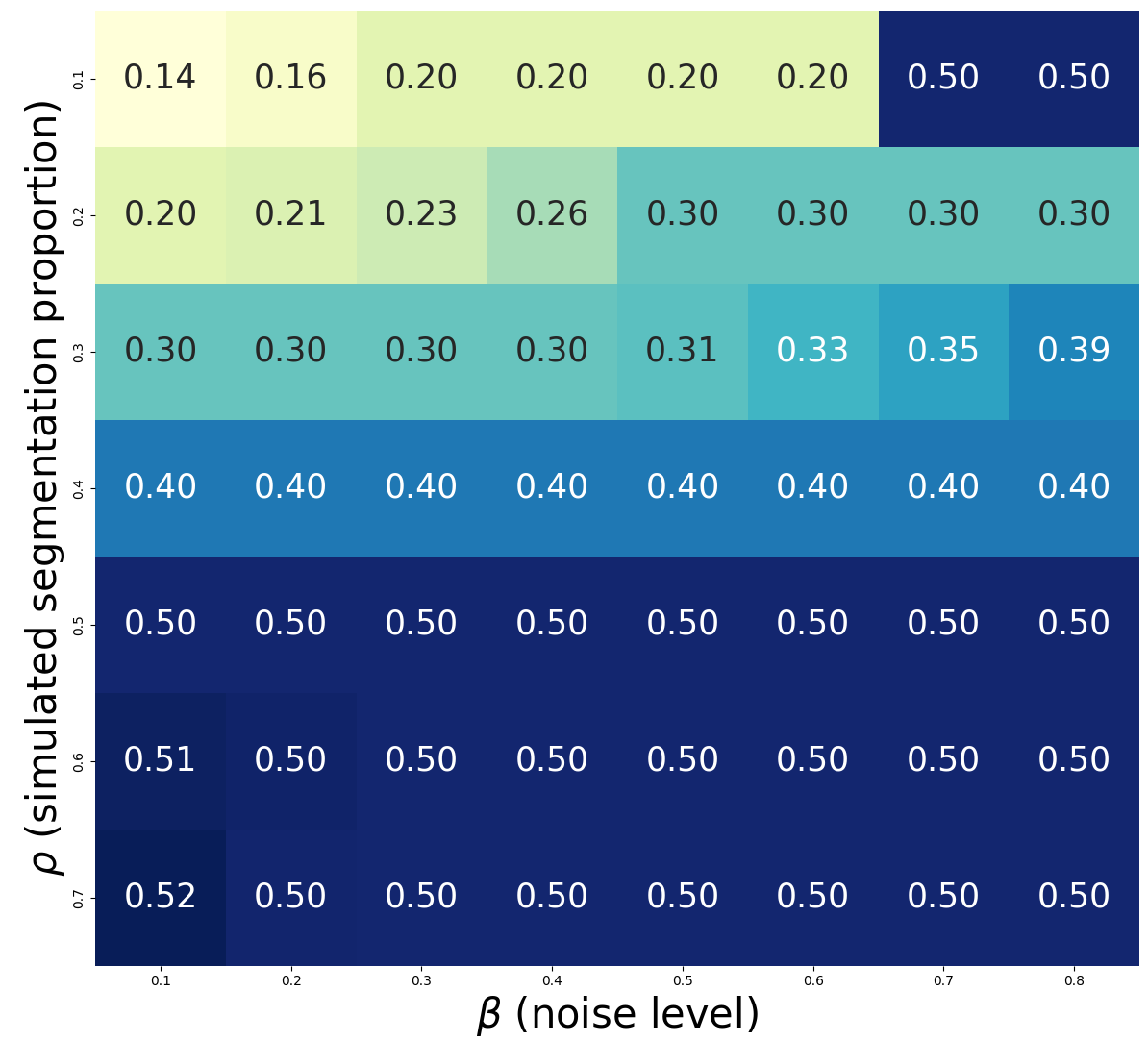}
        \captionof{figure}{The optimal thresholds for different images with various generating parameters $(\beta, \rho)$ in Example 2 in Section \ref{sec:sim}.}
        \label{fig:sim2}
    \end{minipage}
\end{minipage}

\subsection{Implementation details}

The experiment protocol of our numerical sections basically follows a well-developed Github repository \textsc{pytorch-segmentation} \citep{torch_segmentation}. The major difference lies in the empirical evaluation of the Dice and IoU metrics. In our experiments, we report the unbiased evaluations $\widehat{\mDice}_\gamma(\cdot)$ and $\widehat{\mIoU}_\gamma(\cdot)$, yet the biased evaluations $\overbar{\mDice}_\gamma(\cdot)$ and $\overbar{\mIoU}_\gamma(\cdot)$ are usually used for existing literature, see more discussion and definitions in Appendix \ref{sec:E-Dice}.

To justify the effectiveness of our experiment protocol, we also report $\overbar{\mDice}_\gamma(\cdot)$ and $\overbar{\mIoU}_\gamma(\cdot)$ under our setting based on the \textit{argmax}-based framework and compare the performance with the existing benchmarks. Specifically, in our setting, the performance is, \textbf{DeepLab}: (i) CityScapes: $\overbar{\mIoU}$ 63.20\%; $\overbar{\mDice}$ 76.10\%; (ii) VOC: $\overbar{\mIoU}$ 74.40\%; $\overbar{\mDice}$ 84.30\%; \textbf{PSPNet}: (i) CityScapes: $\overbar{\mIoU}$ 65.20\%; $\overbar{\mDice}$ 77.60\%; (ii) VOC \textit{test}: $\overbar{\mIoU}$ 79\% (which is provided by \citet{torch_segmentation} with the same configuration expect \texttt{batch\_size=16}); \textbf{FCN8}: VOC: $\overbar{\mIoU}$ 55.60\%; $\overbar{\mDice}$ 70.40\%.

The experiment protocol, including \texttt{learning\_rate}, \texttt{crop\_size}, \texttt{backbone}, and \texttt{batch\_size}, on the existing networks are summarized as follows.

\medbreak

\noindent \textbf{DeepLab \citep{chen2018encoder}.} The experiment on the Fine-annotated CityScapes dataset is set as follows: \texttt{backbone} is ``Xception-65''. The final $\overbar{\mIoU}$ is 79.14\%; 
The experiment on the PASCAL VOC 2012 dataset is set as follows: \texttt{learning\_rate} is 0.007 with \texttt{poly} schedule; \texttt{crop\_size} is 513x513, \texttt{backbone} is ``resnet101'', \texttt{batch\_size} is 16. The final $\overbar{\mIoU}$ is 78.21\%; 

\noindent \textbf{PSPNet \citep{zhao2017pyramid}.} The experiment on the Fine-annotated CityScapes dataset is set as follows: \texttt{learning\_rate} is 0.01. The final $\overbar{\mIoU}$ is 78.4\%; 
The experiment on the PASCAL VOC 2012 dataset is set as follows: \texttt{learning\_rate} is 0.01, \texttt{batch\_size} is 16. The final $\overbar{\mIoU}$ based on the VOC \textit{test} datset is 82.6\%; 

\noindent \textbf{FCN8 \citep{long2015fully}.}  
The experiment on the PASCAL VOC 2012 dataset is set as follows: \texttt{learning\_rate} is 0.0001, \texttt{batch\_size} is 20, \texttt{backbone} is ``VGG16''. The final $\overbar{\mIoU}$ is 62.2\%; 

Note that the suboptimal performance of our experiment may be caused by a small batch/crop size, different specified backbone models, and other fitting hyperparameters.  The current experiment can be further improved by carefully tuning the hyperparameters, yet it provides a fair numerical comparison of all frameworks (threshold-based, argmax-based, and the proposed \textit{RankDice}).

\section{Technical proofs}

\subsection{Proof of Theorem \ref{thm:Dice_bayes}}
\begin{proof}
It suffices to consider the point-wise maximization:
$$
 \pmb{\delta}^*(\mb{x}) = \argmax_{\mb{v} \in \{0,1\}^d} \ \Dice_\gamma(\mb{v}|\mb{x}), \quad \Dice_\gamma(\mb{v}|\mb{x}) = \mathbb{E} \Big( \frac{2 \mb{Y}^\intercal \mb{v} + \gamma }{ \| \mb{Y} \|_1 + \| \mb{v} \|_1 + \gamma } \Big| \mb{X} = \mb{x} \Big).
$$
Let $\mb{y}_{\sm j} = (y_1, \cdots, y_{j-1}, y_{j+1}, \cdots, y_d)^\intercal$, $I(\mb{v}) = I(\pmb{\delta}(\mb{x})) = \{j: v_{j} = 1\}$ be the index set of segmented features by $\pmb{\delta}(\mb{x})$, and $\| \mb{v} \|_1 = \tau$, we have
$$
\Dice_\gamma(\mb{v}|\mb{x}) = \mathbb{E} \Big( \frac{2 \mb{Y}^\intercal \mb{v} }{ \| \mb{Y} \|_1 + \tau + \gamma } \Big| \mb{X} = \mb{x} \Big) + \mathbb{E} \Big( \frac{ \gamma }{ \| \mb{Y} \|_1 + \tau + \gamma } \Big| \mb{X} = \mb{x} \Big).
$$
Note that the second term is only related to $\tau$, and the first term can be rewritten as:
\begin{align}
\label{eqn:additive}
\mathbb{E} \Big( \frac{2 \mb{Y}^\intercal \mb{v} }{ \| \mb{Y} \|_1 + \tau + \gamma } & \Big| \mb{X} = \mb{x} \Big) = \sum_{\mb{y} \in \{0,1\}^d} \frac{2 \mb{y}^\intercal \mb{v} \mathbb{P}(\mb{Y} = \mb{y} | \mb{x}) }{  \tau + \| \mb{y} \|_1 + \gamma } = \sum_{\mb{y} \in \{0,1\}^d} \sum_{j=1}^d \frac{2 y_j v_j \mathbb{P}(\mb{Y} = \mb{y} | \mb{x}) }{ \tau + \| \mb{y} \|_1 + \gamma } \nonumber \\
& = \sum_{j \in I(\mb{v}) } \sum_{\mb{y} \in \{0,1\}^d}  \frac{2 y_j \mathbb{P}(\mb{Y} = \mb{y} | \mb{x}) }{ \tau + \| \mb{y} \|_1 + \gamma } = \sum_{j \in I(\mb{v}) } \sum_{\substack{\mb{y}_{\sm j} \in \{0,1\}^{d-1} \\  y_j = 1}}  \frac{2 \mathbb{P}(\mb{Y} = \mb{y} | \mb{x}) }{  \tau + \| \mb{y} \|_1 + \gamma } = \sum_{j \in I(\mb{v}) } s_j(\tau).
\end{align}
As indicated in \eqref{eqn:additive}, when $\tau$ is given, $\Dice_\gamma(\mb{v}|\mb{x})$ is an additive function with respect to $j \in I(\mb{v})$. Therefore, maximizing $\Dice_\gamma(\mb{v}|\mb{x})$ suffices to find the indices of top $\tau$ largest $s_j(\tau)$. Toward this end, we consider the differenced score function:
\begin{align}
\label{pf:diff}
D_{jj'}(\tau) & = s_j(\tau) - s_{j'}(\tau) = \sum_{\substack{\mb{y}_{\sm  j} \in \{0,1\}^{d-1} \\  y_j = 1}}  \frac{2 \mathbb{P}(\mb{Y} = \mb{y} | \mb{x}) }{  \tau + \| \mb{y} \|_1 + \gamma } - \sum_{\substack{\mb{y}_{\sm  j'} \in \{0,1\}^{d-1} \\  y_{j'} = 1}}  \frac{2 \mathbb{P}(\mb{Y} = \mb{y} | \mb{x}) }{  \tau + \| \mb{y} \|_1 + \gamma } \nonumber \\
& = \sum_{\substack{\mb{y}_{\sm  jj'} \in \{0,1\}^{d-2} \\  y_j = 1; y_{j'} = 0}}  \frac{2 \mathbb{P}(\mb{Y} = \mb{y} | \mb{x}) }{  \tau + \| \mb{y} \|_1 + \gamma } - \sum_{\substack{\mb{y}_{\sm jj'} \in \{0,1\}^{d-2} \\ y_{j} = 0, y_{j'} = 1}}  \frac{2 \mathbb{P}(\mb{Y} = \mb{y} | \mb{x}) }{  \tau + \| \mb{y} \| + \gamma } \nonumber \\
& = \sum_{\substack{\mb{y}_{\sm jj'} \in \{0,1\}^{d-2} \\  y_j = 1; y_{j'} = 0}}  \frac{2 \prod_{i \neq \{ j, j' \} } \mathbb{P}(Y_i = y_i | \mb{x}) \mathbb{P}(Y_j = 1 | \mb{x}) \mathbb{P}(Y_{j'} = 0 | \mb{x}) }{  \tau + 1 + \| \mb{y}_{\sm jj'} \|_1 + \gamma } \nonumber \\
& \hspace{3cm} - \sum_{\substack{\mb{y}_{\sm jj'} \in \{0,1\}^{d-2} \\  y_j = 0; y_{j'} = 1}}  \frac{2 \prod_{i \neq \{ j, j'\} } \mathbb{P}(Y_i = y_i | \mb{x}) \mathbb{P}(Y_j = 0 | \mb{x}) \mathbb{P}(Y_{j'} = 1 | \mb{x}) }{  \tau + 1 + \| \mb{y}_{\sm jj'} \|_1 + \gamma } \nonumber \\
& = \big( \mathbb{P}(Y_j = 1 | \mb{x}) - \mathbb{P}(Y_{j'} = 1 | \mb{x}) \big) \sum_{\substack{\mb{y}_{\sm jj'} \in \{0,1\}^{d-2} }}  \frac{2 \prod_{i \neq j, j'} \mathbb{P}(Y_i = y_i | \mb{x}) }{  \tau + 1 + \| \mb{y}_{\sm  jj'} \|_1 + \gamma },
\end{align}
where $\mb{y}_{\sm  jj'}$ is $\mb{y}$ removing $y_j$ and $y_{j'}$, the second last equality follows from the conditional independence of $Y_j$ and $Y_{j'}$ given $\mb{X}$ for any pair $j$ and $j'$. According to \eqref{pf:diff}, we have 
$$
D_{jj'}(\tau) \geq 0 \quad \iff \quad \mathbb{P}(Y_j = 1 | \mb{x}) - \mathbb{P}(Y_{j'} = 1 | \mb{x}) \geq 0,
$$
thus sorting $s_j(\tau)$ is equivalent to sorting $\mathbb{P}(Y_j = 1 | \mb{x})$ for any given $\tau$. Let $J_{\tau} = \big \{ j : \sum_{j'=1}^d \mb{1} \big( \mathbb{P}(Y_{j'} = 1 | \mb{x}) \geq \mathbb{P}(Y_{j} = 1 | \mb{x}) \big) \leq \tau \big\}$ be the index set of the $\tau$-largest conditional probabilities, it suffices to solve
\begin{align*}
\tau^* & = \argmax_{\tau = 0, \cdots, d} \sum_{j \in J_{\tau}} \mathbb{E} \big( \frac{2Y_j}{ \|\mb{Y}\|_1 + \tau + \gamma} \big) + \mathbb{E}\big( \frac{\gamma}{ \|\mb{Y}\|_1 + \tau + \gamma} \big) \\ 
& = \argmax_{\tau = 0, \cdots, d} \sum_{j \in J_{\tau}} \sum_{l=0}^{d-1} \frac{2 p_j(\mb{x}) }{\tau + l + \gamma + 1} \mathbb{P} \big( \| \mb{Y}_{\sm j} \|_1 = l | \mb{X} = \mb{x} \big) + \sum_{l=0}^d \frac{\gamma}{\tau + l + \gamma}\mathbb{P} \big( \| \mb{Y} \|_1 = l | \mb{X} = \mb{x} \big),
\end{align*}
where $\| \mb{Y}_{\sm j} \|_1 = \sum_{j' \neq j} Y_{j'}$ is a Poisson-binomial random variable with success probabilities $\mb{p}_{\sm j}(\mb{x})$, since $Y_j$ ($j = 1, \cdots, d$) is an independent Bernoulli random variable given $\mb{X} = \mb{x}$. The desirable result then follows.
\end{proof}

\subsection{Proof of Lemma \ref{lem:shrinkage}}
\begin{proof}
To proceed, we denote $\xi_l(\mb{x}) = \mathbb{P}( \widehat{\Gamma}(\mb{x}) = l )$, and $\xi_{jl}(\mb{x}) = \mathbb{P}( \widehat{\Gamma}_{ \sm j}(\mb{x}) = l )$, and $\overbar{\pi}_\tau = \overbar{\omega}_\tau(\mb{x}) + \overbar{\nu}_\tau(\mb{x})$. For simplicity in presentation, we assume that $\widehat{q}_1(\mb{x}) \geq \cdots \geq \widehat{q}_d(\mb{x})$. Then, for any $\tau' > \tau$, since $\overbar{\nu}_\tau \geq \overbar{\nu}_{\tau'}$, and 
\begin{align}
    \frac{\overbar{\pi}_\tau(\mb{x}) - \overbar{\pi}_{\tau'}(\mb{x})}{ 2(\tau' - \tau) } & \geq \sum_{j=1}^\tau \widehat{q}_j(\mb{x}) \sum_{l=0}^{d-1} \frac{ \xi_{jl}(\mb{x}) }{(\tau + l + 1 + \gamma)(\tau' + l + 1 + \gamma)}  - \frac{1}{\tau' - \tau} \sum_{j=\tau+1}^{\tau'} \widehat{q}_j(\mb{x}) \sum_{ l=0 }^{d-1} \frac{\xi_{jl}(\mb{x})}{ \tau' + l + \gamma + 1}   \nonumber \\
    & \geq \sum_{j=1}^\tau \widehat{q}_j(\mb{x}) \sum_{l=0}^{d-1} \frac{ \xi_{\tau l}(\mb{x}) }{(\tau + l + \gamma + 1)(\tau' + l + \gamma + 1)}  - \widehat{q}_{\tau + 1}(\mb{x}) \sum_{ l=0 }^{d-1} \frac{\xi_{\tau l}(\mb{x})}{ \tau' + l + \gamma + 1} \nonumber \\
    & \geq \widehat{q}_{\tau + 1}(\mb{x}) \sum_{l=0}^{d-1} \frac{ (\tau + \gamma + d) \xi_{\tau l}(\mb{x}) }{(\tau + l + \gamma + 1)(\tau' + l + \gamma + 1)}  - \widehat{q}_{\tau + 1}(\mb{x}) \sum_{ l=0 }^{d-1} \frac{\xi_{\tau l}(\mb{x})}{ \tau' + l + \gamma + 1} \geq 0, \nonumber \\
\end{align}
where the second inequality follows from Lemma \ref{lem:de_score} with $\zeta_l = (\tau + l + \gamma + 1)(\tau' + l + \gamma + 1)$ and $\zeta_l = \tau' + l + \gamma + 1$, and $\widehat{q}_1(\mb{x}) \geq \cdots \geq \widehat{q}_\tau(\mb{x}) \geq \cdots \geq \widehat{q}_{\tau'}(\mb{x})$, and the third inequality follows from the condition that $\sum_{j = 1}^\tau \widehat{q}_{j}(\mb{x}) / \widehat{q}_{\tau+1}(\mb{x}) \geq \tau + \gamma + d$. 
The desirable result then follows. The upper bound provided by Lemma \ref{lem:shrinkage} is illustrated in Figure \ref{fig:shrinkage} based on a random example of Example 1.
\begin{figure}[h]
    \centering
    \includegraphics[scale=0.26]{./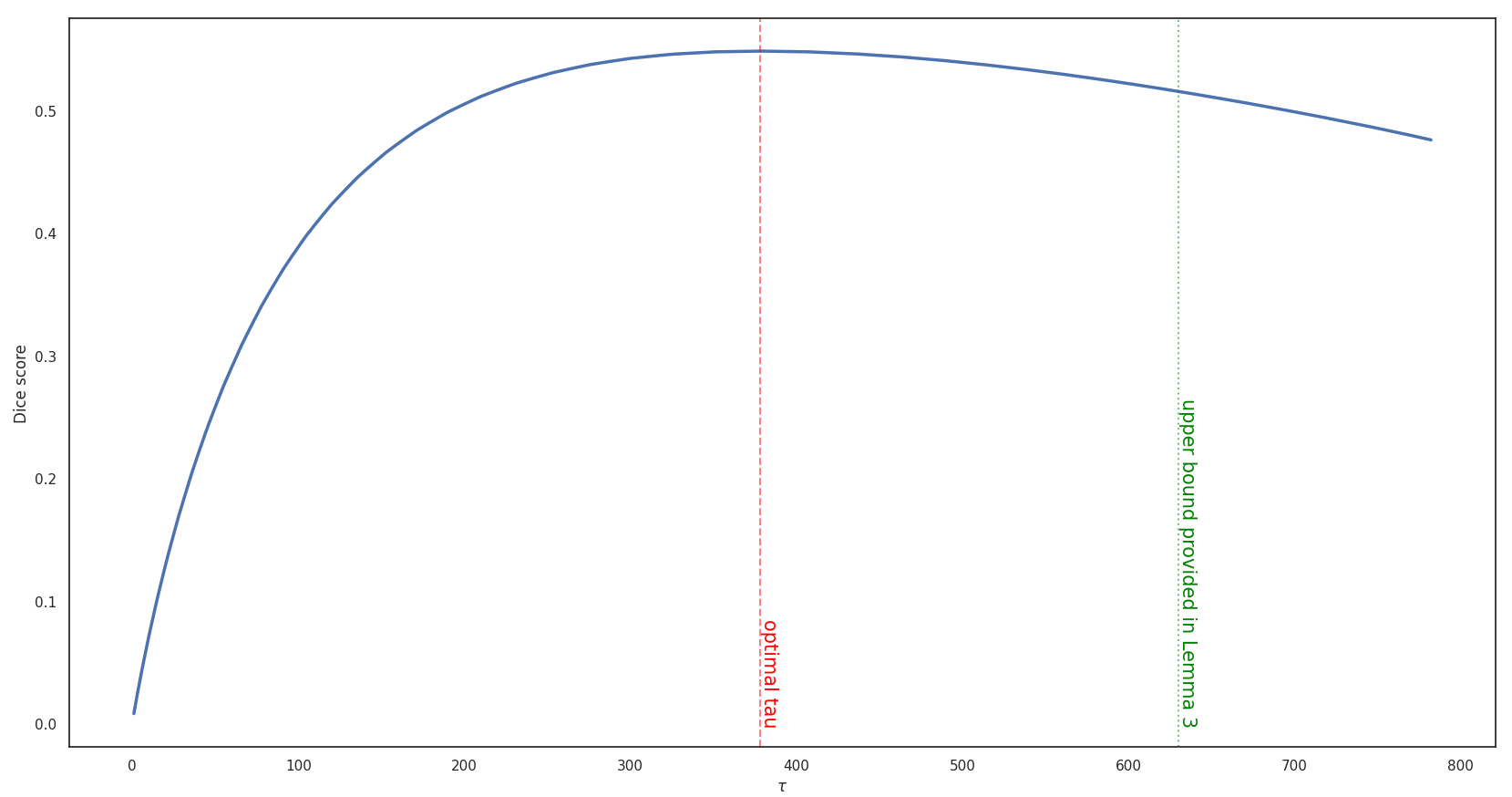}
    \caption{Dice score vs. $\tau$ based on a random example of Example 1. Note that Lemma 3 indicates the optimal $\tau$ (red line) is always obtained before the upper bound (green line). Thus, the searching region of $\tau$ can be shrunk.}
    \label{fig:shrinkage}
\end{figure}
\end{proof}

\subsection{Proof of Lemma \ref{lem:app}}
\begin{proof}
Without loss of generality, assume $\mathcal{L}(\epsilon) \subset \{ 0, \cdots, d-1\}$. Denote $l_L = \floor{\widehat{\sigma}(\mb{x}) \Psi^{-1}(\epsilon) + \widehat{\mu}(\mb{x})} - 1$ and $l_U = \ceil{ - \widehat{\sigma}(\mb{x}) \Psi^{-1}(\epsilon) + \widehat{\mu}(\mb{x})}$, $\xi_l = \mathbb{P}\big( \widehat{\Gamma} (\mb{x}) = l \big)$, $\widetilde{\xi}_l = \widetilde{\mathbb{P}}\big( \widehat{\Gamma} (\mb{x}) = l \big)$, $\xi_{\sm j,l} = \mathbb{P}\big( \widehat{\Gamma}_{\sm j} (\mb{x}) = l \big)$, $\widetilde{\xi}_{\sm j,l} = \widetilde{\mathbb{P}}\big( \widehat{\Gamma}_{ \sm j} (\mb{x}) = l \big)$, we treat $\big| \widetilde{\omega}_\tau - \overbar{\omega}_\tau \big|$ and $\big| \widetilde{\nu}_\tau - \overbar{\nu}_\tau \big|$ separately. First,
\begin{align*}
    \big| \widetilde{\omega}_\tau - \overbar{\omega}_\tau \big| & \leq \sum_{ l > l_U  } \frac{2}{ \tau + l + \gamma + 1} \omega_{\tau, l} + \sum_{0 \leq l < l_L  } \frac{2}{ \tau + l + \gamma + 1} \omega_{\tau, l} +  \sum_{ l_L \leq l < l_U  } \frac{2}{ \tau + l + \gamma + 1} \Big| \omega_{\tau, l} - \widetilde{\omega}_{\tau, l} \Big| \\
    & =: S_1 + S_2 + S_3.
\end{align*}
Next, we turn to bound $S_1$ - $S_3$ separately. 
\begin{align*}
    S_1 & = \sum_{ l > l_U } \frac{2}{ \tau + l + \gamma + 1} \sum_{s=1}^\tau \widehat{q}_{j_s}(\mb{x}) \mathbb{P}\big( \widehat{\Gamma}_{ \sm j_s} (\mb{x}) = l \big) \leq \frac{2}{ \tau + l_U + \gamma + 1}  \sum_{s=1}^\tau \sum_{l > l_U} \mathbb{P}\big( \widehat{\Gamma}_{ \sm j_s} (\mb{x}) = l \big) \\
    & \leq \frac{2}{ \tau + l_U + \gamma + 1}  \sum_{s=1}^\tau  \big( 1 - \mathbb{P}\big( \widehat{\Gamma}_{ \sm j_s} (\mb{x}) \leq l_U \big) \big) \leq \frac{2 \tau}{ \tau + l_U + \gamma + 1} \mathbb{P}\big( \widehat{\Gamma} (\mb{x}) > l_U \big),
\end{align*}
where the last inequality follows from Lemma \ref{lem:pb_sandwich}. For $S_2$, we have
\begin{align*}
    S_2 & \leq \frac{2}{ \tau + \gamma + 1} \sum_{s=1}^\tau \sum_{0 \leq l < l_L } \mathbb{P}\big( \widehat{\Gamma}_{ \sm j_s} (\mb{x}) = l \big) \leq \frac{2 \tau}{ \tau + \gamma + 1} \mathbb{P}\big( \widehat{\Gamma} (\mb{x}) \leq l_L + 1 \big),
\end{align*}
where the last inequality follows from Lemma \ref{lem:pb_sandwich}. Next, according to Theorem 1.1 in \citet{neammanee2005refinement},
$$
\mathbb{P}\big( \widehat{\Gamma} (\mb{x}) \leq l_L + 1 \big) \leq \mathbb{P}\big( Z \leq \Psi^{-1}(\epsilon) \big) + \frac{C_0}{\widehat{\sigma}^2(\mb{x})} = \epsilon + \frac{C_0}{\widehat{\sigma}^2(\mb{x})},
$$
and 
$$
\mathbb{P}\big( \widehat{\Gamma} (\mb{x}) > l_U \big) \leq   \mathbb{P}\big( Z \geq \Psi^{-1}(1-\epsilon) \big) + \frac{C_0}{\widehat{\sigma}^2(\mb{x})} = \epsilon + \frac{C_0}{\widehat{\sigma}^2(\mb{x})},
$$
where $Z$ is a random variable following the refined normal distribution. For $S_3$,
\begin{align*}
    S_3 & \leq \sum_{l=0}^{d-1} \frac{2}{\tau + l + \gamma + 1} \sum_{s=1}^\tau \widehat{\mb q}_{j_s}(\mb{x}) | \widetilde{\xi}_{\sm j, l} - \xi_{\sm j, l} | \leq \sum_{s=1}^\tau \widehat{\mb q}_{j_s}(\mb{x}) \sum_{l=0}^{d-1} \frac{2}{\tau + l + \gamma + 1} \max_{j = 1, \cdots, \tau} | \widetilde{\xi}_{\sm j, l} - \xi_{\sm j, l} | \\
    & \leq \min( \widehat{\mu}(\mb{x}), \tau ) \sum_{l=0}^{d-1} \frac{2}{\tau + l + \gamma + 1} \max_{s = 1, \cdots, \tau} \frac{C_0}{ \widehat{\sigma}^2(\mb{x}) - \widehat{q}_{j_s}(\mb{x})\big(1 - \widehat{q}_{j_s}(\mb{x}) \big) } \\
    & \leq  \frac{2C_0 \min( \widehat{\mu}(\mb{x}), \tau )}{ \widehat{\sigma}^2(\mb{x}) - 1/4 } \sum_{l=1}^{d} \frac{1}{\tau + l + \gamma} = \frac{2C_0 \min( \widehat{\mu}(\mb{x}), \tau )}{ \widehat{\sigma}^2(\mb{x}) - 1/4 } \big( H_{\tau+d+\gamma} - H_{\tau + \gamma} \big) \\
    & \leq \frac{2C_0 \min( \widehat{\mu}(\mb{x}), \tau )}{ \widehat{\sigma}^2(\mb{x}) - 1/4 } \Big( \log\big( 1 + \frac{d}{\tau + \gamma} \big) + 1 - \frac{1}{\tau + \gamma} \Big),
\end{align*}
where $H_K = \sum_{k=1}^K 1/k$ is the harmonic number, and the last inequality follows from the fact that $ \log(K) + 1 / K \leq H_K \leq \log(K) + 1$. Taken together, 
$$
\big| \widetilde{\omega}_\tau - \overbar{\omega}_\tau \big| \leq \frac{4 \tau}{ \tau + \gamma + 1} \big( \epsilon + \frac{C_0}{ \widehat{\sigma}^2(\mb{x})} \big) + \frac{2C_0 \min( \widehat{\mu}(\mb{x}), \tau )}{ \widehat{\sigma}^2(\mb{x}) - 1/4 } \Big( \log\big( 1 + \frac{d}{\tau + \gamma} \big) + \frac{\tau + \gamma - 1}{\tau + \gamma} \Big).
$$
Similarly, for $|\widetilde{\nu}_\tau - \overbar{\nu}_\tau|$, 
\begin{align*}
    |\widetilde{\nu}_\tau - \overbar{\nu}_\tau| \leq \frac{2\gamma}{\tau + \gamma} \epsilon + \frac{\gamma C_0}{ \widehat{\sigma}^2(\mb{x})} \Big( \log\big( 1 + \frac{d}{\tau + \gamma - 1} \big) + 1 - \frac{1}{\tau + \gamma - 1} \Big).
\end{align*}
This completes the proof.
\end{proof}

\subsection{Proof of Lemma \ref{lem:blind_approx}}
\begin{proof}
With the same argument in the proof of Lemma \ref{lem:app}, it suffices to consider
\begin{align*}
    \big| \widetilde{\mathbb P}( \widehat{\Gamma}(\mb{x}) = l ) - \widetilde{\mathbb P}( \widehat{\Gamma}_{\sm j}(\mb{x}) = l ) \big| \leq \sum_{l'=l-1}^{l} \big| \widetilde{\mathbb P}( \widehat{\Gamma}(\mb{x}) \leq l' ) - \widetilde{\mathbb P}( \widehat{\Gamma}_{\sm j}(\mb{x}) \leq l') \big|.
\end{align*}
Denote $I = \widehat{\sigma}(\mb{x})^{-1}( l + 1/2 -\widehat{\mu}(\mb{x}))$ and $I_{\sm j} = \widehat{\sigma}_{\sm j}(\mb{x})^{-1}( l + 1/2 -\widehat{\mu}_{\sm j}(\mb{x}))$, we have
\begin{align*}
    \big| \widetilde{\mathbb P}( \widehat{\Gamma}(\mb{x}) \leq l ) & - \widetilde{\mathbb P}( \widehat{\Gamma}_{\sm j}(\mb{x}) \leq l) \big| = \big| \Psi ( I ) - \Psi_{\sm j}(I_{\sm j}) \big| \\
    & \leq \big| \Phi(I) - \Phi(I_{\sm j}) \big| + \frac{1}{6} \big| \widehat{\eta}(\mb{x}) ( 1 - I^2) \phi(I) - \widehat{\eta}_{\sm j}(\mb{x}) ( 1 - I_{\sm j}^2) \phi(I_{\sm j}) \big| =: T_1 + \frac{1}{6} T_2.
\end{align*}
Next, we turn to treat $T_1$ and $T_2$ separately. Without loss generalization, we assume $|I_{\sm j}| \geq |I|$, then
\begin{align*}
    T_1 & \leq \big| \int_{I}^{I_{\sm j}} \phi(\mb{x}) d\mb{x} \big| \leq | I_{\sm j} - I | \phi(I) \\
    & = \Big(| \widehat{\sigma}^{-1}(\mb{x}) - \widehat{\sigma}_{\sm j}^{-1}(\mb{x}) | \big| l + 1/2 - \widehat{\mu}(\mb{x}) \big| + \widehat{\sigma}_{\sm j}^{-1}(\mb{x}) \big| \widehat{\mu}(\mb{x}) - \widehat{\mu}_{\sm j}(\mb{x}) \big| \Big) \phi(I) \\
    & = \widehat{\sigma}^{-1}_{\sm j}(\mb{x}) ( \widehat{\sigma}(\mb{x}) - \widehat{\sigma}_{\sm j}(\mb{x}) )  |I| \phi(I) + \widehat{\sigma}_{\sm j}^{-1}(\mb{x}) \widehat{q}_{j}(\mb{x}) \phi(I) \\
    & \leq \frac{\widehat{q}_{j}(\mb{x})(1 - \widehat{q}_{j}(\mb{x}))}{ \widehat{\sigma}_{\sm j}(\mb{x}) \big(\widehat{\sigma}_{\sm j}(\mb{x}) + \widehat{\sigma}(\mb{x}) \big) }  |I| \phi(I) + \frac{1}{\sqrt{2\pi} \widehat{\sigma}_{\sm j}(\mb{x})} \\
    & \leq \frac{1}{4 \sqrt{2 \pi}} \Big( \frac{1}{2\sqrt{e}( \widehat{\sigma}^2(\mb{x}) -1/4)} + \frac{4}{\sqrt{\widehat{\sigma}^2(\mb{x})-1/4}}\Big),
\end{align*}
where the last inequality follows the fact that $|u| \phi(u) \leq 1/\sqrt{2e\pi}$ and $0 \leq \phi(u) \leq 1/\sqrt{2\pi}$. For $T_2$, let $g(u) = (1-u^2)\phi(u)$, we have
\begin{align*}
    T_2 & \leq \big| \widehat{\eta}(\mb{x}) ( 1 - I^2) \phi(I) \big| + \big| \widehat{\eta}_{\sm j}(\mb{x}) ( 1 - I_{\sm j}^2) \phi(I_{\sm j}) \big| \leq \frac{1}{\sqrt{2\pi}} \frac{\widehat{m}_3(\mb{x})}{ (\widehat{\sigma}^2(\mb{x})-1/4)^{3/2} },
\end{align*}
where the last inequality follows from the fact that $|g(u)| \leq 1/\sqrt{2\pi}$, and $\widehat{m}_3(\mb{x}) = \sum_{j=1}^d \widehat{p}_j(\mb{x})( 1 - \widehat{p}_j(\mb{x}) ) ( 1 - 2\widehat{p}_j(\mb{x}))$. Taken together, using the same argument in the proof of Lemma \ref{lem:app}, we have
\begin{align*}
    & \big| \widetilde{\omega}^b_\tau - \widetilde{\omega}_\tau \big| \leq \frac{1}{4 \sqrt{2\pi}} \Big( \frac{1/(2\sqrt{e})}{ \widehat{\sigma}^2(\mb{x}) - 1/4 } + \frac{4}{\sqrt{\widehat{\sigma}^2(\mb{x}) - 1/4}} + \frac{4 \widehat{m}_3(\mb{x})}{ (\widehat{\sigma}^2(\mb{x}) - 1/4)^{3/2} } \Big) \big( \log\big( 1 + \frac{d}{\tau + \gamma} \big) + 1 \big).
\end{align*}
Combining Lemma \ref{lem:app}, the desirable result then follows.
\end{proof}

\subsection{Proof of Lemma \ref{lem:overlapping}}
\begin{proof}
Denote $\mathcal{K}_+ = \{ 1 \leq k \leq K: \alpha_k > 0 \}$. We first prove the necessity. Suppose $\pmb{\Delta}^*_k$ is a global minimizer of ${\Dice}_k(\cdot)$, for $k \in \mathcal{K}_+$. Then for any $\pmb{\Delta} = ( \pmb{\Delta}_1, \cdots, \pmb{\Delta}_K )$, we have 
$$
\mDice_\gamma(\pmb{\Delta}^*) = \sum_{k \in \mathcal{K}_+} \alpha_k {\Dice}_{\gamma, k}(\pmb{\Delta}^*_k) \leq \sum_{k \in \mathcal{K}_+} \alpha_k {\Dice}_{\gamma, k}(\pmb{\Delta}_k),
$$
yields that $\pmb{\Delta}^*$ is a global minimizer of $\mDice_\gamma(\cdot)$. We next prove the sufficiency by contradiction. Suppose $\pmb{\Delta}^*$ is a global minimizer of $\mDice_\gamma(\cdot)$, yet there exists $k_0 \in \mathcal{K}_+$ such that $\pmb{\Delta}_{k_0}^*$ is not a minimizer of ${\Dice}_{\gamma, k_0}(\cdot)$. Thus, there exists a segmentation rule $\widetilde{\pmb{\Delta}}$ such that $ {\Dice}_{\gamma, k_0}( \widetilde{\pmb{\Delta}}) <  {\Dice}_{\gamma, k_0}( \pmb{\Delta}_{k_0}^* )$, then let $\widetilde{\pmb{\Delta}} = ( \pmb{\Delta}^*_1, \cdots, \pmb{\Delta}^*_{k_0 - 1}, \widetilde{\pmb{\Delta}}_{k_0}, \pmb{\Delta}^*_{k_0 + 1}, \cdots, \pmb{\Delta}^*_K )$ 
$$
\mDice_\gamma( \pmb{\Delta}^*) = \sum_{k \in \mathcal{K}_+} \alpha_k {\Dice}_{\gamma, k}(\pmb{\Delta}^*_k) > \sum_{k \in \mathcal{K}_+} \alpha_k {\Dice}_{\gamma, k}(\widetilde{\pmb{\Delta}}) = \mDice_\gamma( \widetilde{\pmb{\Delta}}),
$$
which leads to contradiction of that $\pmb{\Delta}^*$ is a global minimizer of $\mDice_\gamma(\cdot)$. The desirable result then follows.
\end{proof}

\subsection{Proof of Lemma \ref{lem:nonoverlap_bayes_rule}}
\begin{proof}
Given a segmentation rule $\pmb{\Delta}(\mb{X}) = ( \pmb{\Delta}_1(\mb{X}), \cdots, \pmb{\Delta}_K(\mb{X}) )$, $\mDice_\gamma(\cdot)$ can be rewritten as 
$$
\mDice_\gamma(\pmb{\Delta}) = \sum_{k=1}^K \alpha_k {\Dice}_{\gamma, k} (\pmb{\Delta}_k(\mb{X})).
$$
Similarly, it is equivalent to consider the point-wise minimization conditional on $\mb{X} = \mb{x}$:
\begin{align*}
 \pmb{\Delta}^*(\mb{x}) & = \argmax_{\mb{V} \in \{0,1\}^{d \times K}} \ \mDice_\gamma(\mb{V}|\mb{x}), \quad \text{s.t.} \sum_{k=1}^K \mb{V}_k = \mb{1}_d, \quad \\
 & \mDice_\gamma(\mb{V}|\mb{x}) = \sum_{k=1}^K \alpha_k {\Dice}_{\gamma,k} (\mb{V}_k | \mb{x}),
\end{align*}
where $\mb{V}_k$ is the $k$-th column of $\mb{V}$.
According to \eqref{eqn:additive}, we have
\begin{align*}
\mDice_\gamma(\mb{V}|\mb{x}) & = \sum_{k=1}^K \alpha_k \sum_{j \in I({\mb{V}_k}) } \sum_{\substack{\mb{y}_{\sm j,k} \in \{0,1\}^{d-1} \\ y_{j,k} = 1}}  \frac{2 \mathbb{P}(\mb{Y}_k = \mb{y}_k | \mb{x}) }{  \tau_k + \| \mb{y}_k \|_1 + \gamma } +  \sum_{k=1}^K \alpha_k \mathbb{E} \Big( \frac{  \gamma }{ \| \mb{Y}_{\cdot k} \|_1 + \tau_k + \gamma } \Big) \\
& = \sum_{k=1}^K \sum_{j=1}^d  R_{jk}(\tau_k) v_{jk} + \sum_{k=1}^K \alpha_k \overbar{\nu}(\tau_k), 
\end{align*}
where $v_{jk} \in \{0,1\}$ is the segmentation indicator of the $j$-th feature under the class-$k$, and $R_{jk}(\cdot)$ is a reward function defined as: 
$$R_{jk}(\tau_k) = \alpha_k  \sum_{\substack{\mb{y}_{\sm j,k} \in \{0,1\}^{d-1} \\ y_{j,k} = 1}}  \frac{2 \mathbb{P}(\mb{Y}_k = \mb{y}_k | \mb{x}) }{  \tau_k + \| \mb{y}_k \|_1 }.$$
Now, suppose the optimal volume function $\pmb{\tau}^*(\mb{x}) = ( \tau_1^*(\mb{x}), \cdots, \tau_{K}^*(\mb{x}))^\intercal$ is given, then $\overbar{\nu}(\tau^*_k)$ becomes a constant, and the point-wise maximization on $\mDice_\gamma$ is equivalent to:
\begin{align}
    \label{eqn:assignment}
    \max_{\mb{V} \in \{0,1\}^{d \times K} } & \quad \sum_{k=1}^K \sum_{j=1}^d R^*_{jk} v_{jk}, \nonumber \\ 
    \text{subject to} &  \quad  \sum_{j=1}^d v_{jk} = \tau^*_k(\mb{x}), \quad \sum_{k=1}^K v_{jk} = 1, \nonumber \\
    \quad & \text{for } k = 1, \cdots, K; \quad \text{for } j = 1, \cdots, d,
\end{align}
where $R^*_{jk} = R_{jk}(\tau^*_k(\mb{x}))$ is the reward under $\tau_k^*(\mb{x})$. Note that \eqref{eqn:assignment} is the formulation for the assignment problem \citep{kuhn1955hungarian}. This completes the proof. 
\end{proof}

\subsection{Proof of Lemma \ref{lem:Dice-calibrated}}
\begin{proof}
For simplicity, we construct a counter example based on $\gamma = 0$ and $d=2$, that is, $\mb{Y} = (Y_1, Y_2)^\intercal$ and $\mb{p}(\mb{x}) = (p_1(\mb{x}), p_2(\mb{x}))^\intercal$. Without loss generality, we assume $p_1(\mb{x}) > p_2(\mb{x})$. 

First, we derive the Bayes rule in Theorem \ref{thm:Dice_bayes} for this case. Note that it suffices to compare the scores for $\tau = 1$ ($\mb{v} = (1, 0)^\intercal$) and $\tau = 2$ $(\mb{v} = (1, 1)^\intercal)$.
\begin{align*}
    \Dice( (1, 0)^\intercal | \mb{x} ) = p_1(\mb{x}) - \frac{1}{3} p_1(\mb{x}) p_2(\mb{x}), \quad \Dice( (1, 1)^\intercal | \mb{x} ) = \frac{2}{3}( p_1(\mb{x}) + p_2(\mb{x}) ) - \frac{1}{3} p_1(\mb{x}) p_2(\mb{x}).
\end{align*}
Therefore, the Bayes rule for Dice optimal segmentation is:
$$
\pmb{\delta}^*(\mb{x}) = (1, 0)^\intercal, \text{ if } \frac{1}{2}p_1(\mb{x}) > p_2(\mb{x}), \quad \pmb{\delta}^*(\mb{x}) = (1, 1)^\intercal, \text{ otherwise}.
$$

\smallbreak

Now, we check the Dice-calibrated for classification-calibrated losses. 
For example, $p_1(\mb{x}) = 0.45$ and $p_2(\mb{x}) = 0.44$, then 
$$\widetilde{\pmb{\delta}}(\mb{x}) = \mb{1}(\mb{p}(\mb{x}) \geq 0.5) = (0, 0)^\intercal \neq (1, 1)^\intercal = \pmb{\delta}^*(\mb{x}),$$ where the first equality follows from the definition of a classification-calibrated loss: the decision rule must agree with the conditional probabilities. Therefore, $\Dice( \widetilde{\pmb{\delta}} ) < \Dice( \pmb{\delta}^* )$ yields that a classification-calibrated loss with thresholding at 0.5 is not Dice-calibrated.

\smallbreak

\end{proof}

\subsection{Proof of Lemma \ref{lem:RankDice-calibrated}}
\begin{proof}
By the definition of a strictly proper loss, see \cite{gneiting2007strictly} and references herein, and the formulation in \eqref{eqn:pop_Rankdice}, we have $\widehat{\mb{q}}(\mb{x}) = \mb{p}(\mb{x}) = \big( \mathbb{P}(Y_1=1 | \mb{X} = \mb{x}), \cdots, \mathbb{P}(Y_d =1 | \mb{X} = \mb{x}) \big)^\intercal$. Then the estimation of $\widehat{\tau}(\mb{x})$ and $\widehat{\pmb{\delta}}(\mb{x})$ agrees with the definition of $\tau^*(\mb{x})$ and $\pmb{\delta}^*(\mb{x})$. This completes the proof.
\end{proof}

\subsection{Proof of Theorem \ref{thm:risk_bound}}
\begin{proof} 
First, we consider point-wise approximation of the Dice metric under probabilities $\mb{p}$ and $\widehat{\mb{q}}$. For any $\pmb{\delta}$, such that $\delta_j(\mb{x}) = 1$ for $j = j_1, \cdots, j_\tau$, and $\delta_j(\mb{x}) = 0$ otherwise. Define
\begin{align*}
\widehat{\Dice}_\gamma({\pmb{\delta}}(\mb{x}) | \mb{X} = \mb{x}) & := \sum_{s=1}^{\tau} \sum_{l=0}^{d-1} \frac{2 \widehat{q}_{j_s}(\mb{x}) \mathbb{P} \big( \widehat{\Gamma}_{\sm j_s}(\mb{x}) = l \big)}{\tau + l + \gamma + 1} + \sum_{l=0}^d \frac{\gamma \mathbb{P}\big( \widehat{\Gamma}(\mb{X}) = l \big)}{\tau + l + \gamma}  \\
& = \sum_{s=1}^{\tau} 2\widehat{q}_{j_s}(\mb{x}) \mathbb{E}\big( \frac{1}{ \tau + \gamma + 1 + \widehat{\Gamma}_{\sm j_s}(\mb{x}) } \big) + \gamma \mathbb{E} \big( \frac{1}{ \tau + \gamma + \widehat{\Gamma}(\mb{x})} \big) \\
\Dice_\gamma({\pmb{\delta}}(\mb{x})| \mb{X} = \mb{x}) & := \sum_{s=1}^{\tau} \sum_{l=0}^{d-1} \frac{2 p_{j_s}(\mb{x}) \mathbb{P} \big( \Gamma_{\sm j_s}(\mb{x}) = l \big)}{\tau + l + \gamma + 1} + \sum_{l=0}^d \frac{\gamma \mathbb{P}\big( {\Gamma}(\mb{X}) = l \big)}{\tau + l + \gamma}  \\
& = \sum_{s=1}^{\tau} 2 p_{j_s}(\mb{x}) \mathbb{E}\big( \frac{1}{ \tau + \gamma + 1 + \Gamma_{\sm j_s}(\mb{x}) } \big) + \gamma \mathbb{E} \big( \frac{1}{ \tau + \gamma + \Gamma(\mb{x})} \big). \\
\end{align*}
Now, we have
\begin{align*}
    & \big| \widehat{\Dice}_\gamma({\pmb{\delta}}(\mb{x}) | \mb{X} = \mb{x}) - \Dice_\gamma({\pmb{\delta}}(\mb{x})| \mb{X} = \mb{x}) \big| \\
    & \leq  \Big| 2\sum_{s=1}^{\tau} \Big( \widehat{q}_{j_s}(\mb{x}) \mathbb{E}\big( \frac{1}{ \tau + \gamma + 1 + \widehat{\Gamma}_{\sm j_s}(\mb{x}) } \big) - p_{j_s}(\mb{x}) \mathbb{E}\big( \frac{1}{ \tau + \gamma + 1 + \Gamma_{\sm j_s}(\mb{x}) } \big) \Big) \Big| \\
    & \qquad + \gamma \Big|  \mathbb{E} \big( \frac{1}{ \tau + \gamma + \widehat{\Gamma}(\mb{x})} \big) - \mathbb{E} \big( \frac{1}{ \tau + \gamma + \Gamma(\mb{x})} \big) \Big| \\
    & \leq  \Big| 2\sum_{s=1}^{\tau} \widehat{q}_{j_s}(\mb{x}) \Big( \mathbb{E}\big( \frac{1}{ \tau + \gamma + 1 + \widehat{\Gamma}_{\sm j_s}(\mb{x}) } \big) - \mathbb{E}\big( \frac{1}{ \tau + \gamma + 1 + \Gamma_{\sm j_s}(\mb{x}) } \big)  \Big) \Big| \\ 
    & \quad + \Big| 2\sum_{s=1}^{\tau} \big( \widehat{q}_{j_s}(\mb{x}) - p_{j_s}(\mb{x})\big) \mathbb{E}\big( \frac{1}{ \tau + \gamma + 1 + \Gamma_{\sm j_s}(\mb{x}) } \big) \Big) \Big| + \gamma \Big|  \mathbb{E} \big( \frac{1}{ \tau + \gamma + \widehat{\Gamma}(\mb{x})} \big) - \mathbb{E} \big( \frac{1}{ \tau + \gamma + \Gamma(\mb{x})} \big) \Big| \\
    & \leq 2 \sum_{s=1}^{\tau} \Big| \frac{ \mathbb{E}\big( \Gamma_{\sm j_s}(\mb{x}) - \widehat{\Gamma}_{\sm j_s}(\mb{x})\big)}{ (\tau + \gamma + 1)^2 } \Big| + 2 \sum_{s=1}^{\tau} \frac{\big| \widehat{q}_{j_s}(\mb{x}) - p_{j_s}(\mb{x})\big|}{\tau + \gamma + 1 } + \gamma \Big| \frac{ \mathbb{E}\big( \Gamma(\mb{x}) - \widehat{\Gamma}(\mb{x})\big)}{ (\tau + \gamma)^2 } \Big| \\
    & \leq 2 \sum_{s=1}^{\tau} \frac{ | \|\widehat{\mb{q}}(\mb{x})\|_1 - \|\mb{p}(\mb{x})\|_1 | + | \widehat{q}_{j_s}(\mb{x}) - p_{j_s}(\mb{x}) | }{ (\tau + \gamma + 1)^2 } + 2 \sum_{s=1}^{\tau} \frac{\big| \widehat{q}_{j_s}(\mb{x}) - p_{j_s}(\mb{x})\big|}{\tau + \gamma + 1 } + \gamma \frac{ | \|\widehat{\mb{q}}(\mb{x})\|_1 - \|\mb{p}(\mb{x})\|_1 | }{ (\tau + \gamma)^2 } \\
    & \leq \big( \frac{3}{{2(1 + \gamma)}} + c_1 \big) { \|\widehat{\mb{q}}(\mb{x}) - \mb{p}(\mb{x}) \|_1 },
\end{align*}
where the second inequality follows from the triangle inequality, $c_1 = 0$ if $\gamma = 0$, $c_1 = 1/\gamma$ if $\gamma > 0$. Therefore, 
\begin{align*}
    \Dice_\gamma(\pmb{\delta}^*) & - \Dice_\gamma(\widehat{\pmb{\delta}}) = \Dice_\gamma(\pmb{\delta}^*) - \widehat{\Dice}_\gamma(\pmb{\delta}^*) + \widehat{\Dice}_\gamma(\pmb{\delta}^*) - \widehat{\Dice}_\gamma(\widehat{\pmb{\delta}}) +  \widehat{\Dice}_\gamma(\widehat{\pmb{\delta}}) - \Dice_\gamma(\widehat{\pmb{\delta}}) \\
    & \leq \Dice_\gamma(\pmb{\delta}^*) - \widehat{\Dice}_\gamma(\pmb{\delta}^*) +  \widehat{\Dice}_\gamma(\widehat{\pmb{\delta}}) - \Dice_\gamma(\widehat{\pmb{\delta}}) \\
    & \leq \mathbb{E}_\mb{X}\big| \widehat{\Dice}_\gamma({\pmb{\delta}^*}(\mb{X}) | \mb{X}) - \Dice_\gamma({\pmb{\delta}^*}(\mb{X})| \mb{X}) \big| + \mathbb{E}_\mb{X}\big| \widehat{\Dice}_\gamma(\widehat{\pmb{\delta}}(\mb{X}) | \mb{X}) - \Dice_\gamma(\widehat{\pmb{\delta}}(\mb{X})| \mb{X}) \big| \\
    & \leq \big( \frac{3}{1 + \gamma} + 2c_1 \big) \mathbb{E} { \|\widehat{\mb{q}}(\mb{X}) - \mb{p}(\mb{X}) \|_1 },
\end{align*}
where the first inequality follows from the definition of $\widehat{\pmb \delta}$ such that $\widehat{\Dice}_\gamma(\pmb{\delta}^*) - \widehat{\Dice}_\gamma(\widehat{\pmb{\delta}}) \leq 0$. This completes the proof.
\end{proof}

\subsection{Proof of Corollary \ref{thm:rate}}
\begin{proof} 
According to the Pinsker's inequality, we have 
    \begin{align*}
        \mathbb{E}_\mb{X} & { \|\widehat{\mb{q}}(\mb{X}) - \mb{p}(\mb{X}) \|_1 } = \sum_{j=1}^d \mathbb{E}_\mb{X} { |\widehat{{q}}_j(\mb{X}) - {p}_j(\mb{X})| } \leq \sum_{j=1}^d \mathbb{E}_\mb{X} \sqrt{ \frac{1}{2} \KL \big( \mathbb{P}(Y_j|\mb{X}), \widehat{\mathbb{P}}(Y_j|\mb{X}) \big)} \\
        & \leq \sqrt{ \frac{d}{2} \sum_{j=1}^d \mathbb{E}_\mb{X} \KL \big( \mathbb{P}(Y_j|\mb{X}), \widehat{\mathbb{P}}(Y_j|\mb{X}) \big)} \\
        & = \sqrt{\frac{d}{2}} \sqrt{ \mathbb{E}\Big( l_{\text{CE}}\big(\mb{Y}, \widehat{\mb{q}}(\mb{X}) \big) \Big) - \mathbb{E}\Big( l_{\text{CE}}\big(\mb{Y}, \mb{p}(\mb{X}) \big) \Big)},
    \end{align*}
where the last inequality follows from the Cauchy-Schwarz inequality and the Jensen's inequality, and $\KL\big( \mathbb{P}(Y_j|\mb{x}), \widehat{\mathbb{P}}(Y_j|\mb{x}) \big) :=  p_j(\mb{x}) \log \big( p_j(\mb{x})/ \widehat{q}_j(\mb{x}) \big) + ( 1 - p_j(\mb{x}) ) \log ( (1 - p_j(\mb{x})) / (1 - \widehat{q}_j(\mb{x})))$ is the KL divergence between $\mathbb{P}(Y_j|\mb{x})$ under $\mb{p}$ and $\widehat{\mathbb{P}}(Y_j|\mb{x})$ under $\widehat{\mb{q}}$. The desirable result then follows by combining \eqref{eqn:risk_bound} in Theorem \ref{thm:risk_bound}. This completes the proof.
\end{proof}

\subsection{Proof of Lemma \ref{lem:IoU_bayes}}
\begin{proof}
Denote $\mb{y}_I = (y_j: j\in I)^\intercal$, $\mb{y}_{\sm I} = (y_j : j\notin I )^\intercal$, and let $I(\mb{v}) = I(\pmb{\delta}(\mb{x})) = \{j: v_{j} = 1\}$ be a segmentation index set, and $\| \mb{v} \|_1 = \tau$. Again, consider the point-wise maximization:
$$
 \pmb{\delta}^*(\mb{x}) = \argmax_{\mb{v} \in \{0,1\}^d} \ {\IoU}_\gamma(\mb{v}|\mb{x}),
$$
where ${\IoU}_\gamma(\mb{v}|\mb{x})$ is defined as
\begin{align*}
    {\IoU}_\gamma(\mb{v}|\mb{x}) 
    &= \mathbb{E} \Big( \frac{ \|\mb{Y}_{I(\mb{v})}\|_1 + \gamma }{ \| \mb{Y}_{\sm I(\mb{v})} \|_1 + \| \mb{v} \|_1 + \gamma } \Big| \mb{X} = \mb{x} \Big)\\
    &= \mathbb{E} \Big( \frac{\|\mb{Y}_{I(\mb{v})}\|_1 }{ \| \mb{Y}_{\sm I(\mb{v})} \|_1 + \tau + \gamma } \Big| \mb{X} = \mb{x} \Big) + \mathbb{E} \Big( \frac{ \gamma }{ \| \mb{Y}_{\sm I(\mb{v})} \|_1 + \tau + \gamma } \Big| \mb{X} = \mb{x} \Big)\\
    &= \Big(\mathbb{E}(\|\mb{Y}_{I(\mb{v})}\|_1 | \mb{X}=\mb{x}) + \gamma\Big) \mathbb{E}\Big( \frac{1 }{ \| \mb{Y}_{\sm I(\mb{v})} \|_1 + \tau + \gamma } \Big| \mb{X} = \mb{x} \Big),
\end{align*}
where the last equality follows from the fact that $\mb{Y}_{I(\mb{v})} \perp \mb{Y}_{\sm I(\mb{v})} \mid \mb{X}$. Fix $\tau = \|\mb{v}\|_1$. The first term is maximized at $\mb{v}^* = (v_1^*,\cdots,v_d^*)$ with
\begin{align*}
    {v}^*_j = 
    \begin{cases}
        1 & \text{if } p_j(\mb{x}) \text{ ranks top }\tau, \\
        0 & \text{otherwise},
    \end{cases}
\end{align*}
and the maximum value is $\sum_{j\in J_{\tau}(\mb{x})} p_j(\mb{x}) + \gamma$.
The second term is 
\begin{align*}
    \mathbb{E}\Big( \frac{1 }{ \| \mb{Y}_{\sm I(\mb{v})} \|_1 + \tau + \gamma } \Big| \mb{X} = \mb{x} \Big)
    = \mathbb{E}\Big( \frac{1 }{ \Gamma_{\sm I(\mb{v})}(\mb{x}) + \tau + \gamma } \Big).
\end{align*}
Given $I(\mb{v})\neq I(\mb{v}^*)$, we have $\Gamma_{\sm I(\mb{v})}(\mb{x}) = \sum_{j=1}^{d-\tau} B_{j}$ and $\Gamma_{\sm I(\mb{v}^*)}(\mb{x}) = \sum_{j=1}^{d-\tau} B^*_{j}$, where $B_{j}$ and $B^*_j$ are independent Bernoulli variables with success probability $p_j\geq p^*_j$, respectively, for $j=1,\cdots,d-\tau$. 
By Theorem 2.2.9 of \cite{belzunce2015introduction}, $\Gamma_{\sm I(\mb{v})}(\mb{x})$ is stochastically greater than $\Gamma_{\sm I(\mb{v}^*)}(\mb{x})$, namely 
\begin{equation*}
    \mathbb{P}(\Gamma_{\sm I(\mb{v})}(\mb{x})\leq \varsigma) \leq 
    \mathbb{P}(\Gamma_{\sm I(\mb{v}^*)}(\mb{x})\leq \varsigma) 
\end{equation*}
for any $\varsigma\in\mathbb{R}$. Thus, 
\begin{align*}
    \mathbb{E}\Big( \frac{1 }{ \Gamma_{\sm I(\mb{v})}(\mb{x}) + \tau + \gamma } \Big) 
    &= \int_{0}^\infty \mathbb{P}\Big( \frac{1 }{ \Gamma_{\sm I(\mb{v})}(\mb{x}) + \tau + \gamma } \geq \varsigma \Big)d\varsigma \\
    &\leq \int_{0}^\infty \mathbb{P}\Big( \frac{1 }{ \Gamma_{\sm I(\mb{v}^*)}(\mb{x}) + \tau + \gamma } \geq \varsigma \Big)d\varsigma = \mathbb{E}\Big( \frac{1}{ \Gamma_{\sm I(\mb{v}^*)}(\mb{x}) + \tau + \gamma } \Big).
\end{align*}
As a result, the second term is also maximized at $\mb{v}^*$. Therefore, $\mb{v}^*$ maximizes $\IoU_{\gamma}(\mb{v}|\mb{x})$ for a fixed $\tau$. The desired result follows.
\end{proof}

\subsection{Proof of Lemma \ref{lem:shrinkage_IoU}}

\begin{proof}
    Without loss of generality, assuming $\widehat{q}_1(\mb{x})\geq \cdots \geq \widehat{q}_{d}(\mb{x})$ are fixed, then for any $\tau'>\tau$, 
    \begin{equation*}
        \begin{split}
            &\varpi_{\tau}(\mb{x}) - \varpi_{\tau'}(\mb{x}) \\
            &= \Big( \sum_{j=1}^{\tau} \widehat{q}_{j}(\mb{x}) + \gamma \Big) \mathbb{E}\Big( \frac{1}{\widehat{\Gamma}_{\sm J_{\tau}(\mb{x})} + \tau + \gamma} \Big) - \Big( \sum_{j=1}^{\tau'} \widehat{q}_{j}(\mb{x}) + \gamma \Big) \mathbb{E}\Big( \frac{1}{\widehat{\Gamma}_{\sm J_{\tau'}(\mb{x})} + \tau' + \gamma} \Big)\\
            &= \Big( \sum_{j=1}^{\tau} \widehat{q}_{j}(\mb{x}) + \gamma \Big) \mathbb{E}\Big( \frac{1}{\widehat{\Gamma}_{\sm J_{\tau'}(\mb{x})} + \sum_{j=\tau+1}^{\tau'} \widehat{B}_j(\mb{x}) + \tau + \gamma} \Big) - \Big( \sum_{j=1}^{\tau'} \widehat{q}_{j}(\mb{x}) + \gamma \Big) \mathbb{E}\Big( \frac{1}{\widehat{\Gamma}_{\sm J_{\tau'}(\mb{x})} + \tau' + \gamma} \Big)\\
            &= \mathbb{E}\Big( \frac{(\sum_{j=1}^{\tau} \widehat{q}_j(\mb{x}) + \gamma )(\widehat{\Gamma}_{\sm J_{\tau'}(\mb{x})} + \tau' + \gamma) - (\sum_{j=1}^{\tau'} \widehat{q}_{j}(\mb{x}) + \gamma)(\widehat{\Gamma}_{\sm J_{\tau'}(\mb{x})} + \sum_{j=\tau+1}^{\tau'} \widehat{B}_j(\mb{x}) + \tau + \gamma) }{(\widehat{\Gamma}_{\sm J_{\tau'}(\mb{x})} + \sum_{j=\tau+1}^{\tau'} \widehat{B}_j(\mb{x}) + \tau + \gamma)(\widehat{\Gamma}_{\sm J_{\tau'}(\mb{x})} + \tau' + \gamma)}\Big) \\
            &= \mathbb{E}\Big( \frac{(\sum_{j=1}^{\tau} \widehat{q}_j(\mb{x}) + \gamma )(\tau' - \tau - \sum_{j=\tau+1}^{\tau'} \widehat{B}_j(\mb{x}) )}{(\widehat{\Gamma}_{\sm J_{\tau'}(\mb{x})} + \sum_{j=\tau+1}^{\tau'} \widehat{B}_j(\mb{x}) + \tau + \gamma)(\widehat{\Gamma}_{\sm J_{\tau'}(\mb{x})} + \tau' + \gamma)}\Big) - \mathbb{E}\Big(\frac{\sum_{j=\tau+1}^{\tau'} \widehat{q}_{j}(\mb{x}) }{\widehat{\Gamma}_{\sm J_{\tau'}(\mb{x})} + \tau' + \gamma}\Big) \\
            &\geq \frac{(\sum_{j=1}^{\tau} \widehat{q}_j(\mb{x}) + \gamma )(\tau' - \tau - \sum_{j=\tau+1}^{\tau'} \widehat{q}_j(\mb{x}) )}{( (d-\tau')\widehat{q}_{\tau+1}(\mb{x}) + \tau' + \gamma)^2} - \frac{\sum_{j=\tau+1}^{\tau'} \widehat{q}_{j}(\mb{x})}{\tau' + \gamma}\\
            &\geq \frac{(\sum_{j=1}^{\tau} \widehat{q}_j(\mb{x}) + \gamma )(\tau' - \tau)(1- \widehat{q}_{\tau+1}(\mb{x}) )}{( (d-\tau')\widehat{q}_{\tau+1}(\mb{x}) + \tau' + \gamma)^2} - \frac{(\tau'-\tau) \widehat{q}_{\tau+1}(\mb{x})}{\tau' + \gamma} \geq 0,
        \end{split}
    \end{equation*}
    whenever 
    \begin{equation*}
        \begin{split}
            \sum_{j=1}^{\tau}\widehat{q}_j(\mb{x}) + \gamma 
            &\geq \frac{\widehat{q}_{\tau+1}(\mb{x})}{1 - \widehat{q}_{\tau+1}(\mb{x})} \max_{\tau'>\tau} \frac{( (d-\tau')\widehat{q}_{\tau+1}(\mb{x}) + \tau' + \gamma)^2}{\tau' + \gamma} \\
            &= \frac{\widehat{q}_{\tau+1}(\mb{x})}{1 - \widehat{q}_{\tau+1}(\mb{x})} \max\Big(d + \gamma, \frac{( (d-\tau)\widehat{q}_{\tau+1}(\mb{x}) + \tau + \gamma)^2}{\tau + \gamma}\Big).
        \end{split}
    \end{equation*}
    This completes the proof.
\end{proof}

\subsection{Proof of Theorem \ref{thm:risk_bound_IoU}}

\begin{proof}
    Similar to the proof of Theorem \ref{thm:risk_bound}, we consider point-wise approximation of IoU metric under $\mb{p}$ and $\widehat{\mb{q}}$. For any $\mb{\delta}$ such that $\delta_j(\mb{x}) = 1$ for $j\in J_{\tau}(\mb{x})$ and $\delta_j(\mb{x}) = 0$ otherwise. Define 
    \begin{align*}
        \widehat{\IoU}_\gamma({\pmb{\delta}}(\mb{x}) | \mb{X} = \mb{x}) & := \Big(\sum_{s=1}^{\tau} \widehat{q}_{j_s}(\mb{x}) + \gamma\Big) \sum_{l=0}^{d-\tau} \frac{ \mathbb{P} \big( \widehat{\Gamma}_{\sm J_{\tau}(\mb{x})}(\mb{x}) = l \big)}{\tau + l + \gamma} \\
        & = \Big(\sum_{s=1}^{\tau} \widehat{q}_{j_s}(\mb{x}) + \gamma\Big) \mathbb{E}\big( \frac{1}{ \tau + \gamma + \widehat{\Gamma}_{\sm J_{\tau}(\mb{x})}(\mb{x}) } \big), \\
        \IoU_\gamma({\pmb{\delta}}(\mb{x})| \mb{X} = \mb{x}) & := \Big(\sum_{s=1}^{\tau} p_j(\mb{x}) + \gamma \Big) \sum_{l=0}^{d-\tau} \frac{\mathbb{P} \big( \Gamma_{\sm J_{\tau}(\mb{x})}(\mb{x}) = l \big)}{\tau + l + \gamma} \\
        & = \Big(\sum_{s=1}^{\tau} p_j(\mb{x}) + \gamma \Big) \mathbb{E}\big( \frac{1}{ \tau + \gamma + \Gamma_{\sm J_{\tau}(\mb{x})}(\mb{x}) } \big). 
        \end{align*}
        We have
        \begin{align*}
            & \big| \widehat{\IoU}_\gamma({\pmb{\delta}}(\mb{x}) | \mb{X} = \mb{x}) - \IoU_\gamma({\pmb{\delta}}(\mb{x})| \mb{X} = \mb{x}) \big| \\
            & \leq  \Big| \sum_{s=1}^{\tau} \Big( \widehat{q}_{j_s}(\mb{x}) \mathbb{E}\big( \frac{1}{ \tau + \gamma + \widehat{\Gamma}_{\sm J_{\tau}(\mb{x})}(\mb{x}) } \big) - p_{j_s}(\mb{x}) \mathbb{E}\big( \frac{1}{ \tau + \gamma + \Gamma_{\sm J_{\tau}(\mb{x})}(\mb{x}) } \big) \Big) \Big| \\
            & \qquad + \gamma \Big|  \mathbb{E} \big( \frac{1}{ \tau + \gamma + \widehat{\Gamma}_{\sm J_{\tau}(\mb{x})}(\mb{x})} \big) - \mathbb{E} \big( \frac{1}{ \tau + \gamma + \Gamma_{\sm J_{\tau}(\mb{x})}(\mb{x})} \big) \Big| \\
            & \leq  \Big| \sum_{s=1}^{\tau} \widehat{q}_{j_s}(\mb{x}) \Big( \mathbb{E}\big( \frac{1}{ \tau + \gamma + 1 + \widehat{\Gamma}_{\sm J_{\tau}(\mb{x})}(\mb{x}) } \big) - \mathbb{E}\big( \frac{1}{ \tau + \gamma + 1 + \Gamma_{\sm J_{\tau}(\mb{x})}(\mb{x}) } \big)  \Big) \Big| \\ 
            & \qquad + \Big| \sum_{s=1}^{\tau} \big( \widehat{q}_{j_s}(\mb{x}) - p_{j_s}(\mb{x})\big) \mathbb{E}\big( \frac{1}{ \tau + \gamma + 1 + \Gamma_{\sm J_{\tau}(\mb{x})}(\mb{x}) } \big) \Big) \Big| \\ 
            & \qquad + \gamma \Big|  \mathbb{E} \big( \frac{1}{ \tau + \gamma + \widehat{\Gamma}_{\sm J_{\tau}(\mb{x})}(\mb{x})} \big) - \mathbb{E} \big( \frac{1}{ \tau + \gamma + \Gamma_{\sm J_{\tau}(\mb{x})}(\mb{x})} \big) \Big| \\
            & \leq \sum_{s=1}^{\tau} \Big| \frac{ \mathbb{E}\big( \Gamma_{\sm J_{\tau}(\mb{x})}(\mb{x}) - \widehat{\Gamma}_{\sm J_{\tau}(\mb{x})}(\mb{x})\big)}{ (\tau + \gamma + 1)^2 } \Big| + \sum_{s=1}^{\tau} \frac{\big| \widehat{q}_{j_s}(\mb{x}) - p_{j_s}(\mb{x})\big|}{\tau + \gamma + 1 } + \gamma \Big| \frac{ \mathbb{E}\big( \Gamma_{\sm J_{\tau}(\mb{x})}(\mb{x}) - \widehat{\Gamma}_{\sm J_{\tau}(\mb{x})}(\mb{x})\big)}{ (\tau + \gamma)^2 } \Big| \\
            & \leq \sum_{s=1}^{\tau} \frac{ \|\widehat{\mb{q}}(\mb{x})- \mb{p}(\mb{x})\|_1}{ (\tau + \gamma + 1)^2 } + \sum_{s=1}^{\tau} \frac{\big| \widehat{q}_{j_s}(\mb{x}) - p_{j_s}(\mb{x})\big|}{\tau + \gamma + 1 } + \gamma \frac{ \|\widehat{\mb{q}}(\mb{x}) - \mb{p}(\mb{x})\|_1 }{ (\tau + \gamma)^2 } \\
            & \leq C_2 { \|\widehat{\mb{q}}(\mb{x}) - \mb{p}(\mb{x}) \|_1 },
        \end{align*}
        where $C_2$ is a constant that may depend on $\gamma$. Then the rest of the proof follows from the same arguments for Theorems \ref{thm:risk_bound} and \ref{thm:rate} with \Dice{} replaced by \IoU{}, and is omitted.
\end{proof}

\section{Auxiliary Lemmas}

\begin{lemma}
    \label{lem:pb_sandwich}
    Suppose $\Gamma = \sum_{j=1}^d B_j$ is a Poisson binomial random variable, and $B_j (j=1,\cdots, d)$ are independent Bernoulli trials with success probabilities $p_1, \cdots, p_d$. Then, for any $j = 1, \cdots, d$, we have
    $$
    \mathbb{P}\big( \Gamma_{\sm j} \leq l - 1 \big) \leq \mathbb{P} \big( \Gamma \leq l \big) \leq \mathbb{P}\big( \Gamma_{\sm j} \leq l \big),
    $$
    where $\Gamma_{\sm j} = \sum_{j' \neq j} B_{j'}$.
\end{lemma}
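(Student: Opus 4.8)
The plan is to exploit the additive decomposition $\Gamma = \Gamma_{\sm j} + B_j$, where $B_j$ is a Bernoulli trial with success probability $p_j$ that is independent of $\Gamma_{\sm j} = \sum_{j' \neq j} B_{j'}$. This independence is the only structural fact needed, and it reduces the two-sided bound to an elementary statement about convex combinations of a monotone sequence.

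First I would condition on the value of $B_j \in \{0,1\}$ and apply the law of total probability together with independence. Since $\{\Gamma \leq l\} = \{\Gamma_{\sm j} + B_j \leq l\}$, conditioning gives
\[
\mathbb{P}(\Gamma \leq l) = p_j\,\mathbb{P}(\Gamma_{\sm j} \leq l - 1) + (1 - p_j)\,\mathbb{P}(\Gamma_{\sm j} \leq l).
\]
This exhibits $\mathbb{P}(\Gamma \leq l)$ as a convex combination (with weights $p_j$ and $1 - p_j$) of the two quantities $\mathbb{P}(\Gamma_{\sm j} \leq l - 1)$ and $\mathbb{P}(\Gamma_{\sm j} \leq l)$.

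Next I would invoke the fact that the cumulative distribution function of $\Gamma_{\sm j}$ is nondecreasing, so that $\mathbb{P}(\Gamma_{\sm j} \leq l - 1) \leq \mathbb{P}(\Gamma_{\sm j} \leq l)$. Any convex combination of two ordered numbers lies between them; hence
\[
\mathbb{P}(\Gamma_{\sm j} \leq l - 1) \leq \mathbb{P}(\Gamma \leq l) \leq \mathbb{P}(\Gamma_{\sm j} \leq l),
\]
which is precisely the claimed sandwich. The boundary cases ($l < 0$, where all probabilities vanish, and $l \geq d$, where they equal one) are handled trivially by the same identity, since the CDF values are constant there.

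There is no genuine obstacle in this argument: the entire content is the conditioning identity and the monotonicity of the CDF. If anything, the only point requiring a moment's care is bookkeeping the index shift $l \mapsto l-1$ coming from the event $\{B_j = 1\}$, and confirming that the weights $p_j,\,1-p_j$ are nonnegative and sum to one so that the convex-combination bound applies verbatim.
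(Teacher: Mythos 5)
Your proof is correct and is essentially identical to the paper's: both condition on $B_j$ via the decomposition $\Gamma = \Gamma_{\sm j} + B_j$ to obtain $\mathbb{P}(\Gamma \leq l) = p_j\,\mathbb{P}(\Gamma_{\sm j} \leq l-1) + (1-p_j)\,\mathbb{P}(\Gamma_{\sm j} \leq l)$ and conclude from the monotonicity of the CDF that this convex combination lies between its two endpoints. No gaps.
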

\begin{proof}
Note that $\Gamma = \Gamma_{\sm j} + B_j$, then
\begin{align*}
    \mathbb{P}( \Gamma \leq l ) & = \mathbb{P}( \Gamma_{\sm j} + B_j \leq l ) = \mathbb{P}( \Gamma_{\sm j} \leq l \mid B_j = 0 ) (1 - p_j) + \mathbb{P}( \Gamma_{\sm j} \leq l - 1 \mid B_j = 1 ) p_j \\
    & = \mathbb{P}( \Gamma_{\sm j} \leq l ) (1 - p_j) + \mathbb{P}( \Gamma_{\sm j} \leq l - 1 ) p_j,
\end{align*}
where the last equality follows from the fact that $B_j \perp B_{j'}$ for $j \neq j'$. The desirable result then follows since $\mathbb{P}( \Gamma_{\sm j} \leq l ) \geq \mathbb{P}( \Gamma_{\sm j} \leq l -1 )$.
\end{proof}

\begin{lemma}
    \label{lem:de_score}
    Suppose $\Gamma = \sum_{j=1}^d B_j$ is a Poisson binomial random variable, and $B_j (j=1,\cdots, d)$ are independent Bernoulli trials with success probabilities $p_1 \geq p_2 \geq \cdots \geq p_d$. Then, for an arbitrary positive non-decreasing sequence $\zeta_l$ $(l = 0, \cdots, d - 1)$,
    $$
    Q_j = \sum_{l=0}^{d - 1} \frac{1}{ \zeta_l } \mathbb{P}( \Gamma_{\sm j} = l )
    $$
    is a non-increasing function with respect to $j = 1, \cdots, d$.
\end{lemma}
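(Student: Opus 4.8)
The plan is to reduce the monotonicity of $Q_j$ to a one-step comparison between $Q_j$ and $Q_{j'}$ for an arbitrary pair $j < j'$ (so that $p_j \geq p_{j'}$), and to establish $Q_j \geq Q_{j'}$ by conditioning on the contribution of the Bernoulli trials other than the $j$-th and $j'$-th. First I would observe that, writing $g(l) := 1/\zeta_l$, the hypothesis that $\zeta_l$ is positive and non-decreasing means $g$ is positive and non-increasing on $\{0, \ldots, d-1\}$, and that $Q_j = \mathbb{E}\, g(\Gamma_{\sm j})$.

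Next, fix $1 \le j < j' \le d$ and set $\Gamma_{\sm j, \sm j'} = \sum_{k \neq j, j'} B_k$, a Poisson-binomial variable over the remaining $d-2$ indices that is independent of $(B_j, B_{j'})$. By independence of the Bernoulli trials, $\Gamma_{\sm j} \stackrel{d}{=} \Gamma_{\sm j, \sm j'} + B_{j'}$ and $\Gamma_{\sm j'} \stackrel{d}{=} \Gamma_{\sm j, \sm j'} + B_j$; note that both $m + B_{j'}$ and $m + B_j$ lie in $\{0, \ldots, d-1\}$ whenever $m$ is a value of $\Gamma_{\sm j, \sm j'} \in \{0, \ldots, d-2\}$, so $g$ is always evaluated on its domain.

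Then I would compute the difference by conditioning on $\Gamma_{\sm j, \sm j'} = m$. Using $\mathbb{E}\, g(m + B_{j'}) = (1 - p_{j'}) g(m) + p_{j'} g(m+1)$ together with the analogous identity for $B_j$, the per-$m$ contribution to $Q_j - Q_{j'}$ equals $(p_j - p_{j'})\,(g(m) - g(m+1))$. Both factors are non-negative: $p_j \geq p_{j'}$ by the ordering of the success probabilities, and $g(m) \geq g(m+1)$ since $g$ is non-increasing. Averaging over $m$ against the non-negative weights $\mathbb{P}(\Gamma_{\sm j, \sm j'} = m)$ gives $Q_j - Q_{j'} \geq 0$, and since $j < j'$ was arbitrary this proves $Q_j$ is non-increasing in $j$.

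There is no serious obstacle here; the only care needed is the bookkeeping of signs and of the evaluation domain of $g$. Equivalently, one may phrase the argument through stochastic dominance: the monotone coupling $B_j = \mathbb{1}(U \le p_j)$, $B_{j'} = \mathbb{1}(U \le p_{j'})$ with a common uniform $U$ forces $B_{j'} \le B_j$ almost surely, hence $\Gamma_{\sm j} \le \Gamma_{\sm j'}$ almost surely and $g(\Gamma_{\sm j}) \ge g(\Gamma_{\sm j'})$ almost surely, yielding the same conclusion; this reformulation is what I would fall back on if the direct conditional computation were to become cumbersome.
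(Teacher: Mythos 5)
Your proof is correct and takes essentially the same route as the paper: both fix a pair $j \neq j'$, decompose $\Gamma_{\sm j} = \Gamma_{\sm j,\sm j'} + B_{j'}$ and $\Gamma_{\sm j'} = \Gamma_{\sm j,\sm j'} + B_j$, and exhibit $Q_j - Q_{j'}$ as $(p_j - p_{j'})$ times a non-negative sum — your per-$m$ conditioning is just the paper's Abel summation $\sum_l \bigl( \zeta_l^{-1} - \zeta_{l+1}^{-1} \bigr) \mathbb{P}(\Gamma_{\sm j,\sm j'} = l)$ carried out in the other order, and in fact handles the boundary indices more cleanly. Your closing monotone-coupling remark is a valid (and arguably slicker) alternative the paper does not use, but the main argument is the same.
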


\begin{proof}
Note that for any $j' \neq j$, $\Gamma_{\sm j} = \sum_{i \neq j} B_{i} = \sum_{i \neq j, j'} B_i + B_{j'} =: \Gamma_{\sm jj'} + B_{j'}$. Denote $\xi_{\sm jj', l} = \mathbb{P}\big( \Gamma_{\sm jj'} = l \big)$, we have
\begin{align*}
    \mathbb{P}\big( \Gamma_{\sm j} = l \big) =  (1 - p_{j'}) \xi_{\sm jj', l} + p_{j'} \xi_{\sm jj', l-1}.
\end{align*}
Then, 
\begin{align*}
    Q_j - Q_{j'} & = \sum_{l = 0}^d \frac{1}{ \zeta_l } \big(  (1 - p_{j'})\xi_{\sm jj', l} + p_{j'} \xi_{\sm jj', l-1} - (1 - p_{j})\xi_{\sm jj', l} - p_{j} \xi_{\sm jj', l-1} \big) \\
    & = ( p_j - p_{j'} ) \Big( \sum_{l = 0}^d \frac{1}{\zeta_l} \xi_{\sm jj', l} - \sum_{l = 0}^d \frac{1}{\zeta_l} \xi_{\sm jj', l-1} \Big) = ( p_j - p_{j'} ) \Big( \sum_{l = 0}^d \big( \frac{1}{\zeta_l} - \frac{1}{\zeta_{l+1}} ) \xi_{\sm jj', l} \Big),
\end{align*}
where the first equality follows from the fact that $\mathbb{P}( \Gamma_{\sm j} = d ) = 0$, and the last equality follows from the fact that $\xi_{\sm jj', l} = 0$ for $l < 0$ or $l=d$. Hence, we have $Q_j - Q_{j'}$ has the same sign with $p_j - p_{j'}$, and the desirable result then follows. 
\end{proof}

\begin{lemma}
    \label{lem:pb_bounds}
    Let $\Gamma$ be a Poisson-binomial random variable with the success probability $(p_j)_{j=1, \cdots, d}$, then for any $\tau \geq 1$, we have
    \begin{equation*}
         (\sum_{j=1}^d p_j + \tau)^{-1} \leq \mathbb{E}\big( \frac{1}{ \Gamma + \tau} \big) \leq (\frac{d+1}{d} \sum_{j=1}^d p_j + \tau - 1)^{-1}.
    \end{equation*}
\end{lemma}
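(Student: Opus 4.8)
The plan is to treat the two inequalities separately; the lower bound is immediate, and essentially all the work is in the upper bound.

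For the lower bound, I would simply invoke Jensen's inequality. The map $x \mapsto 1/(x+\tau)$ is convex on $[0,\infty)$ (its second derivative $2/(x+\tau)^3$ is positive, using $\tau\geq 1$), so $\mathbb{E}\big(1/(\Gamma+\tau)\big) \geq 1/(\mathbb{E}(\Gamma)+\tau) = 1/(\sum_{j} p_j + \tau)$, since $\mathbb{E}(\Gamma) = \sum_{j=1}^d p_j$ for a Poisson-binomial variable. That disposes of the left inequality.

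For the upper bound I would start from the elementary identity $\frac{1}{\Gamma+\tau} = \int_0^1 t^{\Gamma+\tau-1}\,dt$ (valid as $\Gamma+\tau>0$). Taking expectations, swapping sum and integral by nonnegativity, and using that the probability generating function factorizes, $\mathbb{E}(t^{\Gamma}) = \prod_{j=1}^d(1-p_j+p_j t)$, gives $\mathbb{E}\big(1/(\Gamma+\tau)\big) = \int_0^1 t^{\tau-1}\prod_{j}(1-p_j+p_jt)\,dt$, and after the substitution $s=1-t$ this is $\int_0^1 (1-s)^{\tau-1}\prod_j(1-p_js)\,ds$. The crucial step, which is what produces the factor $\tfrac{d+1}{d}$, is to dominate the integrand by a single power. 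Applying concavity of $\log$ (weighted Jensen) with weight $\tau-1$ on $\log(1-s)$ and unit weights on each $\log(1-p_js)$, of total weight $d+\tau-1$, I expect the bound
\[
(1-s)^{\tau-1}\textstyle\prod_j(1-p_js) \leq (1-cs)^{d+\tau-1}, \qquad c = \frac{\mu+\tau-1}{d+\tau-1}, \quad \mu := \sum_{j} p_j,
\]
where one checks $c\in[0,1]$ from $\mu\leq d$. Integrating the right-hand side and discarding the nonnegative tail $(1-c)^{d+\tau}$ then yields $\mathbb{E}\big(1/(\Gamma+\tau)\big) \leq \frac{1}{c(d+\tau)} = \frac{d+\tau-1}{(\mu+\tau-1)(d+\tau)}$.

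The last step is a purely algebraic comparison of this bound with the target. Inverting both sides, it suffices to show $\frac{(\mu+\tau-1)(d+\tau)}{d+\tau-1} \geq \frac{d+1}{d}\mu+\tau-1$, and a direct computation gives the clean difference
\[
\frac{(\mu+\tau-1)(d+\tau)}{d+\tau-1} - \Big(\tfrac{d+1}{d}\mu+\tau-1\Big) = \frac{(\tau-1)(d-\mu)}{d\,(d+\tau-1)} \geq 0,
\]
which holds since $\tau\geq 1$ and $\mu\leq d$ (with equality exactly when $\tau=1$ or $\mu=d$). The degenerate case $c=0$ (i.e. $\mu=0,\tau=1$, so $\Gamma\equiv 0$) makes the target denominator vanish and the bound infinite, hence trivially true. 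I expect the main obstacle to be identifying the correct single-power envelope: the crude exponential estimate $\prod_j(1-p_js)\leq e^{-\mu s}$ only delivers the weaker constant $\tfrac{1}{\mu+\tau-1}$ and misses the $\tfrac{d+1}{d}$ sharpening, so the polynomial bound via the concavity/AM-GM argument is essential, and the care lies in verifying that its constant $c$ threads through the elementary integral and the closing algebra to reproduce exactly the stated form.
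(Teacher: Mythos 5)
Your proof is correct, and its upper-bound argument takes a genuinely different route from the paper's. Both proofs start from the same probability-generating-function representation $\mathbb{E}\big( 1/(\Gamma + \tau) \big) = \int_0^1 t^{\tau-1} \prod_{j=1}^d (1 - p_j + p_j t)\, dt$ (the paper cites the corollary of \cite{chao1972negative} for it; you derive it directly from $1/(\Gamma+\tau) = \int_0^1 t^{\Gamma+\tau-1}dt$ plus Tonelli, which is self-contained). From there the paper applies AM--GM only to the $d$ Bernoulli factors, reducing to $\mathbb{E}\big(1/(\Lambda+\tau)\big)$ for a binomial variable $\Lambda$ with parameter $\bar{p} = \mu/d$, and then invokes inequality (25) of \cite{wooff1985bounds} as a black box to obtain the constant $\frac{d+1}{d}$. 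You instead run a weighted AM--GM that also absorbs the factor $(1-s)^{\tau-1}$ (after $s = 1-t$), collapsing the integrand to the single power $(1-cs)^{d+\tau-1}$ with $c = (\mu+\tau-1)/(d+\tau-1)$ — legitimate precisely because $\tau \geq 1$ makes the weight $\tau - 1$ nonnegative — then integrate explicitly and close with the algebraic identity
$$
\frac{(\mu+\tau-1)(d+\tau)}{d+\tau-1} - \Big( \frac{d+1}{d}\mu + \tau - 1 \Big) = \frac{(\tau-1)(d-\mu)}{d(d+\tau-1)} \geq 0,
$$
which I have checked. This makes your proof entirely self-contained (no external bound on reciprocal binomial moments is needed) and in fact produces the slightly sharper intermediate bound $\frac{d+\tau-1}{(\mu+\tau-1)(d+\tau)}$, which coincides with the stated bound at $\tau = 1$. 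Your treatment of the degenerate case $\mu = 0$, $\tau = 1$ is correct, and your Jensen argument supplies the lower bound, which the paper's displayed proof leaves implicit (its chain of inequalities only addresses the upper bound). One cosmetic remark: convexity of $x \mapsto 1/(x+\tau)$ on the support requires only $\tau > 0$, not $\tau \geq 1$; the hypothesis $\tau \geq 1$ is needed solely for the weighted AM--GM step.
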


\begin{proof}
According to the corollary in \cite{chao1972negative}, we have
\begin{align}
    \mathbb{E}\big( \frac{1}{ \Gamma + \tau } \big) & = \int_0^1 t^{\tau - 1} P_\Gamma(t) dt = \int_0^1 t^{\tau - 1} \big( \prod_{j=1}^d ( 1 - p_j + p_j t) \big) dt \leq \int_0^1 t^{\tau - 1} ( 1 - \bar{p} + \bar{p} t)^d dt \nonumber \\
    & = \mathbb{E}\big( \frac{1}{ \Lambda + \tau } \big) \leq (\frac{d+1}{d} \sum_{j=1}^d p_j + \tau - 1)^{-1}, \nonumber
\end{align}
where $\bar{p} = d^{-1} \sum_{j=1}^d p_j$, the first inequality follows from the inequality of arithmetic and geometric means, the last equality follows from Section 3.1 of \cite{chao1972negative}, and the last inequality follows from (25) in \cite{wooff1985bounds}. This completes the proof.
\end{proof}

\newpage

\vskip 0.2in
\bibliography{detect}

\end{document}